\newif\ifshowcomment
\NewDocumentCommand{\codeword}{v}{%
\texttt{\textcolor{blue}{#1}}%
}
\newcommand{\veryshortarrow}[1][3pt]{\mathrel{%
   \hbox{\rule[\dimexpr\fontdimen22\textfont2-.2pt\relax]{#1}{.4pt}}%
   \mkern-4mu\hbox{\usefont{U}{lasy}{m}{n}\symbol{41}}}}
\newtheorem{theorem}{Theorem}
\newtheorem{lemma}{Lemma}
\newtheorem{definition}{Definition}
\newtheorem{proposition}{Proposition}
\DeclareMathOperator*{\argsup}{arg\,sup}
\DeclareMathOperator*{\arginf}{arg\,inf}
\newcommand*{\defeq}{\stackrel{\normalfont \text{def}}{=}}
\newcommand{\bbP}{\mathbb{P}} 
\newcommand{\bbQ}{\mathbb{Q}} 
\newcommand{\bbR}{\mathbb{R}}
\newcommand{\cL}{\mathcal{L}}
\newcommand{\cX}{\mathcal{X}} 
\newcommand{\cY}{\mathcal{Y}}
\newcommand{\cP}{\mathcal{P}}
\newcommand{\cC}{\mathcal{C}}
\newcommand{\cR}{\mathcal{R}}
\newcommand{\cZ}{\mathcal{Z}}
\newcommand{\cW}{\mathcal{W}}
\newcommand{\cF}{\mathcal{F}}
\newcommand{\cE}{\mathcal{E}}
\newcommand{\cM}{\mathcal{M}}
\title{Neural Optimal Transport with \\ General Cost Functionals}
\author{%
  Arip Asadulaev\thanks{Equal contribution.}$^{* 1,3}$ Alexander Korotin$^{* 2,1}$ Vage Egiazarian$^{4,5}$ \textbf{Petr Mokrov$^{2}$ Evgeny Burnaev$^{2,1}$}\\
  $^{1}$Artificial Intelligence Research Institute\quad
  $^{2}$Skolkovo Institute of Science and Technology\\
  $^{3}$Moscow Institute of Physics and Technology\quad
  $^{4}$HSE University\quad
  $^{5}$Yandex\\
  \texttt{aripasadulaev@airi.net,a.korotin@skoltech.ru} \\
}
\begin{document}
\maketitle

\everypar{\looseness=-1}
\vspace{-6mm}
\begin{abstract}
\vspace{-2mm} We introduce a novel neural network-based algorithm to compute optimal transport (OT) plans for general cost functionals. In contrast to common Euclidean costs, i.e., $\ell^1$ or $\ell^2$, such functionals provide more flexibility and allow using auxiliary information, such as class labels, to construct the required transport map. Existing methods for general cost functionals are discrete and do not provide an out-of-sample estimation. We address the challenge of designing a continuous OT approach for general cost functionals in high-dimensional spaces, such as images. We construct two example functionals: one to map distributions while preserving the class-wise structure and the other one to preserve the given data pairs. 
Additionally, we provide the theoretical error analysis for our recovered transport plans. Our implementation is available at \url{https://github.com/machinestein/gnot}
\end{abstract}
\begin{figure}[h!]
\vspace{-3mm}
    \centering
    \hspace{-1mm}\includegraphics[width=14.0cm]{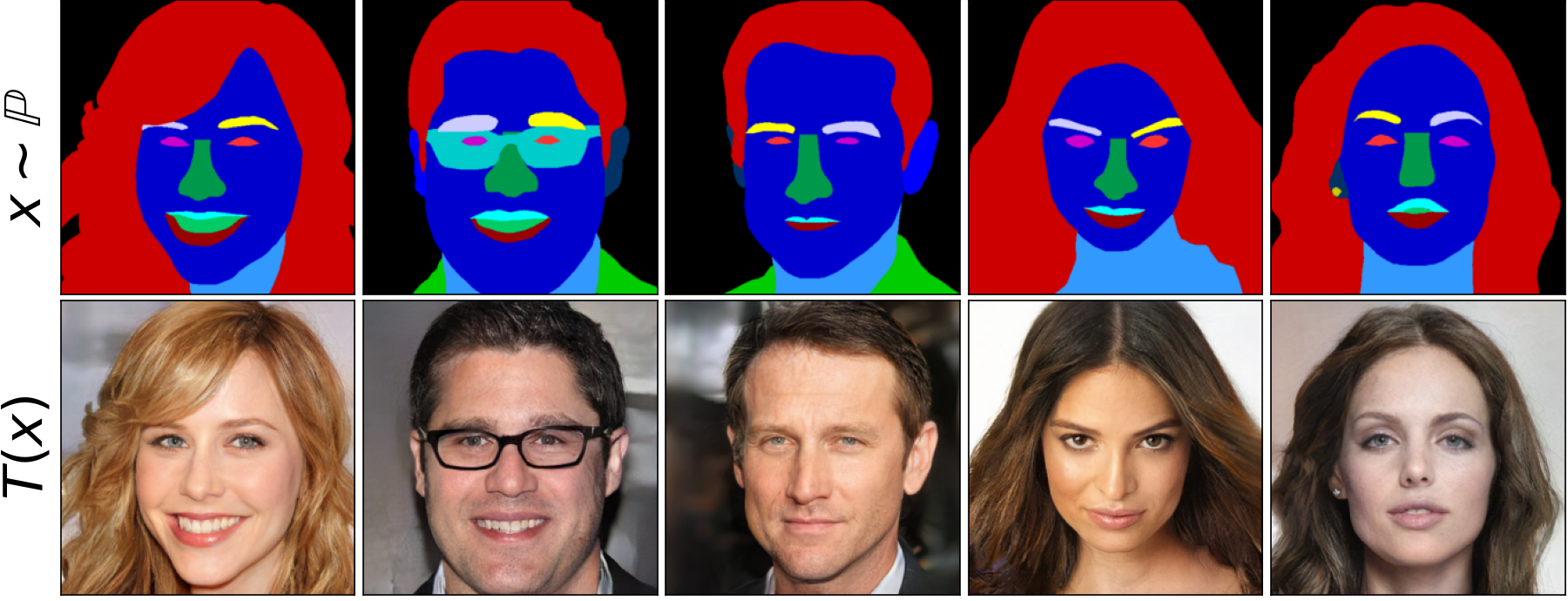}
    \caption{\centering Results of our method with the \textit{pair-guided cost functional} (\wasyparagraph\ref{sec-paired-image-results}) applied to the supervised image-to-image translation task (Celeba-MaskHQ dataset, $256\times 256$ images).}
    \vspace{-5mm}
    \label{fig:teaser_img}
\end{figure}
\section{Introduction}\vspace{-2mm}
\label{sec-intro}
\begin{wrapfigure}{r}{0.5\textwidth}
\vspace{-3mm}
    \centering
    \includegraphics[width=7.0cm]{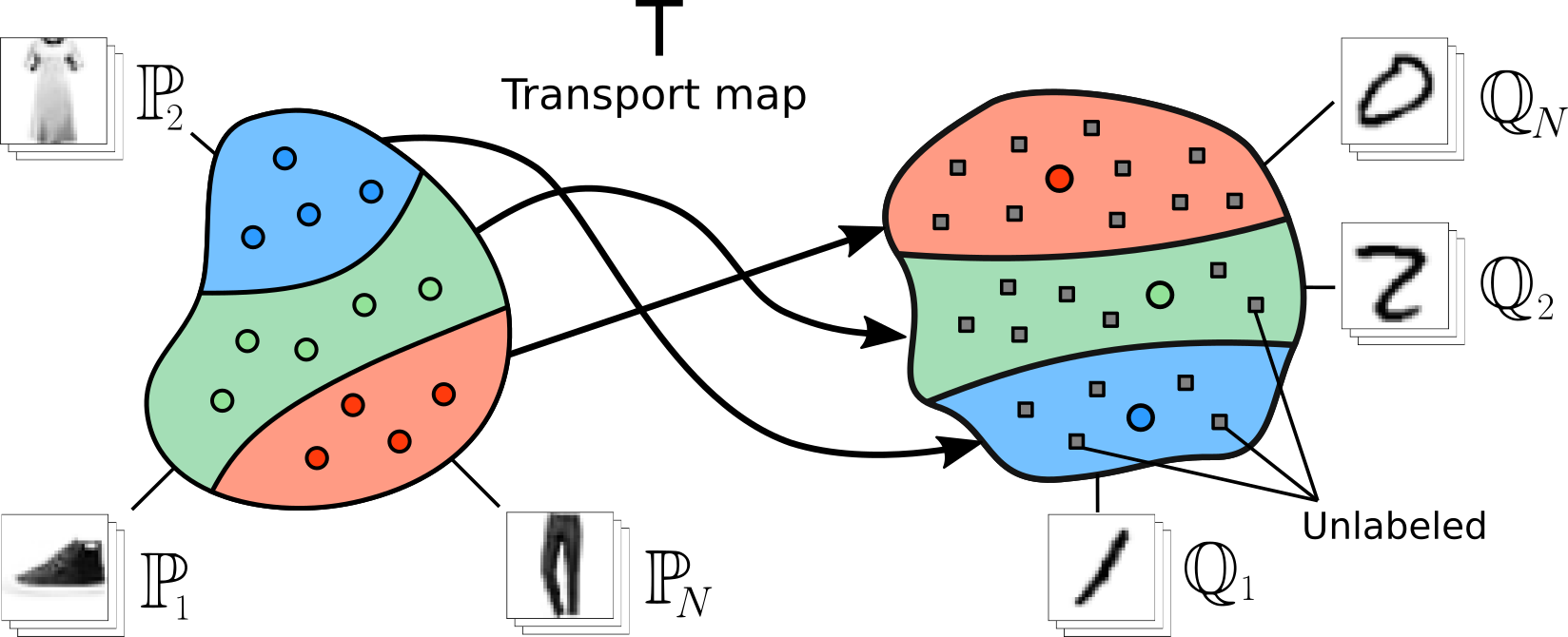}
    \caption{\centering \textit{Dataset transfer problem}. Input $\mathbb{P}=\sum_{n}\alpha_{n}\mathbb{P}_{n}$, target  $\mathbb{Q}=\sum_{n}\beta_{n}\mathbb{Q}_{n}$ distributions are mixtures of $N$ classes. The task is to learn a transport map $T$ preserving the class. The learner has the access to labeled input data $\sim\mathbb{P}$ and only \textit{partially labeled} target data $\sim\mathbb{Q}$.}
    \vspace{-5mm}
    \label{fig:teaser}
\end{wrapfigure}
Optimal transport (OT) is a powerful framework to solve mass-moving problems for data distributions which finds many applications in machine learning and computer vision \citep{bonneel2023survey}. Most existing methods to compute OT plans are designed for discrete distributions \citep{flamary2021pot, cuturi2013sinkhorn}. These methods have good flexibility and allow to control the properties of the plan~\citep{peyre2019computational}. However, discrete methods find an optimal matching between two given (train) sets which does not generalize to new (test) data points. This limits the use of discrete OT plan methods in scenarios where new data needs to be generated, e.g., image-to-image transfer \citep{zhu2017unpaired}.

Recent works \citep{rout2022generative,korotin2023neural,korotin2021neural, fan2023neural,daniels2021score} propose \textbf{continuous} methods to compute OT plans. Thanks to employing neural networks to parameterize OT solutions, the learned transport plan can be used directly as the generative model in data synthesis \citep{rout2022generative} and unpaired learning \citep{korotin2023neural,rout2022generative,daniels2021score,gazdieva2022unpaired}.

Existing continuous OT methods mostly focus on classic cost functions such as $\ell^{2}$ \citep{korotin2021neural,korotin2023neural,fan2023neural,gazdieva2022unpaired} which estimate the closeness of input and output points. However, choosing such costs for problems where a \textit{specific optimality of the mapping} is required may be challenging. For example, when one needs to preserve the object class during the transport (Figure \ref{fig:teaser}), common $\ell^{2}$ cost may be suboptimal \cite[Appendix C]{su2022dual}, \cite[Figure 3]{daniels2021score}. This limitation could be fixed by considering \textbf{general cost functionals} \citep{paty2020regularized} which may take into account additional information, e.g., class labels.

Despite the large popularity of OT, the approach for continuous OT with general cost functionals (general OT) is still missing. We address this limitation. 
The \textbf{main contributions} of our paper are:
 \vspace{-5mm}\newline\begin{enumerate}[leftmargin=*]
     \item  We show that the general OT problem (\wasyparagraph\ref{sec:background}) can be reformulated as a saddle point optimization problem, which allows to implicitly recover the OT plan (\wasyparagraph\ref{sec-max-min}) in the continuous setting.
     The problem can be solved with neural networks and stochastic gradient methods (Algorithm \ref{algorithm-gnot}). 
     \item We provide the error analysis of solving the proposed saddle point optimization problem via the duality gaps, i.e., errors for solving inner and outer optimization problems (\wasyparagraph\ref{sec-error-bound}).
     \item We construct and test examples of general cost functionals for mapping data distributions with the preservation of the class-wise (\wasyparagraph~\ref{sec-guided-learning}, Algorithm \ref{algorithm-da-not}) and paired data structure (\wasyparagraph\ref{sec-pair-guided-learning}, Algorithm \ref{algorithm-pair-guided}).
 \end{enumerate}\vspace{-2mm}
From the \textit{theoretical} perspective, our max-min reformulation is generic and subsumes previously known reformulations for classic \citep{rout2022generative,fan2023neural} and weak \citep{korotin2021neural} OT. Furthermore, existing error analysis works exclusively with the classic OT and operate only under certain  \textit{restrictive} assumptions such as the the convexity of the dual potential. Satisfying these assumptions in practice leads to a severe performance drop \citep[Figure 5a]{korotin2021continuous}. In contrast, our error analysis is \textit{free} from assumptions on the dual variable and, besides general OT, it is applicable to weak OT for which there is currently no existing error analysis.

From the \textit{practical} perspective,  we apply our method to the dataset transfer problem  (Figure \ref{fig:teaser}), previously not solved using continuous optimal transport. This problem arises when it is necessary to repurpose fixed or black-box models to classify previously unseen partially labelled target datasets with high accuracy by mapping the data into the dataset on which the classifier was trained~\citep{alvarez2021dataset}. Our method achieves notable improvements in accuracy over existing algorithms. Also, we show the performance of our method on the supervised image-to-image translation task. 

\textbf{Notations.} 
For a compact Hausdorff space $\mathcal{S}$, we use $\mathcal{P}(\mathcal{S})$ to denote the set of Borel probability distributions on $\mathcal{S}$. We denote the space of continuous $\mathbb{R}$-valued functions on $\mathcal{S}$ endowed with the supremum norm by $\mathcal{C}(\mathcal{S})$. Its dual space is the space $\mathcal{M}(\mathcal{S})\supset \mathcal{P}(\mathcal{S})$ of finite signed Borel measures over $\mathcal{S}$. Let $\mathcal{X},\mathcal{Y}$ be compact Hausdorff spaces and $\mathbb{P}\in\mathcal{P}(\mathcal{X})$, $\mathbb{Q}\in\mathcal{P}(\mathcal{Y})$. We use $\Pi(\mathbb{P})\subset\mathcal{P}(\mathcal{X}\times\mathcal{Y})$ to denote the subset of probability distributions on $\mathcal{X}\times\mathcal{Y}$, which projection onto the first marginal is $\mathbb{P}$. We use $\Pi(\mathbb{P},\mathbb{Q})\subset\Pi(\mathbb{P})$ to denote the subset of probability distributions (transport plans) on $\mathcal{X}\times\mathcal{Y}$ with marginals $\mathbb{P},\mathbb{Q}$.
For a measurable map $T:  \mathcal{X}\times\mathcal{Z}\rightarrow\mathcal{Y}$, we denote the associated push-forward operator by $T_{\#}$. 
 
\vspace{-3mm}\section{Background}\vspace{-3mm}
\label{sec:background}
In this section, we provide key concepts of the optimal transport theory. Throughout the paper, we consider compact $\mathcal{X}=\mathcal{Y}\subset\mathbb{R}^{D}$ and $\mathbb{P},\mathbb{Q}\in\mathcal{P}(\mathcal{X}),\mathcal{P}(\mathcal{Y})$.

\textbf{Classic and weak OT}.
 For a cost function $c\in\mathcal{C}(\mathcal{X}\times\mathcal{Y})$, the \textit{OT cost} between $\mathbb{P},\mathbb{Q}$ is 
\vspace{-2mm}\newline\begin{equation}\text{Cost}(\mathbb{P},\mathbb{Q})\stackrel{\text{def}}{=}\inf_{\pi\in\Pi(\mathbb{P},\mathbb{Q})}\int_{\mathcal{X}\times\mathcal{Y}}c(x,y)d\pi(x,y),
\label{ot-primal-form}
\end{equation}\vspace{-2mm}\newline
see \citep[\wasyparagraph 1]{villani2008optimal}. We call \eqref{ot-primal-form} the \textbf{classic OT}.
Problem \eqref{ot-primal-form} admits a minimizer $\pi^{*}\in\Pi(\mathbb{P},\mathbb{Q})$, which is called an \textit{OT plan} \citep[Theorem 1.4]{santambrogio2015optimal}. It may be not unique \citep[Remark 2.3]{peyre2019computational}.
Intuitively, the cost function $c(x,y)$ measures how hard it is to move a mass piece between points $x\in\mathcal{X}$ and $y\in \mathcal{Y}$. That is, $\pi^{*}$ shows how to optimally distribute the mass of $\mathbb{P}$ to $\mathbb{Q}$, i.e., with minimal effort. For cost functions $c(x,y)=\|x-y\|_{2}$ and $c(x,y)=\frac{1}{2}\|x-y\|_{2}^{2}$, the OT cost \eqref{ot-primal-form} is called the Wasserstein-1 ($\mathbb{W}_{1}$) and the (square of) Wasserstein-2 ($\mathbb{W}_{2}$) distance,  respectively, see \citep[\wasyparagraph 1]{villani2008optimal} or \citep[\wasyparagraph 1, 2]{santambrogio2015optimal}.

Recently, classic OT obtained the \textbf{weak OT} extension \citep{gozlan2017kantorovich,backhoff2019existence}. Consider ${C:\mathcal{X}\times\mathcal{P}(\mathcal{Y})\rightarrow\mathbb{R}}$, i.e., a weak cost function whose inputs are a point $x\in\mathcal{X}$ and a distribution of $y\in\mathcal{Y}$. The weak OT cost is  
\vspace{-2mm}\newline\begin{equation}\text{Cost}(\mathbb{P},\mathbb{Q})\stackrel{\text{def}}{=}\inf_{\pi\in\Pi(\mathbb{P},\mathbb{Q})}\int_{\mathcal{X}}C\big(x,\pi(\cdot|x)\big)d\pi(x),
\label{ot-primal-form-weak}
\end{equation}\vspace{-2mm}\newline
where $\pi(\cdot|x)$ denotes the conditional distribution. Weak formulation \eqref{ot-primal-form-weak} is reduced to classic formulation \eqref{ot-primal-form} when $C(x,\mu)=\int_{\mathcal{Y}}c(x,y)d\mu(y)$. Another example of a weak cost function is the $\gamma$-weak quadratic cost $C\big(x,\mu\big)=\int_{\mathcal{Y}}\frac{1}{2}\|x-y\|_{2}^{2}d\mu(y)-\frac{\gamma}{2}\text{Var}(\mu),$
where $\gamma\geq0$ and $\text{Var}(\mu)$ is the variance of $\mu$, see \citep[Eq. 5]{korotin2023neural}, \citep[\wasyparagraph 5.2]{alibert2019new},  \citep[\wasyparagraph 5.2]{gozlan2020mixture} for details. For this cost, we denote the optimal value of \eqref{ot-primal-form-weak} by $\mathcal{W}_{2,\gamma}^{2}$ and call it $\gamma$-\textit{weak Wasserstein-2}.

\textbf{Regularized and general OT}.
The expression inside \eqref{ot-primal-form} is a linear functional. It is common to add a lower semi-continuous convex regularizer $\mathcal{R}:\mathcal{M}(\mathcal{X}\times\mathcal{Y})\rightarrow \mathbb{R}\cup \{\infty\}$ with weight $\gamma>0$:
\vspace{-2mm}\newline\begin{equation}\text{Cost}(\mathbb{P},\mathbb{Q})\stackrel{\text{def}}{=}\inf_{\pi\in\Pi(\mathbb{P},\mathbb{Q})}\left\lbrace\int_{\mathcal{X}\times\mathcal{Y}}c(x,y)d\pi(x,y)+\gamma\mathcal{R}(\pi)\right\rbrace.
\label{rot-primal-form}
\end{equation}\vspace{-2mm}\newline
Regularized OT formulation \eqref{rot-primal-form} typically provides several advantages over original formulation \eqref{ot-primal-form}.
For example, if $\mathcal{R}(\pi)$ is strictly convex, the expression inside \eqref{rot-primal-form} is a strictly convex functional in $\pi$ and yields the unique OT plan $\pi^{*}$. Besides, regularized OT typically has better sample complexity \citep{genevay2019entropy,mena2019statistical,genevay2019sample}. Common regularizers are the entropic \citep{cuturi2013sinkhorn}, quadratic \citep{essid2018quadratically}, lasso \citep{courty2016optimal}, etc.

To consider a \textbf{general OT} formulation, let $\mathcal{F}:\mathcal{M}(\mathcal{X}\times\mathcal{Y})\rightarrow\mathbb{R}\cup\{+\infty\}$ be a convex lower semi-continuous functional. Assume that there exists $\pi\in\Pi(\mathbb{P},\mathbb{Q})$ for which $\mathcal{F}(\pi)<\infty$.
Let
\vspace{-2mm}
\begin{equation}
    \text{Cost}(\mathbb{P},\mathbb{Q})\stackrel{\text{def}}{=}\inf_{\pi\in\Pi(\mathbb{P},\mathbb{Q})}\mathcal{F}(\pi).
    \label{got-primal-form}
\end{equation}\vspace{-2mm}\newline
This problem is a \textit{generalization} of classic OT \eqref{ot-primal-form}, 
weak OT \eqref{ot-primal-form-weak}, and
regularized OT \eqref{rot-primal-form}. Following \citep{paty2020regularized}, we call problem \eqref{got-primal-form} a  general OT problem.  It admits a minimizer (OT plan) $\pi^*$ \citep[Lemma 1]{paty2020regularized}. One may note that regularized OT \eqref{rot-primal-form} represents a similar problem: it is enough to put $c(x,y)\equiv 0$, $\gamma=1$ and $\mathcal{R}(\pi)=\mathcal{F}(\pi)$ to obtain \eqref{got-primal-form} from \eqref{rot-primal-form}, i.e., regularized \eqref{rot-primal-form} and general OT \eqref{got-primal-form} can be viewed as equivalent formulations.

\vspace{-3mm}\section{Related Work: Discrete and Continuous OT solvers}\vspace{-3mm}
\label{sec-related}

Solving OT problems usually implies either finding an OT plan $\pi^*$ or the OT cost.  Many approaches in generative learning use \textit{OT cost} as the loss function to update generative models, such as WGANs \citep{arjovsky2017towards,petzka2018regularization,liu2019wasserstein}, see \citep{korotin2022kantorovich} for a survey. These are \textit{not related} to our work as they do not compute OT plans or maps. Existing computational OT \textit{plan} methods can be roughly split into two groups: discrete and continuous.

\textbf{Discrete OT} considers discrete distributions $\widehat{\mathbb{P}}_{N}=\sum_{n=1}^{N}p_{n}\delta_{x_n}$ and $\widehat{\mathbb{Q}}_{N}=\sum_{m=1}^{M}q_{m}\delta_{y_m}$ and aims to find the OT plan \eqref{ot-primal-form}, \eqref{ot-primal-form-weak}, \eqref{got-primal-form}, \eqref{rot-primal-form} directly between $\mathbb{P}=\widehat{\mathbb{P}}_{N}$ and $\mathbb{Q}=\widehat{\mathbb{Q}}_{M}$. In this case, the OT plan $\pi^{*}$ can be represented as a doubly stochastic $N\times M$ matrix. For a survey of computational methods for discrete OT, we refer to \citep{peyre2019computational}. In short, one of the most popular is the Sinkhorn algorithm \citep{cuturi2013sinkhorn} which is designed to solve formulation \eqref{rot-primal-form} with the entropic regularization.

General discrete OT is extensively studied \citep{nash2000dantzig, courty2016optimal, flamary2021pot, ferradans2014regularized, rakotomamonjy2015generalized}; these methods are often employed in domain adaptation problems~\citep{courty2016optimal}. Additionally, the available labels can be used to reconstruct the classic cost function, to capture the underlying data structure~\citep{courty2016optimal, stuart2020inverse, Liu2020Learning, li2019learning}.

The major drawback of discrete OT methods is that they only perform a (stochastic) \textit{matching} between the given empirical  samples and usually \textit{do not provide out-of-sample estimates}. This limits their application to real-world scenarios where new (test) samples frequently appear.  Recent works \citep{hutter2021minimax, pooladian2021entropic, manole2021plugin, deb2021rates} consider the OT problem with the quadratic cost and develop out-of-sample estimators by wavelet/kernel-based plugin estimators or by the barycentric projection of the discrete entropic OT plan. In spite of tractable theoretical properties, the performance of such methods in high dimensions is questionable.

\textbf{Continuous OT} usually considers $p_{n}=\frac{1}{N}$ and $q_{n}=\frac{1}{M}$ and assumes that the given discrete distributions $\widehat{\mathbb{P}}_{N}=\frac{1}{N}\sum_{n=1}^{N}\delta_{x_n}, \widehat{\mathbb{Q}}_{M}=\frac{1}{M}\sum_{m=1}^{M}\delta_{y_{m}}$ are the \textit{empirical} counterparts of the underlying distributions $\mathbb{P}$, $\mathbb{Q}$. That is, the goal of continuous OT is to recover the OT plan between $\mathbb{P},\mathbb{Q}$ which are accessible only by their (finite) empirical samples $\{x_1,x_2,\dots,x_{N}\}\sim\mathbb{P}$ and $\{y_1,y_2,\dots,y_{M}\}\sim\mathbb{Q}$. In this case, to represent the plan one has to employ parametric approximations of the OT plan $\pi^{*}$ or dual potentials $u^{*},v^{*}$ which, in turn, provide \textit{straightforward out-of-sample estimates}.
 
A notable development is the use of neural networks to compute OT maps for solving \textit{weak} \eqref{ot-primal-form-weak} and \textit{classic} \eqref{ot-primal-form} functionals \citep{korotin2023neural,korotin2022wasserstein,korotin2021neural,rout2022generative,fan2023neural,henry2019martingale}. Previous OT methods were based on formulations restricted to convex potentials \citep{makkuva2020optimal,korotin2019wasserstein,korotin2021continuous,mokrov2021large,fan2023neural,bunne2021jkonet,alvarez2022optimizing}, and used Input Convex Neural Networks \citep[ICNN]{amos2017input} to approximate them, which limited the application of OT in large-scale tasks \citep{korotin2021neural,fan2022variational,korotin2022wasserstein}.
In \citep{genevay2016stochastic,seguy2018large,daniels2021score, fan2022variational}, the authors propose methods for $f$-divergence \textit{regularized} costs \eqref{rot-primal-form}. 

{While the \textit{discrete} version of the general OT problem \eqref{got-primal-form} is well studied in the literature, its continuous counterpart is not yet analyzed. \textbf{In our work}, we fill this gap by proposing the algorithm to solve the (continuous) \textit{general} OT problem (\wasyparagraph\ref{sec-general-functionals}), provide error bounds (\wasyparagraph\ref{sec-error-bound}). As an illustration, we construct examples of general cost functionals which can take into account the available task-specific information as labels (\wasyparagraph\ref{sec-guided-learning}) or pairs (\wasyparagraph\ref{sec-pair-guided-learning}).}

\vspace{-3mm}\section{Maximin Reformulation of the General OT}\vspace{-3mm}
\label{sec-general-functionals}
In this section, we derive a saddle point formulation for the general OT problem~\eqref{got-primal-form} which we later solve with neural networks. All the \underline{proofs} of the statements are given in Appendix \ref{sec-proofs}.
\vspace{-3mm}\subsection{General OT Maximin Reformulation via Stochastic Maps}\vspace{-2mm}
\label{sec-max-min}

In this subsection, we derive the dual form for \eqref{got-primal-form}, which can be used to get the OT plan $\pi^{*}$.

Our formulation utilizes the implicit representation for plans $\Pi(\bbP)$ via stochastic maps, an idea inspired by \citep[\wasyparagraph 4.1]{korotin2023neural}. We introduce a latent space $\cZ = \bbR^Z$ and an atomless distribution $\mathbb{S}\in\mathcal{P}(\mathcal{Z})$ on it, e.g., $\mathbb{S}=\mathcal{N}(0,I_{Z})$. For every $\pi\in\mathcal{P}(\mathcal{X}\times\mathcal{Y})$, there exists  a measurable function $T=T_{\pi}:\mathcal{X}\times\mathcal{Z}\rightarrow\mathcal{Y}$ which implicitly represents it. Such $T_{\pi}$ satisfies $T_{\pi}(x,\cdot)\sharp\mathbb{S}=\pi(\cdot|x)$ for all $x\in\mathcal{X}$. That is, given $x\in\mathcal{X}$ and a random latent vector $z\sim\mathbb{S}$, the function $T$ produces sample $T_{\pi}(x,z)\sim\pi(y|x)$. In particular, if $x\sim\mathbb{P}$, the random vector $[x, T_{\pi}(x,z)]$ is distributed as $\pi$. Thus, every $\pi\in\Pi(\mathbb{P})$ can be implicitly represented (non-uniquely) as a function $T_{\pi}:\mathcal{X}\times\mathcal{Z}\rightarrow\mathcal{Y}$. And vice-versa, every measurable function $T:\mathcal{X}\times\mathcal{Z}\rightarrow\mathcal{Y}$ is an implicit representation of the distribution $\pi_{T}$ which is the joint distribution of a random vector $[x, T(x,z)]$ with $x\sim\mathbb{P},z\sim\mathbb{S}$.

Our two following theorems constitute the main theoretical idea of our approach. They are proven for separably *-increasing functionals $\cF$ (see the Definition \ref{def-star-incr} in Appendix \ref{sec-proofs}). Note that one can eliminate this restiction by taking the advantage of the minimax theorems \citep{terkelsen1972some}.

\begin{theorem}[Maximin reformulation of the general OT]For separably *-increasing convex and lower semi-continuous functional $\mathcal{F}:\mathcal{M}(\mathcal{X}\!\times\!\mathcal{Y})\!\rightarrow\!\mathbb{R}\!\cup\!\{+\infty\}$ it holds (we identify $\widetilde{\mathcal{F}}(T) \defeq \mathcal{F}(\pi_{T})$):\label{thm-minmax-dual}
\begin{equation}
    \hspace{-3mm}\mbox{\normalfont Cost}(\mathbb{P},\mathbb{Q})\!=\!\sup_{v}\inf_{T}\mathcal{L}(v,T)\!\defeq\sup_{v}\inf_{T}\bigg\lbrace\! \widetilde{\mathcal{F}}(T) - \int_{\mathcal{X}\times\mathcal{Z}}\hspace{-4mm}\!v\big(T(x,z)\big)d\mathbb{P}(x)d\mathbb{S}(z)\!+\!\int_{\mathcal{Y}}\!v(y)d\mathbb{Q}(y)\!\bigg\rbrace,\hspace{-3mm}
    \label{min-max-dual}
\end{equation}
where the $\sup$ is taken over potentials $v\in\mathcal{C}(\mathcal{Y})$ and $\inf$~-- over measurable functions $T:\mathcal{X}\times\mathcal{Z}\rightarrow\mathcal{Y}$. 
\label{thm-minimax}
\end{theorem}
\vspace{-5mm} From \eqref{min-max-dual} we also see that it is enough to consider values of $\mathcal{F}$ in $\pi_T\in\Pi(\mathbb{P})$. For convention, in further derivations we always consider $\widetilde{\cF}(T_{\pi}) = \mathcal{F}(\pi)=+\infty$ for $\pi \in\mathcal{M}(\mathcal{X}\times\mathcal{Y})\setminus \Pi(\mathbb{P})$. 

We say that $T^{*}$ is a \textit{stochastic OT map} if it represents some OT plan $\pi^{*}$ solving \eqref{got-primal-form}, i.e., $T^{*}(x,\cdot)\sharp\mathbb{S}=\pi^{*}(\cdot|x)$ holds $\mathbb{P}$-almost surely for all $x\!\in\!\mathcal{X}$.

\begin{theorem}[Optimal saddle points provide stochastic OT maps]\label{thm-arginf}
Let ${v^{*}\!\in\! \argsup_{v}\inf_{T}\!\mathcal{L}}(v,T)$ be any optimal potential. Then for every stochastic OT map $T^{*}$ it holds:
\vspace{-2.9mm}\newline
\begin{equation}
    T^{*}\in \arginf_{T}\mathcal{L}(v^{*},T).
    \label{arginf-v}
\end{equation}
\vspace{-3mm}\newline
Furthermore, if $\mathcal{F}$ is strictly convex in $\pi$, then \eqref{got-primal-form} permits the unique OT plan $\pi^*$. In this case, $T^{*}\in \arginf_{T}\mathcal{L}(v^{*},T)\Leftrightarrow T^{*}$ is a stochastic OT map.\end{theorem}\vspace{-2mm}

From our results it follows that by solving \eqref{min-max-dual} and obtaining an optimal saddle point $(v^{*},T^{*})$, one gets a stochastic OT map $T^{*}$. To ensure that all the solutions are OT maps, one may consider adding strictly convex regularizers to $\mathcal{F}$ with a small weight, e.g., \underline{conditional interaction energy}, see Appendix \ref{sec-strongly-convex} which is also known as the conditional kernel variance \citep{korotin2023kernel}. 

\textbf{Practical considerations.} Every term in \eqref{min-max-dual} can be estimated with Monte Carlo by using random empirical samples from $\mathbb{P},\mathbb{Q}$, allowing us to approach the general OT problem \eqref{got-primal-form} in the continuous setting (\wasyparagraph\ref{sec-related}). To solve the problem \eqref{min-max-dual} in practice, one may use neural nets ${T_{\theta}:\mathbb{R}^{D}\times\mathbb{R}^{S}\rightarrow\mathbb{R}^{D}}$ and ${v_{\omega}:\mathbb{R}^{D}\rightarrow\mathbb{R}}$ to parametrize $T$ and $v$, respectively. To train them, one may employ stochastic gradient ascent-descent (SGAD) by using random batches from $\mathbb{P},\mathbb{Q},\mathbb{S}$. We summarize the \underline{optimization procedure for general cost functionals} $\mathcal{F}$ in Algorithm \ref{algorithm-gnot} of Appendix \ref{sec-algorithm-appendix}. In the text below (\wasyparagraph\ref{sec-guided-learning}), we focus on the special cases of the class-guided functional $\mathcal{F}_{\text{G}}$, which is targeted to be used in the dataset transfer task (Figure \ref{fig:teaser}) and pair-guided functional $\mathcal{F}_{\text{S}}$ for supervised image-to-image style transfer (\wasyparagraph\ref{sec-pair-guided-learning}). 

\textbf{Relation to prior works.} Maximin reformulations analogous to our \eqref{min-max-dual} appear in the continuous OT literature \citep{korotin2021continuous,korotin2023neural,rout2022generative,fan2023neural} yet they are designed only for \textit{classic} \eqref{ot-primal-form} and \textit{weak} \eqref{ot-primal-form-weak} OT. Our formulation is generic and automatically subsumes all of them. It allows using \textit{general} cost functionals $\mathcal{F}$ which, e.g., may easily take into account side information.

\vspace{-3mm}\subsection{Error Bounds for Approximate Solutions for General OT}\vspace{-2mm}
\label{sec-error-bound}
For a pair ($\hat{v}, \hat{T}$) \textit{approximately} solving \eqref{min-max-dual}, it is natural to ask how close is $\pi_{\hat{T}}$ to the OT plan $\pi^{*}$.  
Based on the duality gaps, i.e., errors for solving outer and inner optimization problems with $(\hat{v},\hat{T})$ in \eqref{min-max-dual}, we give an upper bound on the difference between $\pi_{\hat{T}}$ and $\pi^{*}$. Our analysis holds for functionals $\mathcal{F}$ which are \textbf{strongly} convex in some metric $\rho(\cdot,\cdot)$, see Definition \ref{def-strongconv} in Appendix \ref{sec-proofs}. 
Recall that the strong convexity of $\mathcal{F}$ also implies the strict convexity, i.e., the OT plan $\pi^{*}$ is unique.

\begin{theorem}[Error analysis via duality gaps for stochastic maps]\label{thm-errest}
Let $\mathcal{F}:\mathcal{M}(\mathcal{X}\times\mathcal{Y})\rightarrow\mathbb{R}\cup\{+\infty\}$ be a convex cost functional. Let $\rho(\cdot, \cdot)$ be a metric on $\Pi(\mathbb{P}) \subset \mathcal{M}(\mathcal{X}\times\mathcal{Y})$. Assume that $\mathcal{F}$ is $\beta$-strongly convex in $\rho$ on $\Pi(\mathbb{P})$. Consider the duality gaps for an approximate solution $(\hat{v}, \hat{T})$ of \eqref{min-max-dual}:

\vspace{-2mm}\noindent\begin{minipage}{0.45\textwidth}
\begin{equation*}\vspace{1.9mm}
      \varepsilon_1(\hat{v}, \hat{T}) \!\stackrel{\mbox{\scriptsize \normalfont def}}{=} \!\mathcal{L}(\hat{v}, \hat{T}) - \inf\limits_{T} \mathcal{L}(\hat{v}, T),\!\!\! \label{e1-definition-T}
  \end{equation*}
    \end{minipage}%
    \begin{minipage}{0.53\textwidth}
  \begin{equation*}\vspace{2mm}
    \varepsilon_2(\hat{v}) \stackrel{\mbox{\scriptsize \normalfont def}}{=} \sup\limits_{v} \inf\limits_{T} \!\mathcal{L}(v, T) - \inf\limits_{T} \mathcal{L}(\hat{v}, T),\!\!\! \label{e2-definition-T}
  \end{equation*}
    \end{minipage}\vspace{-2mm}\newline
which are the errors of solving the outer $\sup_{v}$ and inner $\inf_{T}$ problems in \eqref{min-max-dual}, respectively. Then for OT plan $\pi^{*}$ in \eqref{got-primal-form} between $\mathbb{P}$ and $\mathbb{Q}$ the following inequality holds:
\vspace{-2mm}\newline\begin{equation*}\rho(\pi_{\hat{T}}, \pi^*) \leq \sqrt{\frac{2}{\beta}} \left(\sqrt{\varepsilon_1(\hat{v}, \hat{T})} +  \sqrt{\varepsilon_2(\hat{v})}\right),\label{formula-errest-T}
\end{equation*}\vspace{-2mm}\newline
i.e., the sum of the roots of duality gaps upper bounds the error of the plan $\pi_{\hat{T}}$ w.r.t. $\pi^{*}$ in $\rho(\cdot, \cdot)$.
\end{theorem}\vspace{-1.5mm}

The significance of our Theorem~\ref{thm-errest} is manifested when moving from the theoretical objective \eqref{min-max-dual} to its numerical counterpart. In practice, the dual potential $v$ in \eqref{min-max-dual} is parameterized by NNs (a subset of continuous functions) and may not reach the optimizer $v^*$. Our duality gap analysis shows that we can still find a good approximation of the OT plan. It suffices to find a pair $(\hat{v}, \hat{T})$ that achieves \textit{nearly} optimal objective values in the inner $\inf_T$ and outer $\sup_v$ problems of \eqref{min-max-dual}. In such a pair, $\pi_{\hat{T}}$ is close to the OT plan $\pi^*$. To apply our duality gap analysis, the strong convexity of $\cF$ is required. 
We give an example of a \underline{strongly convex regularizer} and a general recipe for using it in Appendix~\ref{sec-strongly-convex}. In turn, Appendix~\ref{sec-kernel} demonstrates the \underline{application} of this regularization technique in practice.

\textbf{Relation to prior works.} The authors of \citep{fan2023neural}, \citep{rout2022generative}, \citep{makkuva2020optimal} carried out error analysis via duality gaps resembling our Theorem \ref{thm-errest}. Their error analysis works \textit{only} for classic OT \eqref{ot-primal-form} and requires the potential $\hat{v}$ to satisfy certain convexity properties. Our error analysis is \textit{free} from assumptions on $\hat{v}$ and works for general OT \eqref{got-primal-form} with strongly convex $\mathcal{F}$.

\vspace{-3mm}\section{Learning with General Cost Functionals}
In this section, we show class-guided general cost functional \wasyparagraph\ref{sec-guided-learning} for dataset transfer problem \wasyparagraph\ref{sec-image} and pair-guided cost functional \wasyparagraph\ref{sec-pair-guided-learning} for supervised image-to-image translation \wasyparagraph\ref{sec-paired-image-results}.

\vspace{-2mm}\subsection{Class-Guided Cost Functional}\label{sec-guided-learning}\vspace{-3mm}

\begin{algorithm}[t!]
\SetInd{0.5em}{0.3em}
    {
        \SetAlgorithmName{Algorithm}{empty}{Empty}
        \SetKwInOut{Input}{Input}
        \SetKwInOut{Output}{Output}
        \Input{Distributions $\mathbb{P}=\sum_{n}\alpha_{n}\mathbb{P}_{n}$, $\mathbb{Q}=\sum_{n}\beta_{n}\mathbb{Q}_{n}$, $\mathbb{S}$ accessible  by samples (\textit{unlabeled}); weights $\alpha_{n}$ are known and samples from each $\mathbb{P}_{n},\mathbb{Q}_{n}$ are accessible (\textit{labeled}); \\mapping network $T_{\theta}:\mathbb{R}^{P}\times\mathbb{R}^{S}\rightarrow\mathbb{R}^{Q}$; potential network $v_{\omega}:\mathbb{R}^{Q}\rightarrow\mathbb{R}$; \\number of inner iterations $K_{T}$; 
        }
        \Output{Learned stochastic OT map $T_{\theta}$ representing an OT plan between distributions $\mathbb{P},\mathbb{Q}$\;
        }
        \Repeat{not converged}{
            Sample (unlabeled) batches $Y\sim \mathbb{Q}$, $X\sim \mathbb{P}$ and for each $x\in X$ sample batch $Z[x] \sim\mathbb{S}$\;
            ${\mathcal{L}_{v}\leftarrow \sum\limits_{x\in X}\sum\limits_{z\in Z[x]}\frac{v_{\omega}(T_{\theta}(x,z))}{|X| \cdot |Z[x]|}-\sum\limits_{y\in Y}\frac{v_{\omega}(y)}{|Y|}}$\;
            Update $\omega$ by using $\frac{\partial \mathcal{L}_{v}}{\partial \omega}$\;
            \For{$k_{T} = 1,2, \dots, K_{T}$}{
                Pick $n\in\{1,2,\dots,N\}$ at random with probabilities  $(\alpha_{1},\dots,\alpha_{N})$\;
                Sample (labeled) batches $X_{n}\sim \mathbb{P}_{n}$, $Y_{n}\sim\mathbb{Q}_{n}$; for each $x\in X$ sample batch $Z_{n}[x] \sim\mathbb{S}$\;

                $\mathcal{L}_{T}\leftarrow \widehat{\Delta\mathcal{E}^{2}}\big(X_{n},T(X_{n},Z_{n}), Y_{n}\big)-\sum\limits_{x\in X_{n}}\sum\limits_{z\in Z_{n}[x]}\frac{v_{\omega}(T_{\theta}(x,z))}{|X_{n}|\cdot |Z_{n}[x]|}$\;
            Update $\theta$ by using $\frac{\partial \mathcal{L}_{T}}{\partial \theta}$\;
            }
        }
        \caption{Neural optimal transport with the class-guided cost functional $\widetilde{\mathcal{F}}_{\text{G}}$.}
        \label{algorithm-da-not}
    }
\end{algorithm}

To begin with, we theoretically formalize the problem setup. Let each input $\mathbb{P}$ and output $\mathbb{Q}$ distributions be a mixture of $N$ distributions (\textit{classes}) $\{\mathbb{P}_{n}\}_{n=1}^{N}$ and $\{\mathbb{Q}_{n}\}_{n=1}^{N}$, respectively. That is $\mathbb{P}=\sum_{n=1}^{N}\alpha_{n}\mathbb{P}_{n}$ and $\mathbb{Q}=\sum_{n=1}^{N}\beta_{n}\mathbb{Q}_{n}$ where $\alpha_{n},\beta_{n}\geq 0$ are the respective weights (\textit{class prior probabilities}) satisfying $\sum_{n=1}^{N}\alpha_{n}=1$ and $\sum_{n=1}^{N}\beta_{n}=1$. In this general setup, we aim to find the transport plan $\pi(x,y)\in\Pi(\mathbb{P},\mathbb{Q})$ for which the classes of $x\in\mathcal{X}$ and $y\in\mathcal{Y}$ are the same for as many pairs $(x,y)\sim\pi$ as possible. That is, its respective stochastic map $T$ should map each component $\mathbb{P}_{n}$ (class) of $\mathbb{P}$ to the respective component $\mathbb{Q}_{n}$ (class) of $\mathbb{Q}$.

The task above is related to \textit{domain adaptation} or \textit{transfer learning} problems. It does not always have a solution with each $\mathbb{P}_{n}$ exactly mapped to $\mathbb{Q}_{n}$ due to possible prior/posterior shift \citep{kouw2018introduction}. We aim to find a stochastic map $T$ between $\mathbb{P}$ and $\mathbb{Q}$ satisfying $T_{\sharp}(\mathbb{P}_{n}\!\times\!\mathbb{S})\approx\mathbb{Q}_{n}$ for all $n=1,\dots,N$. To solve the above-discussed problem, we propose the following functional:
\vspace{-3mm}\newline\begin{equation}
    \mathcal{F}_{\text{G}}(\pi)=\widetilde{\mathcal{F}}_{\text{G}}(T_{\pi})\defeq\sum_{n=1}^{N}\alpha_{n}\mathcal{E}^{2}\big(T_{\pi}\sharp(\mathbb{P}_{n}\hspace{-1mm}\times\!\mathbb{S}), \mathbb{Q}_{n}\big),
    \label{da-func}
\end{equation}\vspace{-2mm}\newline
where $\mathcal{E}$ denotes the energy distance \eqref{energy-distance}. For two distributions $\mathbb{Q},\mathbb{Q}'\in\mathcal{P}(\mathcal{Y})$ with $\mathcal{Y}\subset\mathbb{R}^{D}$, the (square of) energy distance $\mathcal{E}$ \citep{rizzo2016energy} between them is:
\vspace{-2mm}\newline\begin{equation}
    \mathcal{E}^{2}(\mathbb{Q},\mathbb{Q}')=\mathbb{E}\|Y_{1}-Y_{2}\|_{2}-\frac{1}{2}\mathbb{E}\|Y_{1}-Y'_{1}\|_{2}-\frac{1}{2}\mathbb{E}\|Y_{2}-Y'_{2}\|_{2},
    \label{energy-distance}
\end{equation}\vspace{-2mm}\newline
where $Y_{1}\sim \mathbb{Q},Y_{1}'\sim \mathbb{Q}, Y_{2}\sim \mathbb{Q}', Y_{2}'\sim \mathbb{Q}'$ are independent random vectors. Energy distance \eqref{energy-distance} is a particular case of the Maximum Mean Discrepancy \citep{sejdinovic2013equivalence}. It equals zero only when $\mathbb{Q}_{1}=\mathbb{Q}_{2}$. Hence, our functional \eqref{da-func} is non-negative and attains zero value when the components of $\mathbb{P}$ are correctly mapped to the respective components of $\mathbb{Q}$ (if this is possible).
\begin{theorem}[Properties of the class-guided cost functional $\mathcal{F}_{\text{G}}$]
    Functional $\mathcal{F}_{\text{G}}(\pi)$ is convex in $\pi\in\Pi(\mathbb{P})$, lower semi-continuous and $*$-separably increasing.
    \label{prop-fg-convex}
\end{theorem}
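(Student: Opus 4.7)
The plan is to exploit the decomposition $\mathcal{F}_{\text{G}}(\pi) = \sum_{n=1}^{N}\alpha_{n}\mathcal{E}^{2}(\mathbb{Q}^{\pi}_{n},\mathbb{Q}_{n})$, where for $\pi\in\Pi(\mathbb{P})$ I set $\mathbb{Q}^{\pi}_{n}\defeq T_{\pi}\sharp(\mathbb{P}_{n}\times\mathbb{S})$. This $y$-marginal depends only on $\pi$ (not on the choice of representing map $T_{\pi}$) via the disintegration formula $\mathbb{Q}^{\pi}_{n}(dy) = \int_{\mathcal{X}}\pi(dy\mid x)\,d\mathbb{P}_{n}(x)$, and will bridge all three claims.

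\emph{Convexity.} The key observation is that $\pi\mapsto\mathbb{Q}^{\pi}_{n}$ is \emph{affine} on $\Pi(\mathbb{P})$: since every $\pi\in\Pi(\mathbb{P})$ has the fixed $x$-marginal $\mathbb{P}$, the disintegrations $\pi(dy\mid x)$ are linear in $\pi$, hence so is their integral against $\mathbb{P}_{n}$. By the equivalence between the energy distance and the MMD associated with the conditionally negative-definite kernel $\|y_{1}-y_{2}\|_{2}$ \citep{sejdinovic2013equivalence}, $\mathcal{E}^{2}(\,\cdot\,,\mathbb{Q}_{n})$ equals the squared RKHS norm of the difference of mean embeddings, which is convex (indeed quadratic) in its first argument. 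A convex function of an affine expression is convex, and nonnegative weighted sums preserve convexity, so $\mathcal{F}_{\text{G}}$ is convex on $\Pi(\mathbb{P})$.

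\emph{Lower semi-continuity.} Extend $\mathcal{F}_{\text{G}}$ to all of $\mathcal{M}(\mathcal{X}\times\mathcal{Y})$ by $+\infty$ off $\Pi(\mathbb{P})$. Since $\Pi(\mathbb{P})$ is weakly closed (projection onto the $x$-marginal is weakly continuous), it suffices to prove continuity on $\Pi(\mathbb{P})$. In the paper's labeled-data setup one may augment $\mathcal{X}$ with the class label so that the $\mathbb{P}_{n}$ have disjoint supports; more generally, assume the likelihood ratios $f_{n}=\alpha_{n}\,d\mathbb{P}_{n}/d\mathbb{P}$ are continuous. Either way, $\int\phi(y)\,d(\alpha_{n}\mathbb{Q}^{\pi}_{n})(y) = \int f_{n}(x)\phi(y)\,d\pi(x,y)$ for every $\phi\in\mathcal{C}(\mathcal{Y})$, so $\pi\mapsto\alpha_{n}\mathbb{Q}^{\pi}_{n}$ is weakly continuous in $\pi$. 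Compactness of $\mathcal{Y}\subset\mathbb{R}^{D}$ makes every integrand in \eqref{energy-distance} bounded continuous, hence $\mathcal{E}^{2}$ is jointly weakly continuous on pairs of probability measures; composing and summing yields continuity of $\mathcal{F}_{\text{G}}$ on $\Pi(\mathbb{P})$.

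\emph{$*$-separably increasing.} The convention $\mathcal{F}_{\text{G}}\equiv+\infty$ off $\Pi(\mathbb{P})\subset\mathcal{P}(\mathcal{X}\times\mathcal{Y})$ restricts the supremum in $\mathcal{F}_{\text{G}}^{*}(h)=\sup_{\pi}[\int h\,d\pi-\mathcal{F}_{\text{G}}(\pi)]$ to nonnegative probability measures. For any such $\pi$, the pointwise inequality $u\oplus v\leq c$ integrates to $\int (u\oplus v)\,d\pi\leq\int c\,d\pi$; taking the supremum over $\pi$ gives $\mathcal{F}_{\text{G}}^{*}(u\oplus v)\leq\mathcal{F}_{\text{G}}^{*}(c)$, which is precisely the required property. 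The only delicate step in the whole argument is the weak continuity of $\pi\mapsto\mathbb{Q}^{\pi}_{n}$, which pins down the minimal regularity one must assume on the class decomposition; this assumption is trivially satisfied in the targeted labeled regime, so I expect it to be the main (but mild) technical obstacle.
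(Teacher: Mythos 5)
Your proposal is correct and follows the same overall skeleton as the paper's proof (affinity of $\pi\mapsto \mathbb{Q}^{\pi}_{n}\!=\!T_{\pi}\sharp(\mathbb{P}_{n}\!\times\!\mathbb{S})$ plus convexity of the squared MMD; lsc of the energy-distance terms; the $*$-separately-increasing part is verbatim the paper's argument, relying only on $\mathcal{F}_{\text{G}}=+\infty$ off $\Pi(\mathbb{P})$), but two steps are executed differently. For convexity, you obtain linearity of $\pi\mapsto\mathbb{Q}^{\pi}_{n}$ directly from the disintegration identity $\mathbb{Q}^{\pi}_{n}=\int_{\mathcal{X}}\pi(\cdot\mid x)\,d\mathbb{P}_{n}(x)$ and the fact that all $\pi\in\Pi(\mathbb{P})$ share the marginal $\mathbb{P}$ (so mixtures disintegrate $\mathbb{P}$-a.e., hence $\mathbb{P}_{n}$-a.e.\ since $\alpha_{n}\mathbb{P}_{n}\leq\mathbb{P}$); the paper instead proves the same linearity through the stochastic-map representation $T_{\pi}$ and a glueing-lemma construction with auxiliary random variables. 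Your route is shorter and bypasses the measurable-selection flavour of the paper's argument, at the cost of making explicit that $\mathcal{F}_{\text{G}}$ depends on $\pi$ only through the conditionals, which the paper leaves implicit in its definition via $T_{\pi}$. For lower semi-continuity the difference is more substantive: the paper shows lsc of $\mathcal{E}^{2}$ in its first argument by splitting \eqref{energy-distance} into a linear term and an interaction-energy term (citing Santambrogio, Prop.~7.2), but it never addresses how lsc survives the composition with $\pi\mapsto\mathbb{Q}^{\pi}_{n}$; you instead prove weak continuity of $\pi\mapsto\mathbb{Q}^{\pi}_{n}$ and of $\mathcal{E}^{2}$ on the compact $\mathcal{Y}$ and compose, which is the honest crux — and it is exactly here that you need the extra (mild) regularity you flag, namely continuity of $\alpha_{n}\,d\mathbb{P}_{n}/d\mathbb{P}$ or separated class supports (in the disjoint-support case you should still note that the indicator density must be replaced by a continuous Tietze extension, harmless since every $\pi\in\Pi(\mathbb{P})$ is concentrated on $\mathrm{supp}(\mathbb{P})\times\mathcal{Y}$). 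So your argument is slightly less general in its stated hypotheses than the theorem, but it makes explicit, and under a mild assumption repairs, a composition step that the paper's lsc argument passes over; the paper's version, in turn, avoids stating any assumption on the densities but only at the price of leaving that step implicit.
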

In practice, each of the terms $\mathcal{E}^{2}\big(T_{\pi}\sharp(\mathbb{P}_{n}\hspace{-1mm}\times\!\mathbb{S}), \mathbb{Q}_{n}\big)$ in \eqref{da-func} admits estimation from samples from $\pi$.
\begin{proposition}[Estimator for $\mathcal{E}^{2}$]
Let $X_{n}\sim\mathbb{P}_{n}$ be a batch of $K_{X}$ samples from class $n$. For each $x\in X_{n}$ let $Z_{n}[x]\sim\mathbb{S}$ be a latent batch of size $K_{Z}$. Consider a batch $Y_{n}\sim\mathbb{Q}_{n}$ of size $K_{Y}$. Then
\vspace{-4mm}\newline\begin{eqnarray}
    \widehat{\Delta\mathcal{E}^{2}}\big(X_{n},T(X_{n},Z_{n}), Y_{n}\big)\defeq
    \sum_{y\in Y_{n}}\sum_{x\in X_{n}}\sum_{z\in Z_{n}[x]}\frac{\|y-T(x,z)\|_{2}}{K_{Y}\cdot K_{X}\cdot K_{Z}}-
    \nonumber\\
    \sum_{x\in X_{n}}\sum_{z\in Z_{n}[x]}\sum_{x'\in X_{n}\backslash\{x\}}\sum_{z'\in Z_{x'}}\frac{\|T(x,z)-T(x',z')\|_{2}}{2\cdot(K_{X}^{2}-K_{X})\cdot K_{Z}^{2}}
    \label{estimator-energy}
\end{eqnarray} \vspace{-2mm}\newline
is an estimator of $\mathcal{E}^{2}\big(T\sharp(\mathbb{P}_{n}\hspace{-1mm}\times\!\mathbb{S}),\mathbb{Q}_{n}\big)$ up to a constant \textit{$T$-independent} shift.
\label{prop-estimator}
\end{proposition}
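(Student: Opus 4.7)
The plan is to show that $\widehat{\Delta\mathcal{E}^{2}}$ is an unbiased estimator, up to a $T$-independent additive constant, of $\mathcal{E}^{2}\big(T\sharp(\mathbb{P}_{n}\times\mathbb{S}),\mathbb{Q}_{n}\big)$. Expanding the energy distance definition \eqref{energy-distance} with $\mathbb{Q}=T\sharp(\mathbb{P}_{n}\times\mathbb{S})$ and $\mathbb{Q}'=\mathbb{Q}_{n}$, I would rewrite
\begin{equation*}
\mathcal{E}^{2}\big(T\sharp(\mathbb{P}_{n}\!\times\!\mathbb{S}),\mathbb{Q}_{n}\big)=\mathbb{E}\|T(X,Z)\!-\!Y\|_{2}-\tfrac{1}{2}\mathbb{E}\|T(X,Z)\!-\!T(X',Z')\|_{2}-\tfrac{1}{2}\mathbb{E}\|Y\!-\!Y'\|_{2},
\end{equation*}
where $X,X'\sim\mathbb{P}_{n}$, $Z,Z'\sim\mathbb{S}$, and $Y,Y'\sim\mathbb{Q}_{n}$ are mutually independent. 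The last term involves only $\mathbb{Q}_{n}$; thus it is a constant that does not depend on $T$. So the task reduces to showing that the two summations in \eqref{estimator-energy} are unbiased estimators of the first two expectations on the right-hand side.

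For the first summation, I would observe that $Y_{n}$, $X_{n}$ are i.i.d.\ batches from $\mathbb{Q}_{n}$, $\mathbb{P}_{n}$, and for each $x\in X_{n}$ the batch $Z_{n}[x]$ is i.i.d.\ from $\mathbb{S}$ and independent of everything else. Hence every triple $(y,x,z)$ in the sum has the joint distribution $\mathbb{Q}_{n}\otimes\mathbb{P}_{n}\otimes\mathbb{S}$, so taking expectation of each of the $K_{Y}K_{X}K_{Z}$ equally weighted summands yields $\mathbb{E}\|T(X,Z)-Y\|_{2}$ exactly.

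For the second summation, the crucial point is that the outer sum ranges over $x'\in X_{n}\setminus\{x\}$, so $x$ and $x'$ are distinct (hence independent) draws from $\mathbb{P}_{n}$, while $z\in Z_{n}[x]$ and $z'\in Z_{x'}$ are independent draws from $\mathbb{S}$ associated with these two distinct inputs. Therefore each summand has the joint distribution of $(T(X,Z),T(X',Z'))$ with $X,X',Z,Z'$ mutually independent, matching the second term. The total number of summands is $K_{X}\cdot K_{Z}\cdot(K_{X}-1)\cdot K_{Z}=(K_{X}^{2}-K_{X})K_{Z}^{2}$, and the prefactor $\tfrac{1}{2(K_{X}^{2}-K_{X})K_{Z}^{2}}$ supplies both the averaging and the factor $\tfrac{1}{2}$. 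Taking expectation gives exactly $\tfrac{1}{2}\mathbb{E}\|T(X,Z)-T(X',Z')\|_{2}$.

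Combining the two computations yields $\mathbb{E}\big[\widehat{\Delta\mathcal{E}^{2}}\big]=\mathcal{E}^{2}\big(T\sharp(\mathbb{P}_{n}\!\times\!\mathbb{S}),\mathbb{Q}_{n}\big)+\tfrac{1}{2}\mathbb{E}_{Y,Y'\sim\mathbb{Q}_{n}}\|Y-Y'\|_{2}$, which is the claim. There is no real obstacle here; the only subtle bookkeeping is verifying the normalization constant in the second sum and checking that the exclusion $x'\neq x$ is what ensures independence of the two endpoints (without it, the diagonal terms $x=x'$ would bias the estimator by an $O(1/K_{X})$ term involving $\mathbb{E}\|T(X,Z)-T(X,Z')\|_{2}$, which does not match the target expectation).
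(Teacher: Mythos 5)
Your proposal is correct and follows essentially the same route as the paper's proof: compute the expectation of the two sums (using the exclusion $x'\neq x$ for independence), recognize it as $\mathbb{E}\|Y-T(X,Z)\|_{2}-\tfrac{1}{2}\mathbb{E}\|T(X,Z)-T(X',Z')\|_{2}$, and observe that the missing term $\tfrac{1}{2}\mathbb{E}\|Y-Y'\|_{2}$ is a $T$-independent constant. You merely spell out the unbiasedness and normalization bookkeeping that the paper compresses into "direct calculation of the expectation."
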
\vspace{-2mm}
To estimate $\widetilde{\mathcal{F}}_{\text{G}}(T)$, one may separately estimate terms $\mathcal{E}^{2}\big(T\sharp(\mathbb{P}_{n}\hspace{-1mm}\times\!\mathbb{S}),\mathbb{Q}_{n}\big)$ for each $n$ and sum them up with weights $\alpha_{n}$. We only estimate $n$-th term with probability $\alpha_{n}$ at each iteration.

We highlight the \textit{two key details} of the estimation of \eqref{da-func} which are significantly different from the estimation of classic \eqref{ot-primal-form} and weak OT costs \eqref{ot-primal-form-weak} appearing in related works \citep{korotin2023neural,korotin2021neural,fan2023neural}. First, one has to sample not just from the input distribution $\mathbb{P}$, but \textit{separately} from each its component (class) $\mathbb{P}_{n}$. Moreover, one also has to be able to \textit{separately} sample from the target distribution's $\mathbb{Q}$ components $\mathbb{Q}_{n}$. This is the part where the \textit{guidance} (semi-supervision) happens. We note that to estimate costs such as classic or weak \eqref{ot-primal-form-weak}, no target samples from $\mathbb{Q}$ are needed at all, i.e., they can be viewed as unsupervised.

In practice, we assume that the learner is given a \textit{labelled} empirical sample from $\mathbb{P}$ for training. In contrast, we assume that the available samples from $\mathbb{Q}$ are only \textit{partially labelled} (with $\geq 1$ labelled data point per class). That is, we know the class label only for a limited amount of data (Figure \ref{fig:teaser}). In this case, all $n$ cost terms \eqref{estimator-energy} can still be stochastically estimated. These cost terms are used to learn the transport map $T_{\theta}$ in Algorithm \ref{algorithm-gnot}. The remaining (unlabeled) samples will be used when training the potential $v_{\omega}$, as labels are not needed to update the potential in \eqref{min-max-dual}. We provide the detailed procedure for learning with the functional $\mathcal{F}_{\text{G}}$ \eqref{da-func} in Algorithm \ref{algorithm-da-not}.

\vspace{-3mm}\subsection{Pair-Guided Cost Functional}\label{sec-pair-guided-learning}\vspace{-3mm}
In this section, we demonstrate general OT formulation with another practically-appealing specification. In particular, we define the \textit{pair-guided} general OT cost functional. For a given paired data set $(x_1,y^*(x_1)),\dots, (x_N,y^{*}(x_N))$ with samples $X_{1:N} = \{x_1, \dots x_N\}$ and $y^{*}(X_{1:N}) = \{y^*(x_1), \dots , y^{*}(x_N)\}$ which are assumed to follow the source $\mathbb{P}$ and target $\mathbb{Q}$ distributions, respectively, we introduce: 
\vspace{-2mm}\newline\begin{equation}
    \mathcal{F}_{S}(\pi) \defeq \int_{\mathcal{X}\times\mathcal{Y}}\ell(y,y^{*}(x))d\pi(x,y).
    \label{eq:paired-cost}
\end{equation}
\vspace{-2mm}\newline
The function $\ell : \mathcal{X}\times\mathcal{Y} \rightarrow \mathbb{R}$ is an appropriate loss measuring the difference between samples. In the majority of our experiments we choose $\ell(y, y') = \Vert y - y' \Vert_2$.  In practice, to estimate \eqref{eq:paired-cost} we assume that the learner is given a \textit{labelled} empirical sample of $\mathbb{P}$ and $\mathbb{Q}$ for training. Using the cost \eqref{eq:paired-cost}, which can handle such information, we can train optimal mapping in a supervised manner. In (\wasyparagraph\ref{sec-paired-image-results}) we show how our method, together with the pair-guided functional $\mathcal{F}_S$, is directly applicable to the paired image-to-image translation problem.

\vspace{-3mm}\section{Experimental Illustrations}\vspace{-2mm}
\label{sec-experiments}

Our Algorithm \ref{algorithm-da-not} is capable of learning both stochastic (\textit{one-to-many}) $T(x,z)$ and deterministic (\textit{one-to-one}) $T(x,z)\equiv T(x)$ transport maps. For the latter, no random noise $z$ is added to input. First, we implement stochastic and deterministic maps along with our class-guided cost function $\mathcal{F}_{\text{G}}$, to address the dataset transfer problem (\wasyparagraph\ref{sec-image}). Second, using the deterministic transport map $T(x)$, we apply our pair-guided functional $\mathcal{F}_{S}$ to solve the image-to-image supervised translation (\wasyparagraph\ref{sec-paired-image-results}). In the Appendix we provide experiments with toy data~\ref{sec-toy}, biological batch effect problem \ref{sec:batch_effect} and various paired image-to-image datasets \ref{pair-guided-cost}. The code for the experiments can be found at 
\begin{center}\vspace{-2mm}
\url{https://github.com/machinestein/gnot}
\end{center}\vspace{-2mm}

\vspace{-3mm}\subsection{Class-Guided Experiments}\vspace{-2mm}
\label{sec-image}
\textbf{Datasets.} We use MNIST~\citep{mnisthandwrittendigit}, FashionMNIST~\citep{xiao2017fashion} and MNIST-M~\citep{GaninL15} datasets as $\mathbb{P},\mathbb{Q}$. Each dataset has 10 (balanced) classes and the pre-defined train-test split.
In this experiment, the goal is to find a class-wise map between unrelated domains: FMNIST $\rightarrow$ MNIST and MNIST $\rightarrow$ MNIST-M. We use the default class correspondence between the datasets. For completeness, in Appendices we provide \underline{additional results} with imbalanced classes (\ref{sec-imbalance}), non-default correspondence (\ref{sec:non-default}), and other datasets (\ref{sec-class-guided-appendix}).

\textbf{Baselines.} We compare our method to the pixel-level adaptation methods such as (one-to-many) AugCycleGAN \citep{almahairi2018augmented, zhu2017unpaired, cycleda, almahairi2018augmented} and MUNIT~\citep{huang2018multimodal,liu2017unsupervised}. We use the official implementations with the hyperparameters from the respective papers. We test Neural OT \citep{korotin2023neural,fan2023neural} with Euclidean cost functions: the quadratic cost $\frac{1}{2}\|x-y\|_{2}^{2}$ ($\mathbb{W}_{2}$)  and  the $\gamma$-weak (one-to-many) quadratic cost ($\mathcal{W}_{2,\gamma}$, $\gamma=\frac{1}{10}$). For semi-supervised mapping, we considered (one-to-one) OTDD flow \citep{alvarez2021dataset,alvarez2020geometric}. This method employs gradient flows to perform the transfer preserving the class label. We also examine a General Discrete OT (DOT) which use labels. In particular, we adopted the solver from \texttt{ot.da}~\citep{flamary2021pot} with its default out-of-sample estimation procedure. The solver utilizes the Sinkhorn~\citep{cuturi2013sinkhorn} with Laplacian cost regularization ~\citep{courty2016optimal}.
We show the \underline{results} of \textit{ICNN-based} OT method \citep{makkuva2020optimal,korotin2019wasserstein} in Appendix~\ref{sec:icnn-ot}.

\begin{figure*}[t!]
\begin{center}
    \subfigure[FMNIST $\rightarrow$  MNIST]{\includegraphics[width=0.49\textwidth]{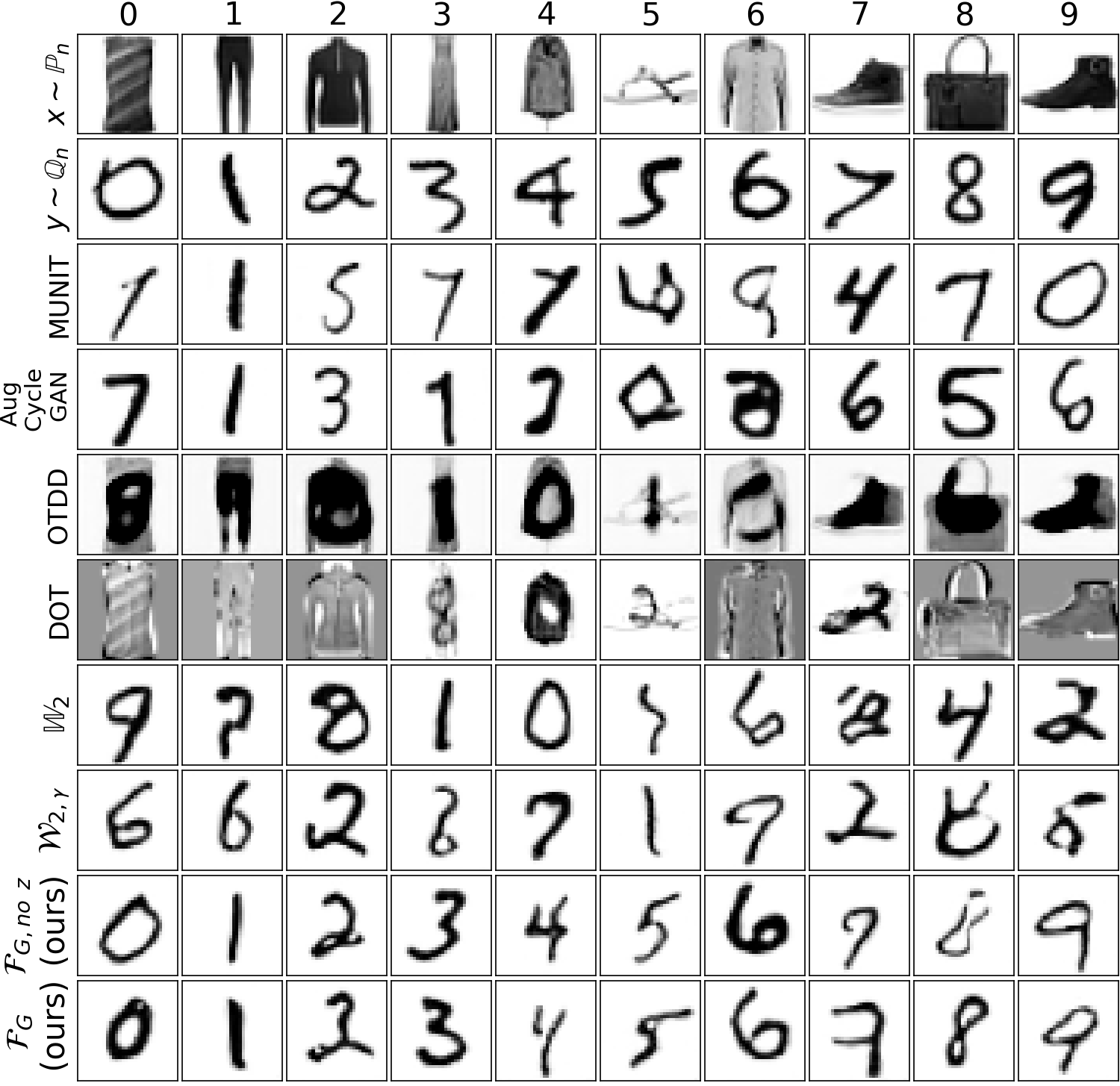}} 
    \label{fig:hadr-case_fm}
    \vspace{-3mm}
    \subfigure[\label{fig:mnist2mnistm} MNIST $\rightarrow$  MNIST-M]{\includegraphics[width=0.49\textwidth]{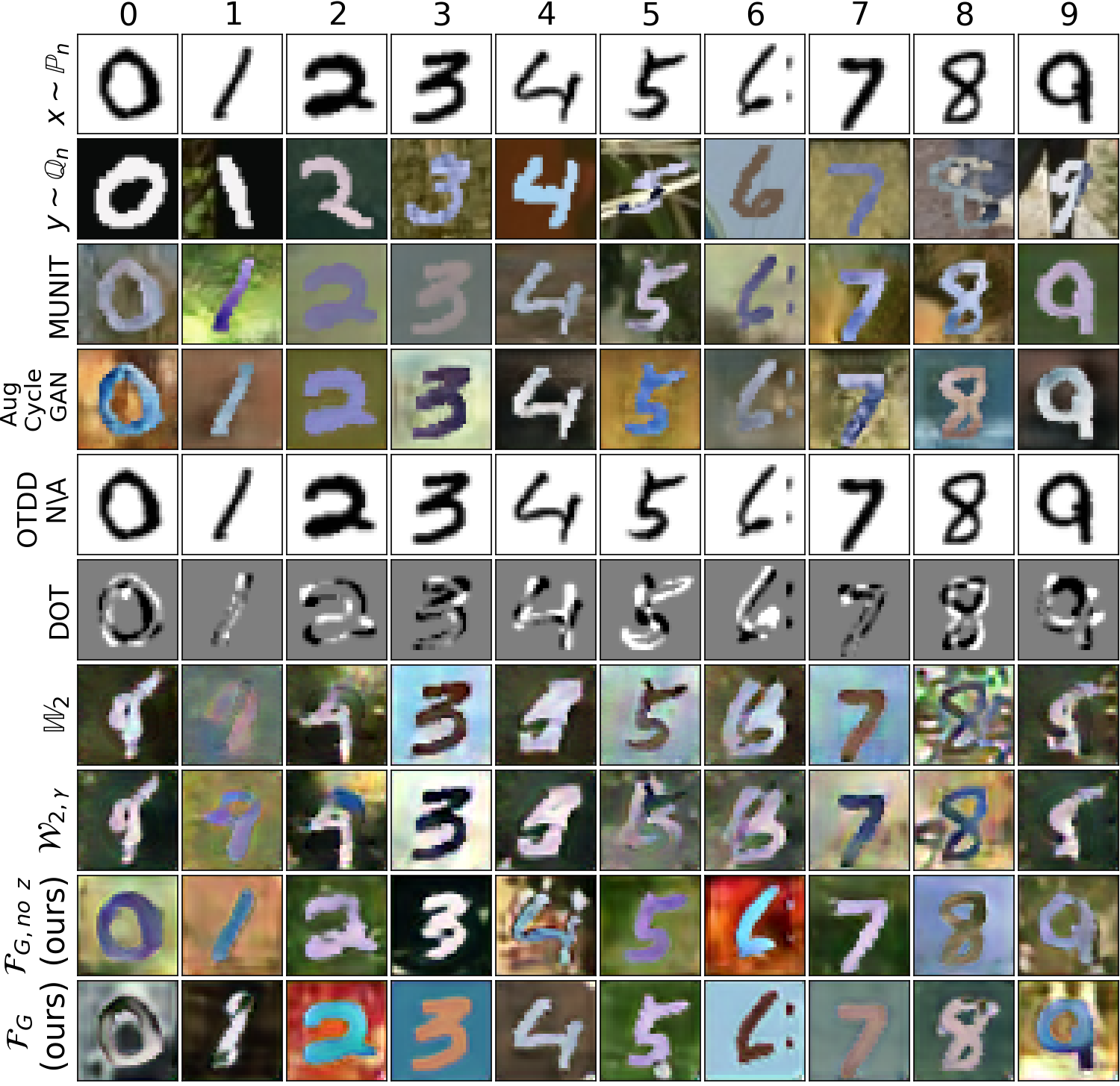}} 
    \caption{\label{fig:hard-case} 
     The results of class-preserving mapping between two unrelated (left) and related (right) datasets. Each column shows the transfer result of a random (test) input $x\sim\mathbb{P}_{n}$ (first row) from a particular class ($n=0,1,\dots,9$). Each row shows the results of transfer via a particular method. For methods which learn a stochastic map $T(x,z)$, we show their output $T(x,z)$ for a random noise $z$.\color{black}}
    \vspace{-5mm}
\end{center}
\end{figure*}
\begin{table}[t!]\centering
\begin{center}
\scriptsize
\begin{tabular}{c|c c |c |c |c c |c c }
\hline
 & \multicolumn{2}{c|}{\makecell{\textbf{Image-to-Image} \\\textbf{Translation}}} & \textbf{Flows} & \textbf{Discrete OT} & \multicolumn{4}{c}{\textbf{Neural Optimal Transport}} \\
 \hline
\textbf{Datasets} ($32\times 32$) & \textbf{MUNIT} & \makecell{\textbf{Aug}\\\textbf{CycleGAN}} & \textbf{OTDD} &  \textbf{SinkhornLpL1}  & \textbf{$\mathbb{W}_2$}    & \textbf{$\mathcal{W}_{2,\gamma}$}   & \makecell{$\mathcal{F}_{\text{G}}$, no $z$\\\textbf{[Ours]}}   & \makecell{$\mathcal{F}_{\text{G}}$\\\textbf{[Ours]}} \\
\hline
FMNIST $\rightarrow$ MNIST	    &8.93	&12.03 &10.28  &10.67  &10.96  &8.02  &\underline{82.79}   & \textbf{83.22} \\
MNIST $\rightarrow$ MNIST-M	    &97.95	&\textbf{98.2} &-  &83.26 &38.77 &37.0 &\underline{95.27}  &\underline{94.62}\\ 
\hline 
\end{tabular}
\end{center}
\vspace{-3mm}
\caption{Accuracy$\uparrow$ of the maps learned by the translation methods in view.}
\label{tab:accuracy}
\vspace{-4mm}
\end{table}
\begin{table}[t!]\centering
\begin{center}
\scriptsize
\begin{tabular}{c|c c |c|c|c c |c c }
\hline
\textbf{Datasets} ($32\times 32$) & \textbf{MUNIT} & \makecell{\textbf{Aug}\\\textbf{CycleGAN}} & \makecell{\textbf{OTDD}} & \makecell{\textbf{SinkhornLpL1}} & \textbf{$\mathbb{W}_2$}    & \textbf{$\mathcal{W}_{2,\gamma}$}   & \makecell{$\mathcal{F}_{\text{G}}$, no $z$\\\textbf{[Ours]}}   & \makecell{$\mathcal{F}_{\text{G}}$\\\textbf{[Ours]}} \\
\hline 
FMNIST $\rightarrow$ MNIST	    &7.91	&26.35  &$>$ 100 &$>$ 100 &7.51  &\underline{7.02}  &7.14   & \textbf{5.26} \\
MNIST $\rightarrow$ MNIST-M	    &\underline{11.68}	&26.87  &-   &$>$ 100 &19.43 &17.48 &18.56  & \textbf{6.67}\\ 
\hline 
\end{tabular}
\end{center}
\vspace{-3mm}
\caption{FID$\downarrow$ of the samples generated by the translation methods in view.}
\label{tab:FID}
\vspace{-7mm}
\end{table}

\textbf{Metrics.} All the models are fitted on the train parts of datasets; all the provided qualitative and quantitative results are \textit{exclusively} for test (unseen) data. 
To evaluate the visual quality, we compute FID \citep{heusel2017gans} of the entire mapped source test set w.r.t. the entire  target test set. To estimate the accuracy of the mapping we use a pre-trained ResNet18~\citep{he2016deep} classifier (with $95+$ accuracy) on the target data. We consider the mapping $T$ to be correct if the predicted label for the mapped sample $T(x,z)$ matches the corresponding label of $x$.

\textbf{Results.} Qualitative results are shown in Figure \ref{fig:hard-case}; FID, accuracies -- in Tables~\ref{tab:FID} and \ref{tab:accuracy}, respectively. To keep the figures simple, for all the models (one-to-one, one-to-many), we plot a single output per input. 
For completeness, in Appendices \ref{sec:additional_vis}, \ref{sec-appendix-stochastic} we show \underline{multiple outputs} per each input for our method, and in Appendix \ref{sec-appendix-zablation} we provide \underline{ablation study} on $Z$ size. Our method, general discrete OT and OTDD, use 10 labeled samples for each class in the target. Other baselines lack the capability to use label information. As seen in Figure \ref{fig:hard-case} and Table~\ref{tab:accuracy}, our approach
preserves the class-wise structure accurately with just 10 labelled samples per class. The accuracy of other neural OT methods is around $10\%$, equivalent to a random guess. Both the general discrete OT and OTDD methods do not preserve the class structure in high dimensions, resulting in samples with poor FID, see table~\ref{tab:FID}. Visually, the OTDD results are comparable to those in Figure 3 of \citep{alvarez2021dataset}.
\vspace{-4mm}\subsection{Pair-Guided Experiments}\vspace{-3mm}
\label{sec-paired-image-results}
\begin{figure}[t!]
\centering
    \includegraphics[width=14cm]{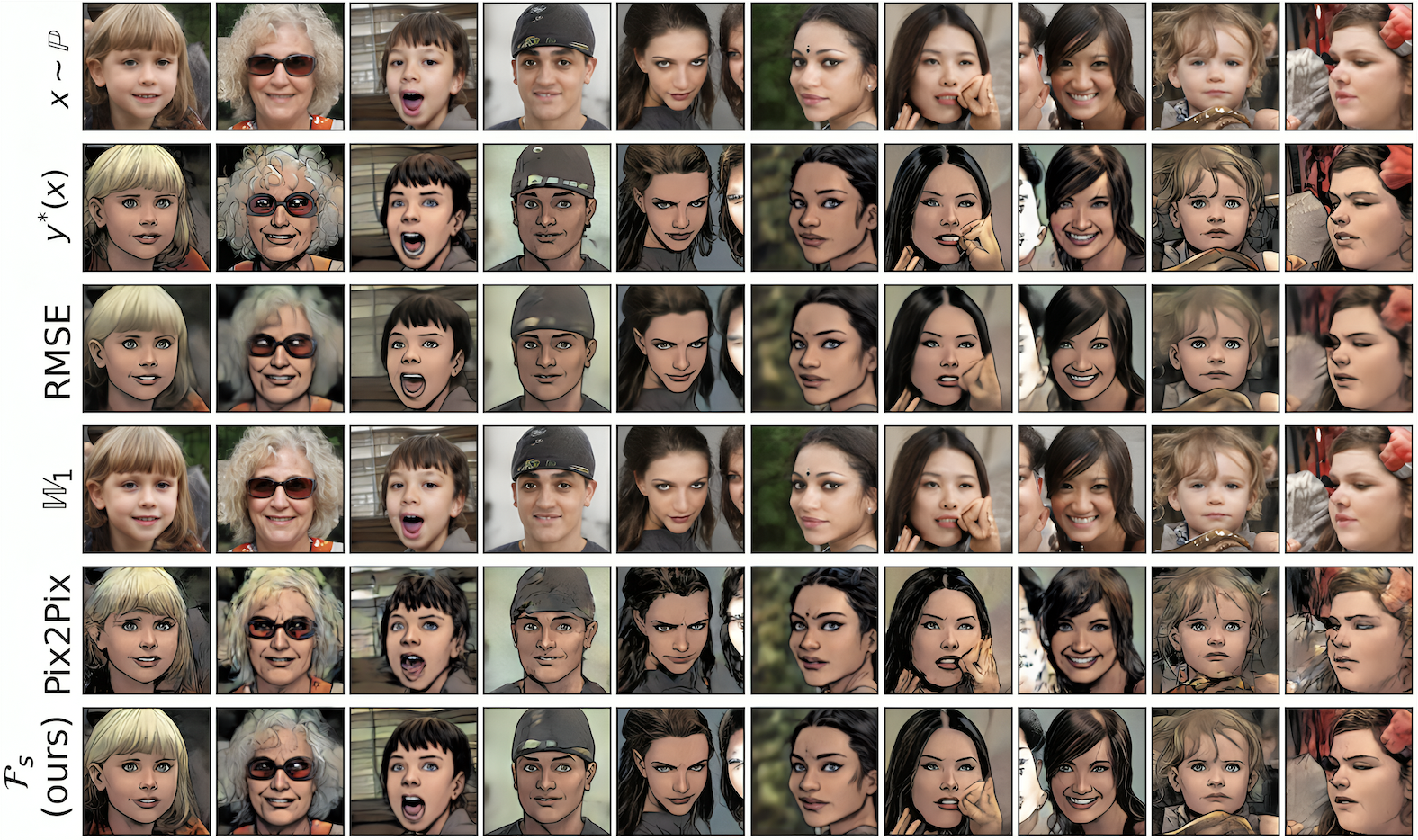}
    \caption{\centering  Results of our method with the \textit{pair-guided cost functional} in comparison to other methods, applied to the supervised image-to-image translation task (Comic-Faces-V1,  $256\times 256$).}
    \vspace{-4mm}
    \label{fig:comic_supervised}
\end{figure}
\textbf{Datasets and metrics.}\label{pair-guided-datasets} We utilize three popular datasets for our evaluation: Comic-Faces-V1, Edges-to-Shoes \citep{isola2017image}, and CelebAMask-HQ \citep{CelebAMask-HQ}. These datasets all contain pairs of images (handmade or formulated synthetically) and are commonly employed in benchmarking supervised image translation methods. For all experiments, we operated at the resolution of $256\times 256$ pixels. All the results (qualitative, quantitative) are on the test sets in accordance with the default train-test splits for the datasets. As the metric, we use the test FID (as in \wasyparagraph\ref{sec-image}).

\textbf{Baselines.} As the baselines, we consider the basic (unsupervised) NOT algorithm \citep{korotin2023neural}, RMSE regression, and well-celebrated supervised Pix2Pix method~\citep{isola2017image}. 

\textbf{Details.}\label{peir-guided-settings} In our method, we use U2Net as the transport map $T_{\theta}(x)$ and WGAN-QC discriminator's ResNet architecture \citep{he2016deep} for potential $v_{\omega}$. In Comic-Faces-V1 and Edges-to-Shoes experiments, we use RMSE as the function $\ell$ in our method. In CelebAMask-HQ case, we use a VGG-based perceptual loss. Other details are given in Appendix \wasyparagraph\ref{pair-guided-settings-appendix}. 

\textbf{Results.}\label{peir-guided-results} 
The evaluation results and comparisons with the baselines for the Comic-Faces-V1 and Edges-to-Shoes datasets are presented in Figure \ref{fig:comic_supervised} and Figure \ref{fig:shoes_supervised}, respectively. Additionally, the computed FID scores for these datasets are detailed in Table \ref{tab:FID_supervised} in the appendix. For CelebAMask-HQ dataset, we achieve the FID score of 21.1. The examples of generated images are provided in Figures \ref{fig:teaser_img} and \ref{fig:celeba_supervised_appendix}. Further qualitative experimentation is conducted on the Comic-Faces-V1 dataset at a higher resolution of $512 \times 512$, see Figure \ref{fig:comic_supervised_512}. Overall, the obtained results show that our method achieves competitive quality and can be further applied to high quality generation and editing tasks.

\vspace{-7mm}\section{Discussion}\vspace{-4mm}
\label{sec-discussion}
Our method is a generic tool to learn transport maps between data distributions with a task-specific cost functional $\mathcal{F}$. In general, the \textit{potential impact} of our work on society depends on the scope of its application in digital content creation. As a \textit{limitation}, we can consider the fact that to apply our method, one has to provide an estimator $\widehat{\mathcal{F}}(T)$ for the functional $\mathcal{F}$ which may be non-trivial. Besides, the construction of a cost functional $\mathcal{F}$ for a particular downstream task may be not straightforward. This should be taken into account when using the method in practice. Constructing task-specific functionals $\mathcal{F}$ and estimators $\widehat{\mathcal{F}}$ is a promising future research avenue.
\section{Acknowledgement}
The work was supported by the Analytical center under the RF Government (subsidy agreement 000000D730321P5Q0002, Grant No. 70-2021-00145 02.11.2021).

\bibliography{iclr2024_conference}
\bibliographystyle{iclr2024_conference}

\newpage

\newpage

\appendix
\onecolumn

\section{Proofs}
\label{sec-proofs}

\subsection{Proofs of Results of \wasyparagraph\ref{sec-general-functionals}}

To begin with, we recall some basics which play an important role in the further derivations. 

\textbf{Conjugate functional.} Let $\cF : \cM(\cX \times \cY) \rightarrow \bbR \cup \{\infty\}$ be a functional. The convex conjugate functional of $\cF$ is ${\mathcal{F}^{*}:C(\cX \times \cY)\!\rightarrow\!\mathbb{R}\cup\{\infty\}}$:
\vspace{-2mm}\newline\begin{gather*}
    \mathcal{F}^{*}(h) \defeq \sup_{\pi\in\mathcal{M}(\mathcal{X} \times \cY)} \left[\int_{\cX \times \cY}h(s)d\pi(s) - \mathcal{F}(\pi)\right].
\end{gather*}\vspace{-2mm}\newline


Let $u, v \in \cC(\cX), \cC(\cY)$. We write $u\oplus v\in\mathcal{C}(\mathcal{X}\times\mathcal{Y})$ to denote the function $u\oplus v:(x,y)\mapsto u(x)+v(y)$. The next definition is borrowed from \citep[Definition 2]{paty2020regularized}.

\begin{definition}[Separably *-increasing functional]\label{def-star-incr}
For $\mathcal{F}:\mathcal{M}(\mathcal{X}\times\mathcal{Y})\rightarrow\mathbb{R}$ we say that it is separably *-increasing if for all functions $u,v\in \mathcal{C}(\mathcal{X}),\mathcal{C}(\mathcal{Y})$ and any function $c\in\mathcal{C}(\mathcal{X}\times\mathcal{Y})$ from $u\oplus v\leq c\text{ (point-wise)}$ it follows $\mathcal{F}^{*}(u\oplus v)\leq\mathcal{F}^{*}(c).$
\end{definition}

\begin{proof}[Proof of Theorem \ref{thm-minmax-dual}] Since $\cF$ is convex, lsc and separably *-increasing functional, general OT problem \eqref{got-primal-form} permits the following dual representation \citep[Theorem 2]{paty2020regularized}:
\begin{equation}
    \text{Cost}(\mathbb{P},\mathbb{Q})=\sup_{u,v}\left[\int_{\mathcal{X}}u(x)d\mathbb{P}(x)+\int_{\mathcal{Y}}v(y)d\mathbb{Q}(y)-\mathcal{F}^{*}(u\oplus v)\right],
    \label{dual-form}
\end{equation}
where optimization is performed over $u,v\in\mathcal{C}(\mathcal{X}),\mathcal{C}(\mathcal{Y})$ which are called \textit{potentials}.
 We use the dual form \eqref{dual-form} to derive
\begin{eqnarray}
\mbox{\normalfont Cost}(\mathbb{P},\mathbb{Q})=\sup_{v}\left\lbrace \sup_{u}\left[\int_{\mathcal{X}}u(x)d\mathbb{P}(x)-\mathcal{F}^{*}(u\oplus v)\right]+\int_{\mathcal{Y}}v(y)d\mathbb{Q}(y)\right\rbrace=
\label{regroup-u}
\\
\sup_{v}\left\lbrace \sup_{u}\left[\int_{\mathcal{X}}u(x)d\mathbb{P}(x)-\sup_{\pi}\left( \int_{\mathcal{X}\times\mathcal{Y}}(u\oplus v)d\pi(x,y)-\mathcal{F}(\pi)\right)\right]+\int_{\mathcal{Y}}v(y)d\mathbb{Q}(y)\right\rbrace=
\label{subst-conj}
\\
\sup_{v}\left\lbrace \sup_{u}\left[\int_{\mathcal{X}}u(x)d\mathbb{P}(x)+\inf_{\pi}\left(\mathcal{F}(\pi)- \int_{\mathcal{X}\times\mathcal{Y}}(u\oplus v)d\pi(x,y)\right)\right]+\int_{\mathcal{Y}}v(y)d\mathbb{Q}(y)\right\rbrace=
\label{sign-change}
\\
\sup_{v}\left\lbrace \sup_{u}\inf_{\pi}\left(\mathcal{F}(\pi)- \int_{\mathcal{X}}u(x)d\big(\pi-\mathbb{P})(x)-\int_{\mathcal{Y}}v(y)d\pi(y)\right)+\int_{\mathcal{Y}}v(y)d\mathbb{Q}(y)\right\rbrace\leq
\label{unglue}
\\
\sup_{v}\left\lbrace \sup_{u}\inf_{\pi\in\Pi(\mathbb{P})}\left(\mathcal{F}(\pi)- \int_{\mathcal{X}}u(x)d\big(\pi-\mathbb{P})(x)-\int_{\mathcal{Y}}v(y)d\pi(y)\right)+\int_{\mathcal{Y}}v(y)d\mathbb{Q}(y)\right\rbrace=
\label{restrict-pi-semiplans}
\\
\sup_{v}\left\lbrace \sup_{u}\inf_{\pi\in\Pi(\mathbb{P})}\left(\mathcal{F}(\pi)-\int_{\mathcal{Y}}v(y)d\pi(y)\right)+\int_{\mathcal{Y}}v(y)d\mathbb{Q}(y)\right\rbrace=
\label{u-vanish}
\\
\sup_{v}\left\lbrace \inf_{\pi\in\Pi(\mathbb{P})}\left(\mathcal{F}(\pi)-\int_{\mathcal{Y}}v(y)d\pi(y)\right)+\int_{\mathcal{Y}}v(y)d\mathbb{Q}(y)\right\rbrace\leq 
\label{max-u-vanish}
\\
\sup_{v}\bigg\lbrace \mathcal{F}(\pi^{*})-\int_{\mathcal{Y}}v(y)\underbrace{d\pi^{*}(y)}_{d\mathbb{Q}(y)}+\int_{\mathcal{Y}}v(y)d\mathbb{Q}(y)\bigg\rbrace=\mathcal{F}(\pi^{*})=\text{Cost}(\mathbb{P},\mathbb{Q}).
\label{cost-upper-bound}
\end{eqnarray}
In line \eqref{regroup-u}, we group the terms involving the potential $u$. In line \eqref{subst-conj}, we express the conjugate functional $\mathcal{F}^{*}$ by using its definition. In the transition to line \eqref{sign-change}, we replace $\inf_{\pi}$ operator with the equivalent $\sup_{\pi}$ operator with the changed sign. In transition to \eqref{unglue}, we put the term $\int_{\mathcal{X}}u(x)d\mathbb{P}(x)$ under the $\inf_{\pi}$ operator; we use definition $(u\oplus v)(x,y)=u(x)+v(y)$ to split the integral over $\pi(x,y)$ into two separate integrals over $\pi(x)$ and $\pi(y)$ respectively. In transition to \eqref{restrict-pi-semiplans}, we restrict the inner $\inf_{\pi}$ to probability distributions $\pi\in \Pi(\mathbb{P})$ which have $\mathbb{P}$ as the first marginal, i.e. $d\pi(x)=d\mathbb{P}(x)$. This provides an upper bound on \eqref{unglue}, in particular, all $u$-dependent terms vanish, see \eqref{u-vanish}. As a result, we remove the $\sup_{u}$ operator in line \eqref{max-u-vanish}. In transition to line \eqref{cost-upper-bound} we substitute an optimal plan $\pi^{*}\in\Pi(\mathbb{P},\mathbb{Q})\subset \Pi(\mathbb{Q})$ to upper bound \eqref{max-u-vanish}. Since $\text{Cost}(\mathbb{P},\mathbb{Q})$ turns to be both an upper bound \eqref{cost-upper-bound} and a lower bound \eqref{regroup-u} for \eqref{max-u-vanish}, we conclude that: 
\begin{equation}
    \text{Cost}(\bbP, \bbQ) = \sup_{v}\!\!\inf_{\pi\in\Pi(\mathbb{P})}\!\left\lbrace \mathcal{F}(\pi)-\int_{\mathcal{Y}}\!\!v(y)d\pi(y)+\int_{\mathcal{Y}}v(y)d\mathbb{Q}(y)\right\rbrace \defeq \sup_v\!\!\inf_{\pi\in\Pi(\bbP)}\! \cL_p(v, \pi). \label{max-min-noT}
\end{equation}
The desired equation \eqref{min-max-dual} is obtained by replacing plans $\pi \in \Pi(\bbP)$ in \eqref{max-min-noT} with their stochastic map representations $T_{\pi}$, see the first paragraph of \S \ref{sec-max-min}
\end{proof}

\begin{proof}[Proof of Theorem \ref{thm-arginf}]
Assume that $T^{*}\notin \arginf_{T}\mathcal{L}(v^{*}, T)$. This yields
\begin{eqnarray*}
    \cL(v^*, T^*) > \inf_{T} \cL(v^*, T) = \text{Cost}(\bbP, \bbQ).
\end{eqnarray*}
On the other hand, by substituting $T^*$ with the optimal OT plan $\pi_{T^*}$ we write:
\begin{eqnarray*}
    \cL(v^*, T^*) = \cF(\pi_{T^*}) - \int_{\cY} v^*(y) \underbrace{d \pi_{T^*}(y)}_{d \bbQ(y)} + \int_{\cX} v^*(y) d \bbQ(y) = \cF(\pi_{T^*}) = \text{Cost}(\bbP, \bbQ).
\end{eqnarray*}
which is a contradiction. Thus, the assumption is wrong and \eqref{arginf-v} holds.

Let $\cF$ be strictly convex, i.e. \eqref{got-primal-form} permits a unique OT plan $\pi^*$. From the first part of the theorem, it holds that the corresponding stochastic map $T_{\pi^*}$ solves \eqref{arginf-v}, i.e., $T_{\pi^*} \in \arginf_{T}\mathcal{L}(v^{*}, T)$. Consequently, $\pi^* \in \inf_{\pi \in \Pi(\bbP)}\cL_p(v^*, \pi)$, see \eqref{max-min-noT}. Let $T \in \arginf_T\cL(v^*, T)$ be another stochastic map solving \eqref{arginf-v}. Similarly, $\pi_{T} \in \inf_{\pi \in \Pi(\bbP)}\cL_p(v^*, \pi)$. Since $\cF$ is strictly convex, then $\cL_p(v^*, \pi)$ is also strictly convex as a functional of $\pi$. The latter immediately yields that $\pi^* = \pi_{T}$, i.e, $T$ is a stochastic OT map. Combining this conclusion with the first part of the theorem, we derive:
\vspace{-4mm}\newline\begin{eqnarray*}
    T^* \in \arginf_{T} \cL(v^*, T) \Leftrightarrow \pi_{T^*} \text{ is OT plan},
\end{eqnarray*}\vspace{-3mm}\newline
which finishes the proof.
\end{proof}

\begin{definition}[Strongly convex functional w.r.t. metric $\rho(\cdot, \cdot)$]\label{def-strongconv}
Let $\mathcal{F}:\mathcal{M}(\mathcal{X}\times\mathcal{Y})\rightarrow\mathbb{R}\cup\{+\infty\}$ be a convex lower semi-continuous functional. Let $\mathcal{U} \subset \mathcal{P}(\mathcal{X}\times\mathcal{Y}) \subset \mathcal{M}(\mathcal{X}\times\mathcal{Y})$ be a convex subset such that $\exists \pi \in \mathcal{U}: \mathcal{F}(\pi) < + \infty$. Functional $\mathcal{F}$ is called $\beta$-strongly convex on $\mathcal{U}$ w.r.t. metric $\rho(\cdot, \cdot)$ if $\forall \pi_1, \pi_2 \in \mathcal{U} , \forall \alpha \in [0, 1]$ it holds:
\begin{equation}
    \mathcal{F}(\alpha \pi_1 + (1 - \alpha) \pi_2) \leq \alpha \mathcal{F}(\pi_1) + (1 - \alpha) \mathcal{F}(\pi_2) - \frac{\beta}{2} \alpha (1 - \alpha) \rho^2(\pi_1, \pi_2).\label{def-strong-conv}
\end{equation}
\end{definition}

\begin{lemma}[Property of minimizers of strongly convex cost functionals]\label{lm-minprop}
Consider a lower-semicontinuous $\beta$-strongly convex in metric $\rho(\cdot, \cdot)$ on $\mathcal{U} \subset \mathcal{P}(\mathcal{X}\times\mathcal{Y})$ functional $\mathcal{F}$. Assume that $\pi^{*}\in\mathcal{U}$ satisfies $\mathcal{F}(\pi^*) =\inf\limits_{\pi \in \mathcal{U}} \mathcal{F}(\pi)$. Then $\forall \pi \in \mathcal{U}$ it holds:
\begin{equation}
    \mathcal{F}(\pi^*) \leq \mathcal{F}(\pi) - \frac{\beta}{2} \rho^2(\pi^*, \pi). \label{formula-minprop}
\end{equation}\vspace{-3mm}
\end{lemma}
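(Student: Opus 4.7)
The plan is to exploit the defining inequality of strong convexity along the segment between $\pi^*$ and an arbitrary $\pi \in \mathcal{U}$, then use the minimality of $\pi^*$ to cancel terms, and finally pass to the limit as the interpolation parameter tends to zero.

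Concretely, I would fix $\pi \in \mathcal{U}$ and for $\alpha \in [0,1]$ consider the convex combination $\pi_\alpha \defeq (1-\alpha)\pi^* + \alpha \pi$, which lies in $\mathcal{U}$ by convexity of the domain. Applying the strong convexity inequality \eqref{def-strong-conv} with $\pi_1 = \pi^*$, $\pi_2 = \pi$ gives
$$
\mathcal{F}(\pi_\alpha) \leq (1-\alpha)\mathcal{F}(\pi^*) + \alpha \mathcal{F}(\pi) - \tfrac{\beta}{2}\alpha(1-\alpha)\rho^2(\pi^*, \pi).
$$
Since $\pi^*$ is a minimizer of $\mathcal{F}$ on $\mathcal{U}$ and $\pi_\alpha \in \mathcal{U}$, we have $\mathcal{F}(\pi^*) \leq \mathcal{F}(\pi_\alpha)$. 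Substituting this lower bound into the above inequality and cancelling $(1-\alpha)\mathcal{F}(\pi^*)$ on both sides yields
$$
\alpha \mathcal{F}(\pi^*) \leq \alpha \mathcal{F}(\pi) - \tfrac{\beta}{2}\alpha(1-\alpha)\rho^2(\pi^*,\pi).
$$

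For $\alpha \in (0,1]$ I divide through by $\alpha$ to get
$$
\mathcal{F}(\pi^*) \leq \mathcal{F}(\pi) - \tfrac{\beta}{2}(1-\alpha)\rho^2(\pi^*,\pi),
$$
and finally letting $\alpha \to 0^+$ recovers the claimed bound \eqref{formula-minprop}. Note that lower semi-continuity of $\mathcal{F}$ is not actually needed for this argument once $\pi^*$ is assumed to realize the infimum; it would become relevant only if one had to argue first that such a minimizer exists, which is not part of this lemma's statement.

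There is no real obstacle here: the proof is a one-shot application of the strong convexity inequality combined with optimality of $\pi^*$. The only point requiring a moment of care is that we must divide by $\alpha$ before taking the limit (otherwise the inequality becomes trivial), which is why the interpolation parameter is introduced in the first place rather than simply setting $\alpha = \tfrac{1}{2}$.
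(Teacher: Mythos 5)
Your proposal is correct and follows essentially the same argument as the paper: apply the strong convexity inequality along the segment between $\pi^*$ and $\pi$, use minimality of $\pi^*$ to bound $\mathcal{F}$ at the interpolated point, cancel, divide by the interpolation weight, and pass to the limit (the paper merely labels the parameter so that the limit is $\alpha\to 1^-$ instead of your $\alpha\to 0^+$). Your side remark that lower semi-continuity is not needed once the minimizer is assumed to exist is also consistent with the paper, which invokes it only later (via Weierstrass) to guarantee existence of minimizers in the proof of the error-bound theorem.
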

\begin{proof}[Proof of Lemma \ref{lm-minprop}] We substitute $\pi_1 = \pi^*$, $\pi_2 = \pi$ to formula \eqref{def-strong-conv} and fix $\alpha \in [0, 1]$. We obtain
\begin{eqnarray}\mathcal{F}(\alpha \pi^* + (1 - \alpha)\pi) \leq \alpha \mathcal{F}(\pi^*) + (1 - \alpha) \mathcal{F}(\pi) - \frac{\beta}{2} \alpha (1 - \alpha) \rho^2(\pi^*, \pi) \Longleftrightarrow
\nonumber
\\
\underbrace{\mathcal{F}(\alpha \pi^* + (1 - \alpha)\pi)}_{\geq \inf\limits_{\pi' \in \mathcal{U}} \mathcal{F}(\pi')=\mathcal{F}(\pi^*)} - \alpha \mathcal{F}(\pi^*) \leq  (1 - \alpha) \mathcal{F}(\pi) - \frac{\beta}{2} \alpha (1 - \alpha) \rho^2(\pi^*, \pi) \Longrightarrow
\nonumber
\\
(1 - \alpha) \mathcal{F}(\pi^*) \leq  (1 - \alpha) \mathcal{F}(\pi) - \frac{\beta}{2} \alpha (1 - \alpha) \rho^2(\pi^*, \pi) \Longleftrightarrow
\nonumber
\\
\mathcal{F}(\pi^*) \leq  \mathcal{F}(\pi) - \frac{\beta}{2} \alpha \rho^2(\pi^*, \pi). \label{just-take-limit}
\end{eqnarray}
Taking the limit $\alpha \rightarrow 1-$ in inequality \eqref{just-take-limit}, we obtain \eqref{formula-minprop}.
\end{proof}

\begin{theorem}[Error analysis via duality gaps]\label{thm-errest-plans}
In the conditions of Theorem \ref{thm-errest}, consider the duality gaps for an approximate solution $(\hat{v}, \hat{\pi}) \in \mathcal{C}(\mathcal{Y}) \times \Pi(\mathbb{P})$ of \eqref{max-min-noT}:

\vspace{-3mm}\noindent\begin{minipage}{0.45\textwidth}
\begin{equation}\vspace{1.9mm}
      \epsilon_1(\hat{v}, \hat{\pi}) \!\stackrel{\mbox{\scriptsize \normalfont def}}{=} \!\mathcal{L}_p(\hat{v}, \hat{\pi}) -\!\!\! \inf\limits_{\pi \in \Pi(\mathbb{P})} \!\!\mathcal{L}_p(\hat{v}, \pi),\!\!\! \label{e1-definition}
  \end{equation}
    \end{minipage}%
    \begin{minipage}{0.53\textwidth}
  \begin{equation}\vspace{2mm}
    \epsilon_2(\hat{v}) \stackrel{\mbox{\scriptsize \normalfont def}}{=} \sup\limits_{v} \!\!\inf\limits_{\pi \in \Pi(\mathbb{P})} \!\!\mathcal{L}_p(v, \pi) -\!\!\!\!\! \inf\limits_{\pi \in \Pi(\mathbb{P})}\!\! \mathcal{L}_p(\hat{v}, \pi),\!\!\! \label{e2-definition}
  \end{equation}
    \end{minipage}
which are the errors of solving the outer $\sup_{v}$ and inner $\inf_{\pi}$ problems in \eqref{max-min-noT}, respectively. Then for the OT plan $\pi^{*}$ in \eqref{got-primal-form} between $\mathbb{P}$ and $\mathbb{Q}$ the following inequality holds
\begin{equation}\rho(\hat{\pi}, \pi^*) \leq \sqrt{\frac{2}{\beta}} \left(\sqrt{\epsilon_1(\hat{v}, \hat{\pi})} +  \sqrt{\epsilon_2(\hat{v})}\right),\label{formula-errest}
\end{equation}
i.e., the sum of the roots of duality gaps upper bounds the error of the plan $\hat{\pi}$ w.r.t. $\pi^{*}$ in $\rho(\cdot, \cdot)$.
\end{theorem}\vspace{-1.5mm}

\begin{proof}[Proof of Theorem \ref{thm-errest-plans}] 

\par Given a potential $v \in \mathcal{C}(\mathcal{Y})$, we define functional $\mathcal{V}_{v} : \Pi(\mathbb{P}) \rightarrow \mathbb{R} \cup \{ + \infty\}$: 
\begin{equation}
    \mathcal{V}_{v}(\pi) \stackrel{\text{def}}{=} \mathcal{F}(\pi) - \int_{\mathcal{Y}} v(y) d \pi(y). \label{Vv-definition}
\end{equation} 

Since the term $\int_{\mathcal{Y}} v(y) d \pi(y)$ is linear w.r.t. $\pi$, the $\beta$-strong convexity of $\mathcal{F}$ implies $\beta$-strong convexity of $\mathcal{V}_{v}$. Moreover, since $\mathcal{V}_{v}$ is lower semi-continuous and $\Pi(\mathbb{P})$ is compact (w.r.t. weak-$*$ topology), it follows from the Weierstrass theorem \citep[Box 1.1]{santambrogio2015optimal} that 
\begin{equation}
    \exists \pi^{v} \in \Pi(\mathbb{P}) : \mathcal{V}_{v}(\pi^{v}) = \inf\limits_{\pi \in \Pi(\mathbb{P})} \mathcal{V}_{v}(\pi); \label{piv-definition}
\end{equation} 
i.e. the infimum of $\mathcal{V}_{v}(\pi)$ is attained. Note that $\pi^{v}$ minimizes the functional $\pi \mapsto \mathcal{L}(v, \pi)$ as well since $\mathcal{L}(v, \pi) = \mathcal{V}_v(\pi) + \text{Const}(v)$. Therefore, the duality gaps \eqref{e1-definition}, \eqref{e2-definition} permit the following reformulation: 
\begin{gather}
    \epsilon_1(\hat{v}, \hat{\pi}) = \mathcal{L}_p(\hat{v}, \hat{\pi}) - \mathcal{L}_p(\hat{v}, \pi^{\hat{v}}) \label{duality-gap-e1},\\
    \epsilon_2(\hat{v}) = \mathcal{L}_p(v^*, \pi^*) - \mathcal{L}_p(\hat{v}, \pi^{\hat{v}}); \label{duality-gap-e2}
\end{gather}
where $\pi^{\hat{v}}$ is a minimizer \eqref{piv-definition} for $v = \hat{v}$. Consider expression \eqref{duality-gap-e1}:
\begin{eqnarray}
\epsilon_1(\hat{v}, \hat{\pi}) = \mathcal{L}_p(\hat{v}, \hat{\pi}) - \mathcal{L}_p(\hat{v}, \pi^{\hat{v}}) \stackrel{\text{Lemma \ref{lm-minprop}}}{\geq} \frac{\beta}{2}\rho^2(\hat{\pi}, \pi^{\hat{v}}) \Longrightarrow
\nonumber
\\
\sqrt{\frac{2}{\beta} \epsilon_1(\hat{v}, \hat{\pi})} \geq  \rho(\hat{\pi}, \pi^{\hat{v}}). \label{e1-bound-estimate}
\end{eqnarray}
Consider expression \eqref{duality-gap-e2}:
\begin{eqnarray}
\epsilon_2(\hat{v}) = \mathcal{L}_p(v^*, \pi^*) - \mathcal{L}_p(\hat{v}, \pi^{\hat{v}}) = 
\nonumber
\\
\mathcal{F}(\pi^*) - \int_{\mathcal{Y}} v^*(y) d \big{(}\pi^* - \mathbb{Q}\big)(y) - \mathcal{F}(\pi^{\hat{v}}) + \int_{\mathcal{Y}} \hat{v}(y) d \big(\pi^{\hat{v}} - \mathbb{Q}\big)(y) =
\nonumber
\\
\mathcal{F}(\pi^*) -  \int_{\mathcal{Y}} \hat{v}(y) d (\pi^* - \mathbb{Q})(y) + \int_{\mathcal{Y}} \{\hat{v}(y) - v^*(y)\} d \big(\pi^* - \mathbb{Q}\big)(y) - \mathcal{F}(\pi^{\hat{v}}) + \int_{\mathcal{Y}} \hat{v}(y) d \big(\pi^{\hat{v}} - \mathbb{Q}\big)(y) = \label{thm-errest-subst-formula}
\\
\underbrace{\mathcal{F}(\pi^*) -  \int_{\mathcal{Y}} \hat{v}(y) d \pi^*(y)}_{=\mathcal{V}_{\hat{v}}(\pi^*)} + \underbrace{\int_{\mathcal{Y}} \{\hat{v}(y) - v^*(y)\} d \big(\pi^* - \mathbb{Q}\big)(y)}_{= 0 \text{, since } d \pi^*(y) = d \mathbb{Q}(y)} \underbrace{- \mathcal{F}(\pi^{\hat{v}}) + \int_{\mathcal{Y}} \hat{v}(y) d \pi^{\hat{v}}(y)}_{= - \mathcal{V}_{\hat{v}}(\pi^{\hat{v}})} = 
\nonumber
\\
\mathcal{V}_{\hat{v}}(\pi^*) - \mathcal{V}_{\hat{v}}(\pi^{\hat{v}}) \stackrel{\text{Lemma \ref{lm-minprop}}}{\geq} \frac{\beta}{2} \rho^2(\pi^*, \pi^{\hat{v}}) \Longrightarrow
\nonumber
\\
\sqrt{\frac{2}{\beta} \epsilon_2(\hat{v})} \geq  \rho(\pi^*, \pi^{\hat{v}}); \label{e2-bound-estimate}
\end{eqnarray}
where in line \eqref{thm-errest-subst-formula} we add and subtract $\int_{\mathcal{Y}} \hat{v}(y) d (\pi^* - \mathbb{Q})(y)$.

The triangle inequality $\rho(\pi^*, \hat{\pi}) \leq \rho( \pi^*, \pi^{\hat{v}}) + \rho(\hat{\pi}, \pi^{\hat{v}}) = \sqrt{\frac{2}{\beta}} \left(\sqrt{\epsilon_1(\hat{v}, \hat{\pi})} +  \sqrt{\epsilon_2(\hat{v})}\right)$ for norm $\rho(\cdot, \cdot)$ finishes the proof.
\end{proof}

\begin{proof}[Proof of Theorem \ref{thm-errest}]
The statement directly follows from Theorem \ref{thm-errest-plans} by substituting plans $\pi \in \Pi(\bbP)$ with their stochastic map representations $T_{\pi}$, see the first paragraph of \S\ref{sec-max-min}.
\end{proof}

\subsection{Proofs of Results of \wasyparagraph\ref{sec-guided-learning}}

\begin{proof}[Proof of Theorem \ref{prop-fg-convex}] First, we prove that $\mathcal{F}=\mathcal{F}_{\text{G}}$ it is *-separately increasing. For $\pi \in\mathcal{M}(\mathcal{X}\times\mathcal{Y})\setminus \Pi(\mathbb{P})$ it holds that
$\mathcal{F}(\pi)=+\infty$. Consequently, 
\begin{equation}
    \int_{\mathcal{X}\times\mathcal{Y}}c(x,y)d\pi(x,y)-\mathcal{F}(\pi)=\int_{\mathcal{X}\times\mathcal{Y}}\big(u(x)+v(y)\big)d\pi(x,y)-\mathcal{F}(\pi)=-\infty.
    \label{F-conj-inf}
\end{equation}
When $\pi \in \Pi(\mathbb{P})$ it holds that $\pi$ is a probability distribution. We integrate $u(x)+v(y)\leq c(x,y)$ w.r.t. $\pi$, subtract $\mathcal{F}(\pi)$ and obtain
\begin{equation}
    \int_{\mathcal{X}\times\mathcal{Y}}c(x,y)d\pi(x,y)-\mathcal{F}(\pi)\geq\int_{\mathcal{X}\times\mathcal{Y}}\big(u(x)+v(y)\big)d\pi(x,y)-\mathcal{F}(\pi).
    \label{F-conj-not-greater}
\end{equation}
By taking the $\sup$ of \eqref{F-conj-inf} and \eqref{F-conj-not-greater} w.r.t. $\pi\in\mathcal{M}(\mathcal{X}\times\mathcal{Y})$, we obtain $\mathcal{F}^{*}(c)\geq \mathcal{F}^{*}(u\oplus v)$.\footnote{The proof is generic and works for any functional which equals $+\infty$ outside $\pi\in\mathcal{P}(\mathcal{X}\times\mathcal{Y})$.}

Next, we prove that $\mathcal{F}$ is convex. We prove that every term $\mathcal{E}^{2}\big(T_{\pi}\sharp(\mathbb{P}_{n}\times\mathbb{S}),\mathbb{Q}_{n}\big)$ is convex in $\pi$.

\textbf{First}, we show that $\pi\mapsto f_{n}(\pi)\stackrel{def}{=} T_{\pi}\sharp(\mathbb{P}_{n}\times\mathbb{S})$ is \textit{linear} in $\pi\in\Pi(\mathbb{P})$.

Pick any ${\pi_{1},\pi_{2},\pi_{3}\in\Pi(\mathbb{P})}$ which lie on the same line. Without loss of generatity we assume that $\pi_{3}\in [\pi_{1},\pi_{2}]$, i.e., $\pi_{3}=\alpha \pi_{1}+(1-\alpha)\pi_{2}$ for some $\alpha\in [0,1]$. We need to show that
\begin{equation}
    f_{n}(\pi_{3})=\alpha f_{n}(\pi_{1})+(1-\alpha) f_{n}(\pi_{2}).
    \label{f-is-linear}
\end{equation}
In what follows, for a random variable $U$ we denote its distribution by $\text{Law}(U)$.

The first marginal distribution of each $\pi_{i}$ is $\mathbb{P}$. From the glueing lemma \citep[\wasyparagraph 1]{villani2008optimal} it follows that there exists a triplet of (dependent) random variables $(X, Y_{1}, Y_{2})$ such that $\text{Law}(X, Y_{i})=\pi_{i}$ for $i=1,2$. We define ${Y_{3}=Y_{r}}$, where $r$ is an \textit{independent} random variable which takes values in $\{1,2\}$ with probabilities $\{\alpha,1-\alpha\}$. From the construction of $Y_{3}$ it follows that $\text{Law}(X,Y_{3})$ is a mixture of $\text{Law}(X,Y_{1})=\pi_{1}$ and $\text{Law}(X,Y_{2})=\pi_{2}$ with weights $\alpha$ and $1-\alpha$. Thus, $\text{Law}(X,Y_{3})=\alpha \pi_{1}+(1-\alpha)\pi_{2}=\pi_{3}$. We conclude that $\text{Law}(Y_{3}|X\!=\!x)=\pi_{3}(\cdot|x)$ for $\mathbb{P}$-almost all $x\in\mathcal{X}$ (recall that $\text{Law}(X)\!=\!\mathbb{P}$). On the other hand, again by the construction, the conditional $\text{Law}(Y_{3}|X\!=\!x)$ is a mixture of $\text{Law}(Y_{1}|X\!=\!x)=\pi_{1}(\cdot|x)$ and $\text{Law}(Y_{2}|X\!=\!x)=\pi_{2}(\cdot|x)$ with weights $\alpha$ and $1-\alpha$. Thus, $\pi_{3}(\cdot|x)=\alpha \pi_{1}(\cdot|x)+(1-\alpha)\pi_{2}(\cdot|x)$ holds true for $\mathbb{P}$-almost all $x\in\mathcal{X}$.

Consider independent random variables $X_{n}\sim\mathbb{P}_{n}$ and $Z\sim\mathbb{S}$. From the definition of $T_{\pi_{i}}$ we conclude that $\text{Law}\big(T_{\pi_{i}}(x,Z)\big)=\pi_{i}(\cdot|x)$ for $\mathbb{P}$-almost all $x\in\mathcal{X}$ and, since $\mathbb{P}_{n}$ is a component of $\mathbb{P}$, for $\mathbb{P}_{n}$-almost all $x\in\mathcal{X}$ as well. As a result, we define $T_{i}=T_{\pi_{i}}(X_{n},Z)$ and derive
\begin{eqnarray}
    \text{Law}(T_{3}|X_{n}=x)=\pi_{3}(\cdot|x)=\alpha \pi_{1}(\cdot|x)+(1-\alpha)\pi_{2}(\cdot|x)=
    \nonumber
    \\
    \alpha\text{Law}(T_{1}|X_{n}=x)+(1-\alpha)\text{Law}(T_{2}|X_{n}=x)
    \nonumber
\end{eqnarray}
for $\mathbb{P}_{n}$-almost all $x\in\mathcal{X}$. Thus, $\text{Law}(X_{n},T_{3})$ is also a mixture of $\text{Law}(X_{n},T_{1})$ and $\text{Law}(X_{n},T_{2})$ with weights $\alpha$ and $1-\alpha$. In particular, $\text{Law}(T_{3})=\alpha \text{Law}(T_{1})+(1-\alpha) \text{Law}(T_{2})$. We note that $\text{Law}(T_{i})=f_{n}(\pi_{i})$ by the definition of $f_{n}$  and obtain \eqref{f-is-linear}.

\textbf{Second}, we highlight that for every $\nu\in\mathcal{P}(\mathcal{Y})$, the functional $\mathcal{P}(\mathcal{Y})\ni\mu\rightarrow\mathcal{E}^{2}(\mu,\nu)$ is convex in $\mu$. Indeed, $\mathcal{E}^{2}$ is a particular case of (the square of) Maximum Mean Discrepancy (MMD, \citep{sejdinovic2013equivalence}). Therefore, there exists a Hilbert space $\mathcal{H}$ and a function $\phi:\mathcal{Y}\rightarrow\mathcal{H}$ (feature map), such that

$$\mathcal{E}^{2}(\mu,\nu)=\left\|\int_{\mathcal{Y}}\phi(y)d\mu(y)-\int_{\mathcal{Y}}\phi(y)d\nu(y)\right\|^{2}_{\mathcal{H}}.$$
Since the kernel mean embedding $\mu\mapsto \int_{\mathcal{Y}}\phi(y)d\mu(y)$ is linear in $\mu$ and $\|\cdot\|^{2}_{\mathcal{H}}$ is convex, we conclude that $\mathcal{E}^{2}(\mu,\nu)$ is convex in $\mu$. To finish this part of the proof, it remains to combine the fact that $\pi\mapsto T_{\pi}\sharp(\mathbb{P}_{n}\times\mathbb{S})$ is linear and $\mathcal{E}^{2}(\cdot,\mathbb{Q}_{n})$ is convex in the first argument.

\textbf{Third}, we note that the lower semi-continuity of $\mathcal{F}(\pi)$ in $\Pi(\mathbb{P})$ follows from the lower semi-continuity of the Energy distance ($\mathcal{E}^{2}$) terms in \eqref{da-func}. That is, it suffices to show that $\mathcal{E}^{2}$ defined in equation \eqref{energy-distance} is indeed lower semi-continuous in the first argument $\mathbb{Q}_{1}$. In \eqref{energy-distance}, there are two terms depending on $\mathbb{Q}_{1}$. The term $\mathbb{E}\|X_1-X_1'\|_{2}=\int_{\mathcal{X}}\big[\int_{\mathcal{X}}\|x_{1}-x_{2}\|_{2}d\mathbb{Q}_{2}(x_{2})\big]d\mathbb{Q}_{1}(x_1)$ is linear in $\mathbb{Q}_{1}$. It is just the expectation of a continuous function w.r.t. $\mathbb{Q}_{1}$. Hence it is lower semi-continuous by the definition of the lower semi-continuity. Here we also use the fact that $\mathcal{Y}$ is compact. The other term $-\frac{1}{2}\mathbb{E}\|X_{1}-X_{1}'\|_{2}=-\frac{1}{2}\int_{\mathcal{X}\times\mathcal{X}}\|x_{2}-x_{2}'\|_{2}d\big(\mathbb{Q}_{1}\times\mathbb{Q}_{1}\big)(x_{1},x_{2})$ is a quadratic term in $\mathbb{Q}_{1}$. This term can be viewed as the interaction energy \cite[\wasyparagraph 7]{santambrogio2015optimal} between particles in $\mathbb{Q}_{1}$ with the interaction function $W(x_{1},x_{1}')\stackrel{def}{=}-\|x_1-x_1'\|_{2}$. Thanks to the compactness of $\mathcal{Y}$, it is also lower semi-continuous in $\mathbb{Q}_{1}$, see  \cite[Proposition 7.2]{santambrogio2015optimal} for the proof.
\end{proof}

\begin{proof}[Proof of Proposition \ref{prop-estimator}] Direct calculation of the expectation of \eqref{estimator-energy} yields the value
\begin{eqnarray}\mathbb{E}\|Y-T(X,Z)\|_{2}-\frac{1}{2}\mathbb{E}\|T(X,Z)-T(X',Z')\|_{2}=
\nonumber
\\
\mathbb{E}\|Y-T(X,Z)\|-\frac{1}{2}\mathbb{E}\|T(X,Z)-T(X',Z')\|_{2}-\frac{1}{2}\mathbb{E}\|Y-Y'\|_{2}+\frac{1}{2}\mathbb{E}\|Y-Y'\|_{2}=
\nonumber
\\
\mathcal{E}^{2}\big(T\sharp(\mathbb{P}_{n}\times\mathbb{S}),\mathbb{Q}_{n}\big)+\frac{1}{2}\mathbb{E}\|Y-Y'\|_{2},
\end{eqnarray}
where $Y,Y'\sim\mathbb{Q}_{n}$ and $(X,Z), (X',Z')\sim (\mathbb{P}_{n}\times\mathbb{S})$ are independent random variables. It remains to note that $\frac{1}{2}\mathbb{E}\|Y-Y'\|_{2}$ is a $T$-independent constant.
\end{proof}


\section{Algorithm for General Cost Functionals}
\label{sec-algorithm-appendix}
In this section, we present the procedure to optimize \eqref{min-max-dual} for general cost functionals $\mathcal{F}$. In practice, one may utilize neural networks $T_{\theta}:\mathbb{R}^{D}\times\mathbb{R}^{S}\rightarrow\mathbb{R}^{D}$ and $v_{\omega}:\mathbb{R}^{D}\rightarrow\mathbb{R}$ to parameterize $T$ and $v_{\omega}$, correspondingly, to solve the problem \eqref{min-max-dual}. One may train them with stochastic gradient ascent-descent (SGAD) using random batches from $\mathbb{P},\mathbb{Q},\mathbb{S}$.
The procedure is summarized in Algorithm \ref{algorithm-gnot}.

\begin{algorithm}[h!]
\SetInd{0.5em}{0.3em}
    {
        \SetAlgorithmName{Algorithm}{empty}{Empty}
        \SetKwInOut{Input}{Input}
        \SetKwInOut{Output}{Output}
        \Input{Distributions $\mathbb{P},\mathbb{Q},\mathbb{S}$ accessible by samples; mapping network $T_{\theta}:\mathbb{R}^{P}\times\mathbb{R}^{S}\rightarrow\mathbb{R}^{Q}$; potential network $v_{\omega}:\mathbb{R}^{Q}\rightarrow\mathbb{R}$; number of inner iterations $K_{T}$;  empirical estimator $\widehat{\mathcal{F}}\big(X,T(X,Z)\big)$ for cost $\widetilde{\mathcal{F}}(T)$;
        }
        \Output{Learned stochastic OT map $T_{\theta}$ representing an OT plan between distributions $\mathbb{P},\mathbb{Q}$\;
        }
        \Repeat{not converged}{
            Sample batches $Y\sim \mathbb{Q}$, $X\sim \mathbb{P}$ and for each $x\in X$ sample batch $Z[x] \sim\mathbb{S}$\;
            ${\mathcal{L}_{v}\leftarrow \sum\limits_{x\in X}\sum\limits_{z\in Z[x]}\frac{v_{\omega}(T_{\theta}(x,z))}{|X| \cdot |Z[x]|}-\sum\limits_{y\in Y}\frac{v_{\omega}(y)}{|Y|}}$\;
            
            Update $\omega$ by using $\frac{\partial \mathcal{L}_{v}}{\partial \omega}$\;
            
            \For{$k_{T} = 1,2, \dots, K_{T}$}{
                Sample batch $X\sim \mathbb{P}$ and for each $x\in X$ sample batch $Z[x]\sim\mathbb{S}$\; 
                ${\mathcal{L}_{T}\leftarrow \widehat{\mathcal{F}}(X,T_{\theta}(X,Z))-\sum\limits_{x\in X}\sum\limits_{z\in Z[x]}\frac{v_{\omega}(T_{\theta}(x,z))}{|X| \cdot |Z[x]|}}$\;
            \vspace{-2.3mm}Update $\theta$ by using $\frac{\partial \mathcal{L}_{T}}{\partial \theta}$\;
            }
        }
        \caption{Neural optimal transport for general cost functionals}
        \label{algorithm-gnot}
    }
\end{algorithm}

Algorithm \ref{algorithm-gnot} requires an empirical estimator $\widehat{\mathcal{F}}$ for $\widetilde{\mathcal{F}}(T)$. Providing such an estimator might be non-trivial for general $\mathcal{F}$. If $\mathcal{F}(\pi)=\int_{\mathcal{X}}C(x,\pi(\cdot|x))d\mathbb{P}(x)$, i.e., the cost is weak \eqref{ot-primal-form-weak}, one may use the following \textit{unbiased} Monte-Carlo estimator:
$\widehat{\mathcal{F}}\big(X, T(X,Z)\big)\stackrel{def}{=}|X|^{-1}\sum_{x\in X}\widehat{C}\big(x,T(x,Z[x])\big),$
where $\widehat{C}$ is the respective estimator for the weak cost $C$ and $Z[x]$ denotes a random batch of latent vectors $z\sim\mathbb{S}$ for a given $x\in\mathcal{X}$. For classic costs and the $\gamma$-weak quadratic cost, the estimator $\widehat{C}$ is given by \citep[Eq. 18 and 19]{korotin2023neural} and Algorithm \ref{algorithm-gnot} for general OT \ref{got-primal-form} reduces to the neural OT algorithm \citep[Algorithm 1]{korotin2023neural} for weak \eqref{ot-primal-form-weak} or classic \eqref{ot-primal-form} OT. Unlike the predecessor, the algorithm is suitable for \textit{general} OT formulation \eqref{got-primal-form}. In \wasyparagraph\ref{sec-guided-learning}, we propose a cost functional $\mathcal{F}_{\text{G}}$ to solve the class-guided dataset transfer task (Algorithm \ref{algorithm-da-not}). In \wasyparagraph\ref{sec-pair-guided-learning}, we propose the functional $\mathcal{F}_{S}$ for the paired domain translataion (Algorithm \ref{algorithm-pair-guided}).

\section{Class-Guided Experiments}
\label{sec-class-guided-appendix}
\subsection{Training and Comparison Details}
\label{sec-details}
The code is written in \texttt{PyTorch} framework and publicly available at \url{https://github.com/machinestein/gnot}. On the image data, our method converges in 5--15 hours on a Tesla V100 (16 GB). We use WandB for babysitting  the experiments~\citep{wandb}. 

\textbf{Algorithm details}. In our Algorithm \ref{algorithm-da-not}, we use Adam~\citep{kingma2014adam} optimizer with $lr=10^{-4}$ for both $T_{\theta}$ and $v_{\omega}$. The number of inner iterations for $T_{\theta}$ is $K_{T}=10$. Doing preliminary experiments, we noted that it is sufficient to use small mini-batch sizes $K_{X}, K_{Y}, K_{Z}$ in \eqref{estimator-energy}. Therefore, we decided to average loss values over $K_{B}$ small independent mini-batches (each from class $n$ with probability $\alpha_{n}$) rather than use a single large batch from one class. This is done parallel with tensor operations.

\textbf{Dataset an hyperparameters.} 
We rescale the images to size 32×32 and normalize their channels to $[-1,1]$. For the grayscale images, we repeat their channel $3$ times and work with $3$-channel images. We do not apply any augmentations to data. We use the default train-test splits for all the datasets.

We use WGAN-QC discriminator's ResNet architecture \citep{he2016deep} for potential $v_{\omega}$. We use UNet\footnote{\url{github.com/milesial/Pytorch-UNet}} \citep{ronneberger2015u} as the stochastic transport map $T_{\theta}(x,z)$. To condition it on $z$, we insert conditional instance normalization (CondIN) layers after each UNet's upscaling block\footnote{\url{github.com/kgkgzrtk/cUNet-Pytorch}}. We use CondIN from AugCycleGAN \citep{almahairi2018augmented}. In experiments, $z$ is the 128-dimensional standard Gaussian noise.

The batch size is $K_B=32$, $K_{X}=K_{Y}=2$, $K_{Z}=2$ for training with $z$. When training without $z$, we use the original UNet without conditioning; the batch parameters are the same ($K_{Z}$ does not matter). Our method converges in $\approx 60$k iterations of $v_{\omega}$.

For comparison in the image domain, we use the official implementations with the hyperparameters from the respective papers: AugCycleGAN\footnote{\url{github.com/aalmah/augmented_cyclegan}} \citep{almahairi2018augmented}, MUNIT\footnote{\url{github.com/NVlabs/MUNIT}}\citep{huang2018multimodal}. For comparison with neural OT ($\mathbb{W}_{2}, \mathcal{W}_{2,\gamma}$), we use their publicly available code.\footnote{\url{https://github.com/iamalexkorotin/NeuralOptimalTransport}}. For the stochastic maps $T(x,z)$, we only sampled one $z$ per $x$ during computing the metric, no averaging over $z$ was applied. The accuracy of the ResNet18 classifiers is 99.17 for the MNIST and 95.56 for the USPS datasets. For the KMNIST and MNITST, the accuracy's are 99.39 and 97.19, respectively.

\textbf{OTDD flow details.} As in our method, the number of labelled samples in each class is 10. We learn the OTDD flow between the labelled source dataset\footnote{We use only 15k source samples since OTDD is computationally heavy (the authors use $2$k samples).} and labelled target samples. Note the OTDD method does not use the unlabeled target samples. As the OTDD method does not produce out-of-sample estimates, we train UNet to map the source data to the data produced by the OTDD flow via regression. Then we compute the metrics on the test (FID, accuracy) for this mapping network.

\textbf{DOT details.} 
Input pre-processing was the same as in our method.  
We tested a variety of discrete OT solvers from Python Optimal Transport (POT) package \citep{flamary2021pot}, including EMD, MappingTransport~\citep{perrot2016mapping} and SinkhornTransport with Laplacian and L2 regularization \citep{courty2016optimal} from \texttt{ot.da}~\citep{flamary2021pot}. These methods are semi-supervised and can receive labels to construct a task-specific plan. As in our method, the number of labelled samples in each class is 10. For most of these methods, two tunable hyper-parameters are available: entropic and class regularization values. We evaluated a range of these values (1, 2, 5, 10, and 100). To assess the accuracy of the DOT solvers, we used the same oracle classifiers as in all the other cases. Empirically, we found that the Sinkhorn with Laplacian regularization and both regularization values equal to 5 achieves the best performance in most cases. Thus, to keep Table~\ref{tab:accuracy} simple, we report the test accuracy results only for this DOT approach. Additionally, we calculated its test FID (Table~\ref{tab:FID}).

\subsection{Moons}
\label{sec-toy} 
The task is to map two balanced classes of moons (red and green) between $\mathbb{P}$ and $\mathbb{Q}$ (circles and crosses in Figure \ref{fig:input_m}, respectively). The target distribution $\mathbb{Q}$ is $\mathbb{P}$  rotated by $90$ degrees. The number of randomly picked labelled samples in each target moon is \textit{10}. The maps learned by neural OT algorithm with the quadratic cost ($\mathbb{W}_{2}$, \citep{fan2023neural, korotin2023neural}) and \textbf{our} Algorithm \ref{algorithm-da-not} with functional $\mathcal{F}_{\text{G}}$ are given in Figures~\ref{fig:moons-w2} and \ref{fig:moons-ours}, respectively. In Figure \ref{fig:moons-ot-si} we show the \textit{matching} performed by a \textit{discrete} OT-SI algorithm which learns the transport cost with a neural net from a known classes' correspondence \citep{Liu2020Learning}. As expected, the map for $\mathbb{W}_{2}$ does not preserve the classes (Figure \ref{fig:moons-w2}), while our map solves the task (Figure \ref{fig:moons-ours}). We use 500 train and 150 test samples for each moon. We use the fully-connected net with 2 ReLU hidden layers size of 128 for both $T_{\theta}$ and $v_{\omega}$. We train the model for 10k iterations of $v_{\omega}$ with $K_{B}=32, K_{X}=K_{Y}=2$ ($K_{Z}$ plays no role as we do not use $z$ here).
\begin{figure*}[h!]
\begin{center}
    \subfigure[\centering \label{fig:input_m} $x\sim\mathbb{P}_n$~(circles), \newline $y\sim\mathbb{Q}_n$~(crosses).]{\includegraphics[width=0.266\textwidth]{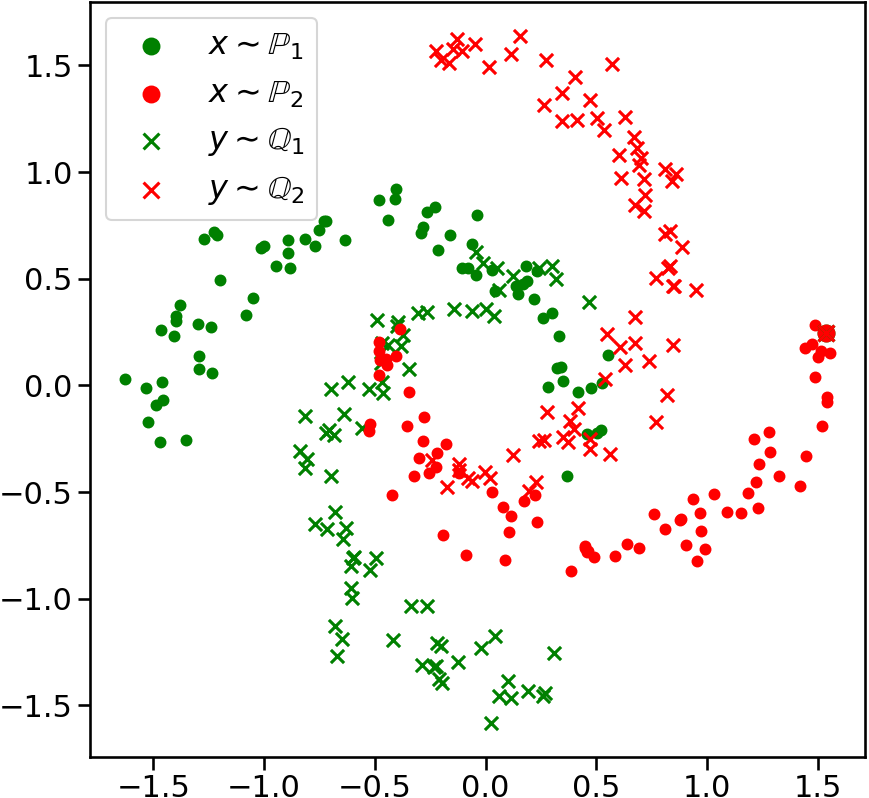}}
    \subfigure[\centering \label{fig:moons-ot-si} Discrete OT-SI.]{\includegraphics[width=0.23\textwidth]{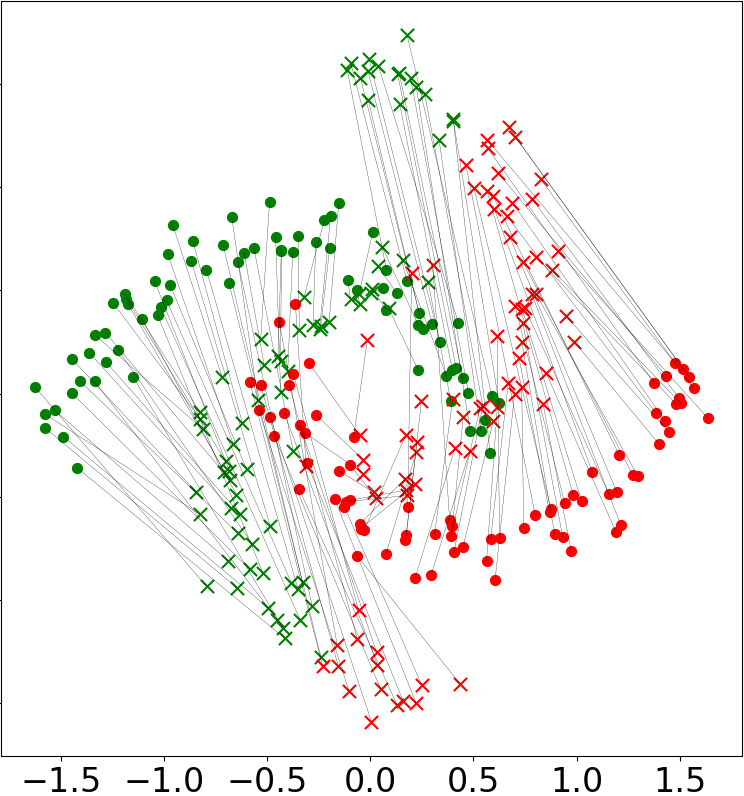}} 
    \subfigure[\centering \label{fig:moons-w2} Neural OT ($\ell^2$)]{\includegraphics[width=0.24\textwidth]{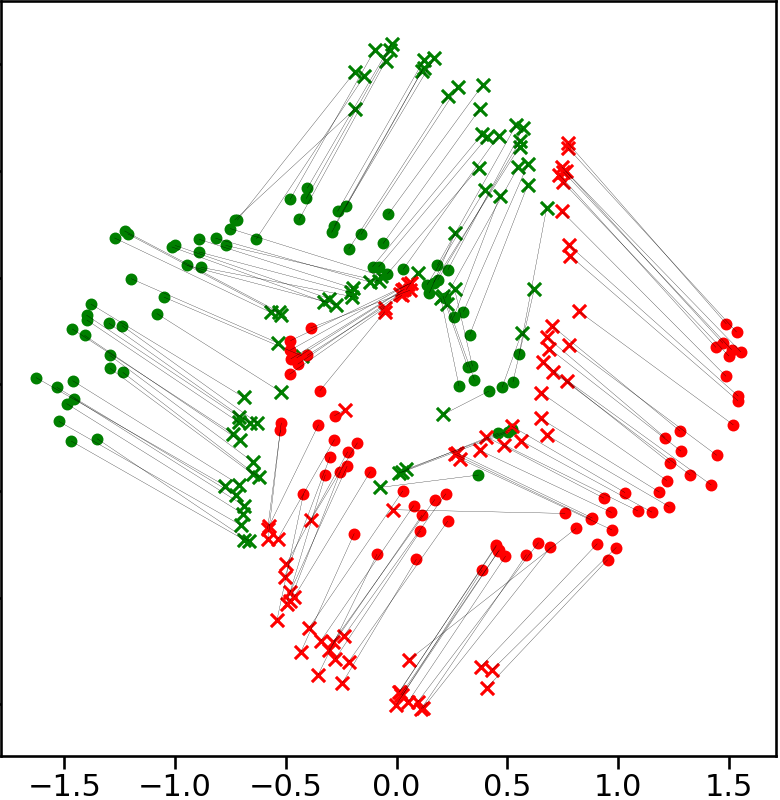}}
    \subfigure[\centering \label{fig:moons-ours} \textbf{Ours} ($\mathcal{F}_{\text{G}}$) ]{\includegraphics[width=0.24\textwidth]{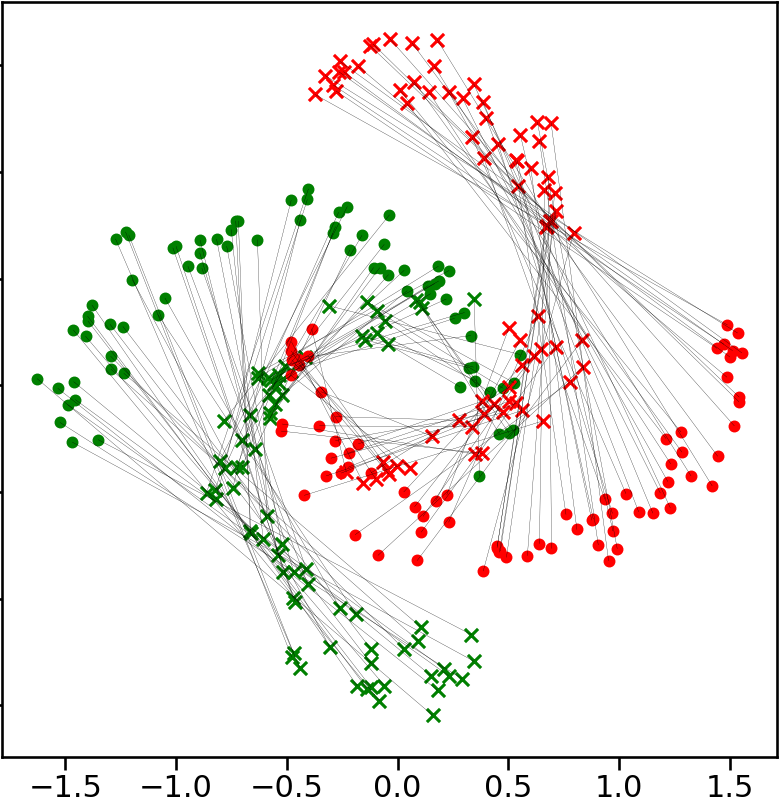}}
    \vspace{-3mm}
    \caption{\label{fig:twonoons}\normalsize The results of mapping two moons using OT with different cost functionals.}
    \vspace{-5mm}
\end{center}
\end{figure*}
\subsection{Gaussians Mixtures.}\label{sec-gmms}  
\begin{figure*}[h!]
\begin{center}
    \subfigure[\label{fig:input_g} Input $x \sim \mathbb{P}$.]{\includegraphics[width=0.30\textwidth]{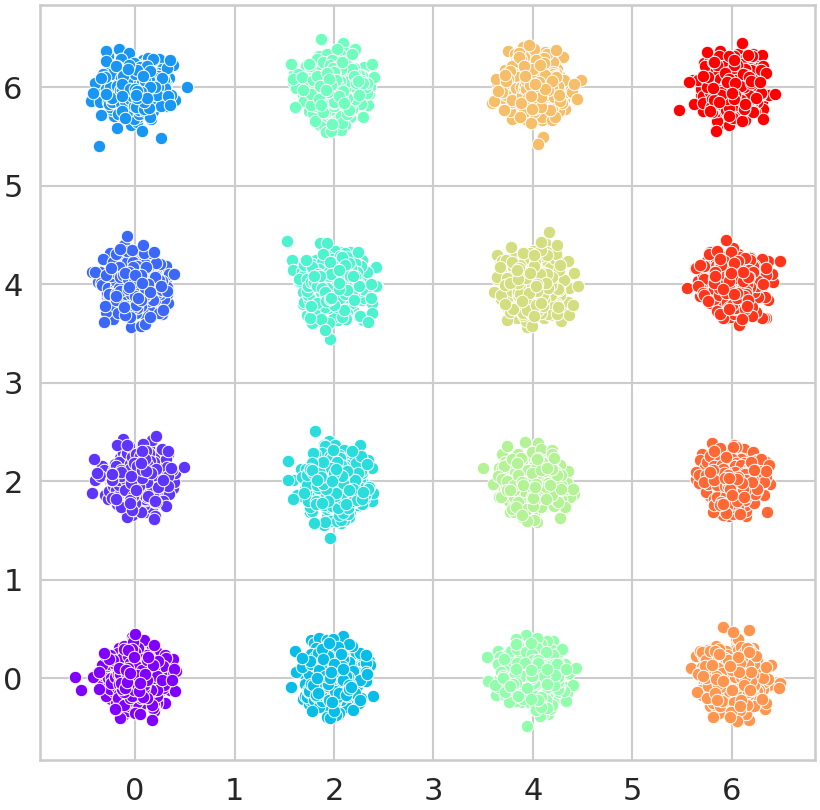}}
    \hspace{3mm}
    \subfigure[\label{fig:target_g} Target $y \sim \mathbb{Q}$.]{\includegraphics[width=0.29\textwidth]{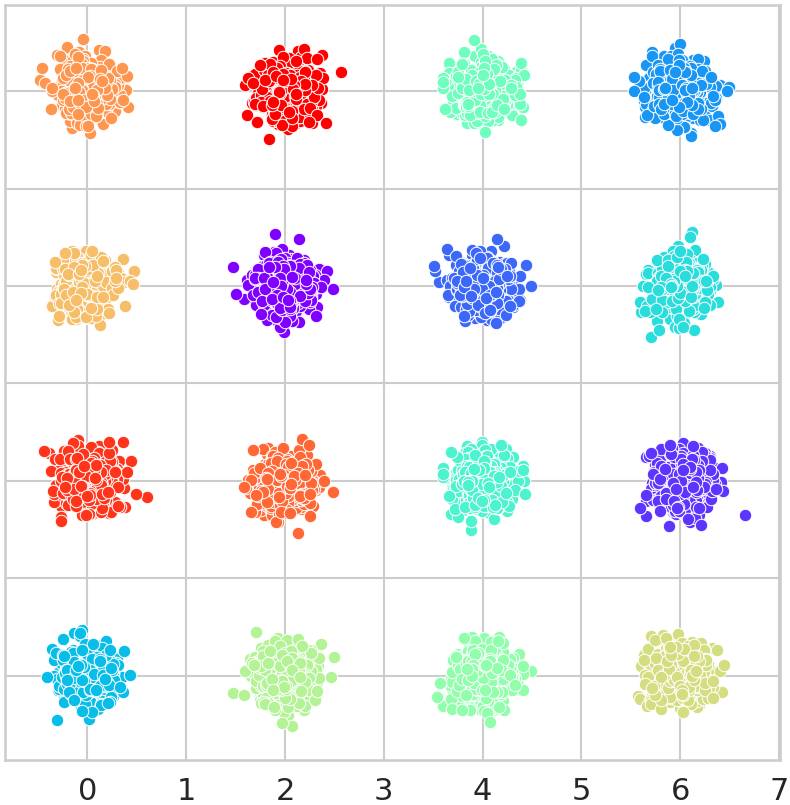}} 
    \hspace{3mm}
    \subfigure[\label{fig:results_g} \textbf{Ours} ($\mathcal{F}_{\text{G}}$) ]{\includegraphics[width=0.29\textwidth]{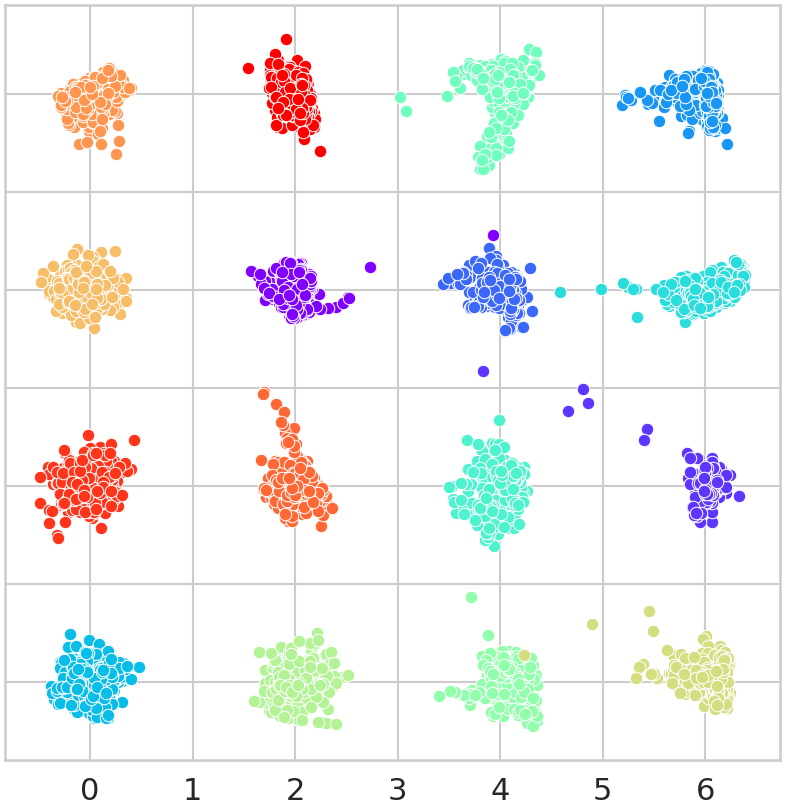}}
    \vspace{-3mm}
    \caption{\label{fig:gaussians} Illustration of the mapping between two Gaussian mixtures learned by our Algorithm \ref{algorithm-da-not}.}
    \vspace{-5mm}
\end{center}
\end{figure*}
\begin{figure*}[h!]
\begin{center}
    \subfigure[\label{fig:oinput_g} Input $x \sim \mathbb{P}$.]{\includegraphics[width=0.30\textwidth]{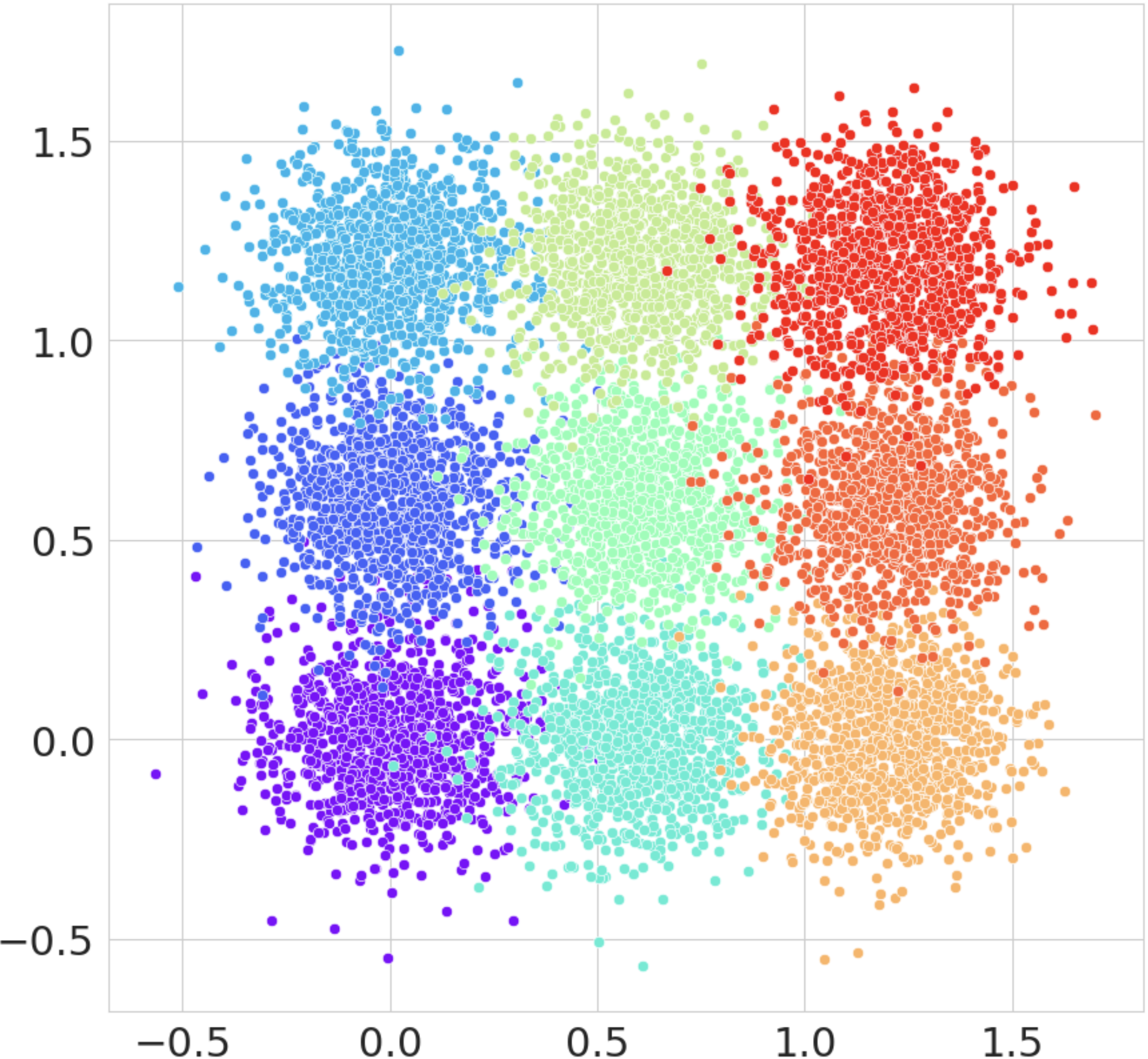}}
    \hspace{3mm}
    \subfigure[\label{fig:otarget_g} Target $y \sim \mathbb{Q}$.]{\includegraphics[width=0.29\textwidth]{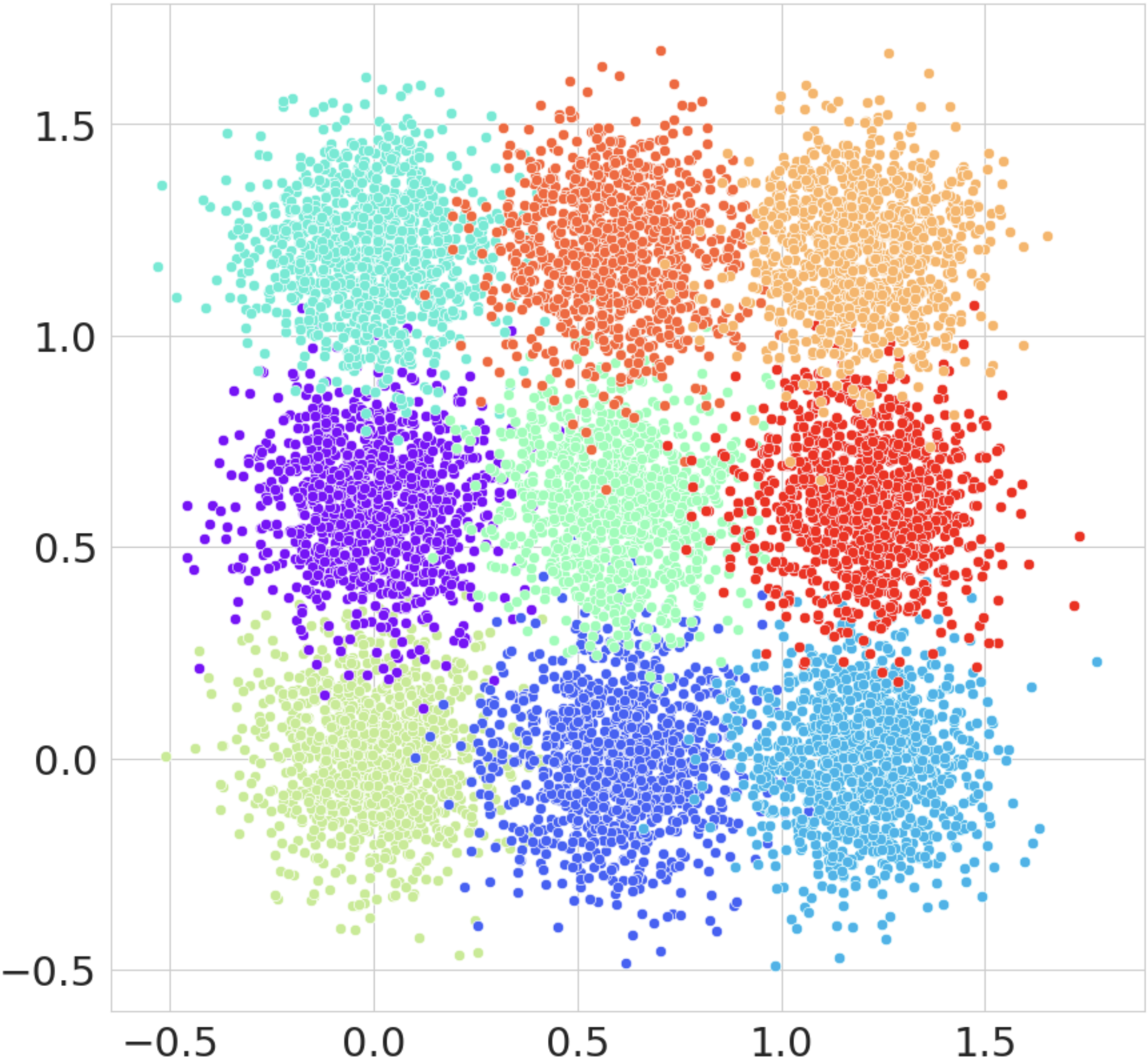}} 
    \hspace{3mm}
    \subfigure[\label{fig:oresults_g} \textbf{Ours} ($\mathcal{F}_{\text{G}}$) ]{\includegraphics[width=0.29\textwidth]{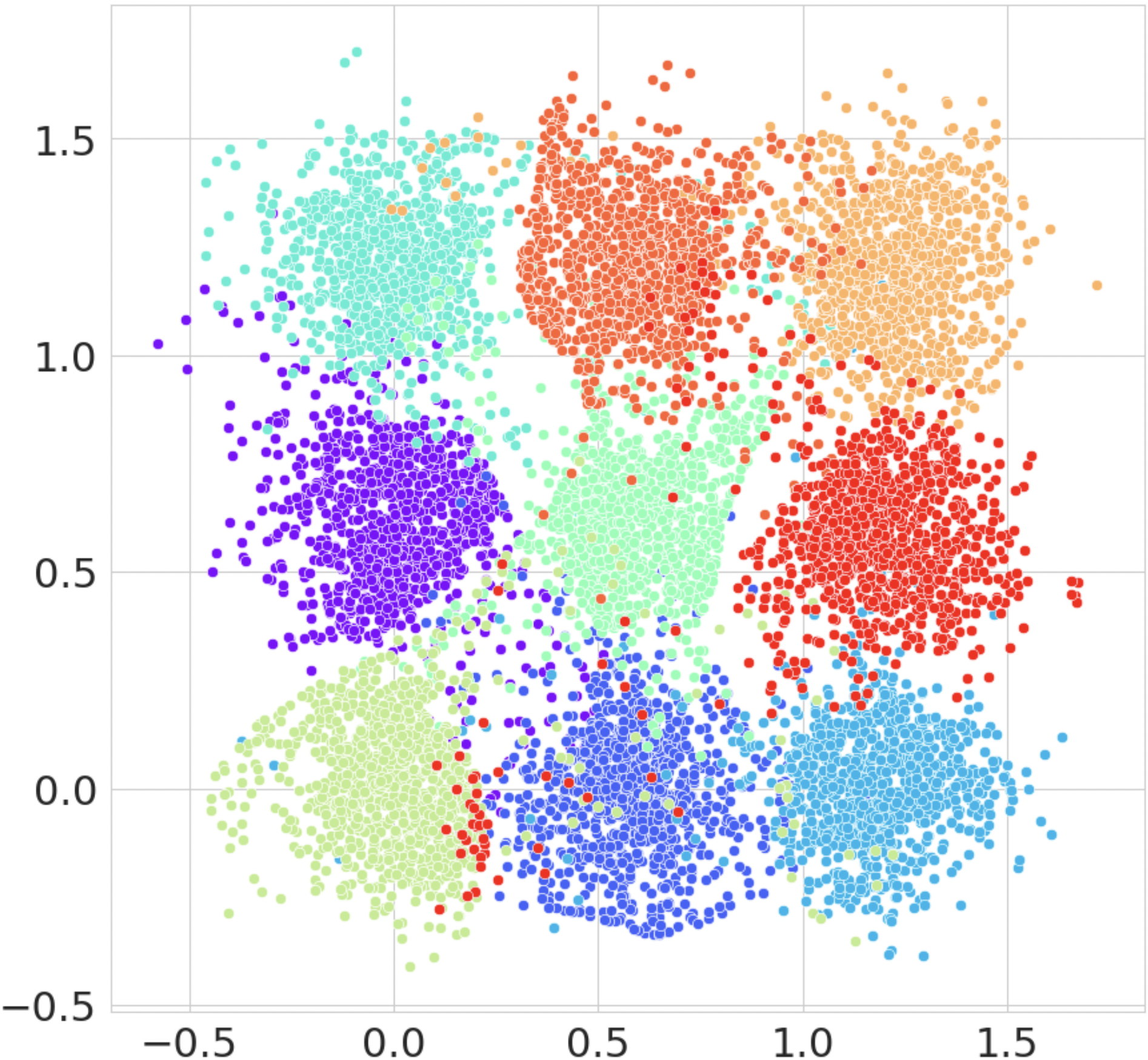}}
    \vspace{-3mm}
    \caption{\label{fig:ogaussians} Illustration of the mapping between two Gaussian mixtures learned by our Algorithm \ref{algorithm-da-not} when the classes are overlapping.}
    \vspace{-5mm}
\end{center}
\end{figure*}
In this additional experiment both $\mathbb{P},\mathbb{Q}$ are balanced mixtures of 16 Gaussians, and each color denotes a unique class. The goal is to map Gaussians in $\mathbb{P}$ (Figure~\ref{fig:input_g}) to respective Gaussians in $\mathbb{Q}$ which have the same color, see Figure ~\ref{fig:target_g}. The result of our method (\textit{10} known target labels per class) is given in Figure~\ref{fig:results_g}. It correctly maps the classes. Neural OT for the quadratic cost is not shown as it results in the \textit{identity map} (the same image as Figure \ref{fig:input_g}) which is completely \textit{mistaken} in classes. We use the fully connected network with 2 ReLU hidden layers size of 256 for both $T_{\theta}$ and $v_{\omega}$.
There are 10000 train and 500 test samples in each Gaussian. We train the model for 10k iterations of $v_{\omega}$  with $K_{B}=32, K_{X}=K_{Y}=2$ ($K_{Z}$ plays no role here as well). 

Using the same settings as in the previous experiment, we conducted additional tests involving overlapping classes. In this scenario, specific samples within one class are identical to samples from another class. To execute this experiment, we adjusted the Gaussian modes of each class to be closer to each other, as illustrated in Figure~\ref{fig:oinput_g}. The target classes are depicted in Figure~\ref{fig:otarget_g}. Our method handles this scenario, demonstrating the robustness of our model in handling overlapping classes. The visual representation can be observed in Figure~\ref{fig:oresults_g}.

\subsection{Experiments with other image datasets}\vspace{-3mm}
\begin{figure*}[h!]
\begin{center}
    \subfigure[MNIST $\rightarrow$  USPS]{\includegraphics[width=0.47\textwidth]{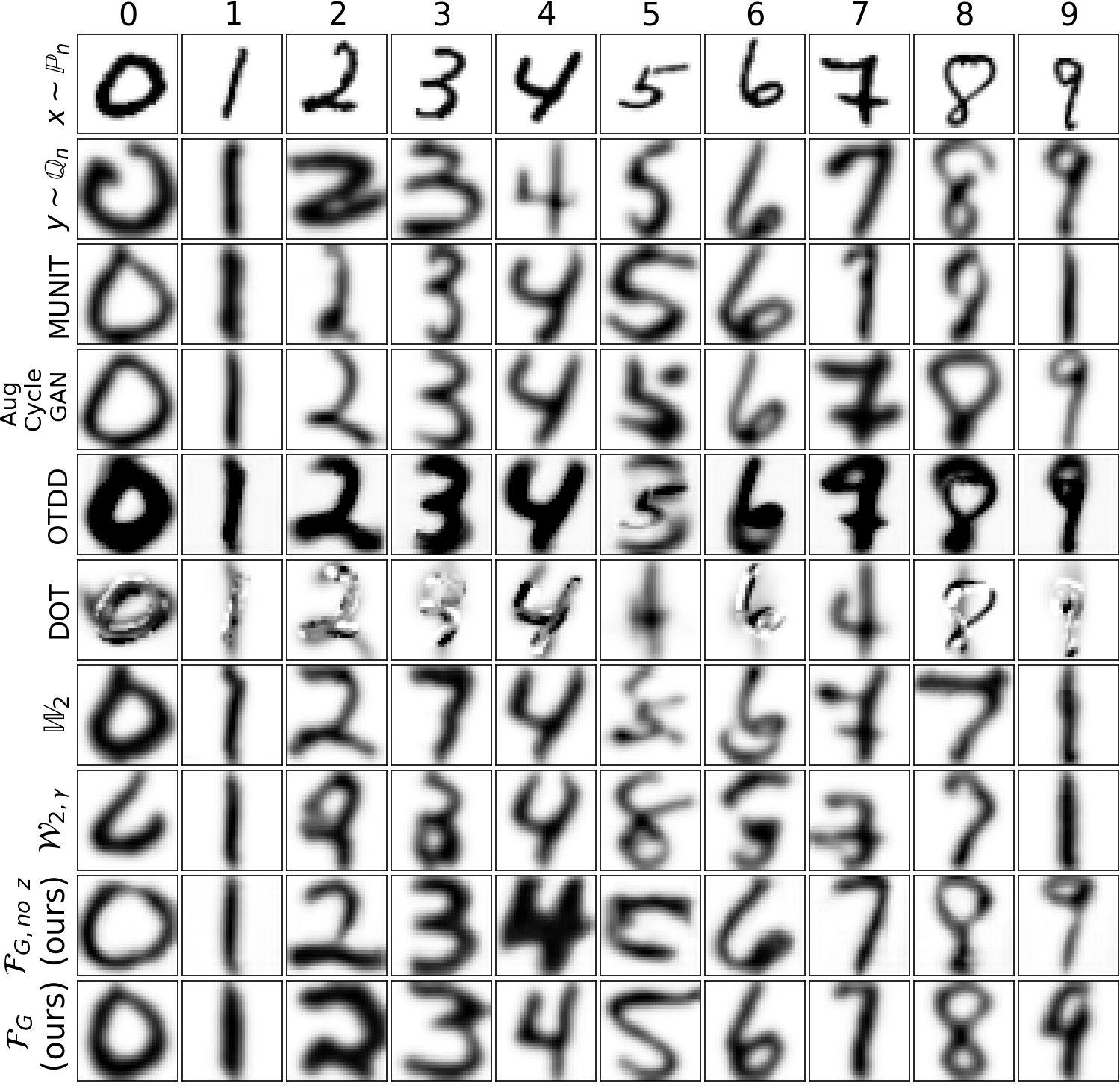}}
    \hspace{4mm}
    \subfigure[\label{fig:mnist2knist} MNIST $\rightarrow$  KMNIST]
    {\includegraphics[width=0.47\textwidth]{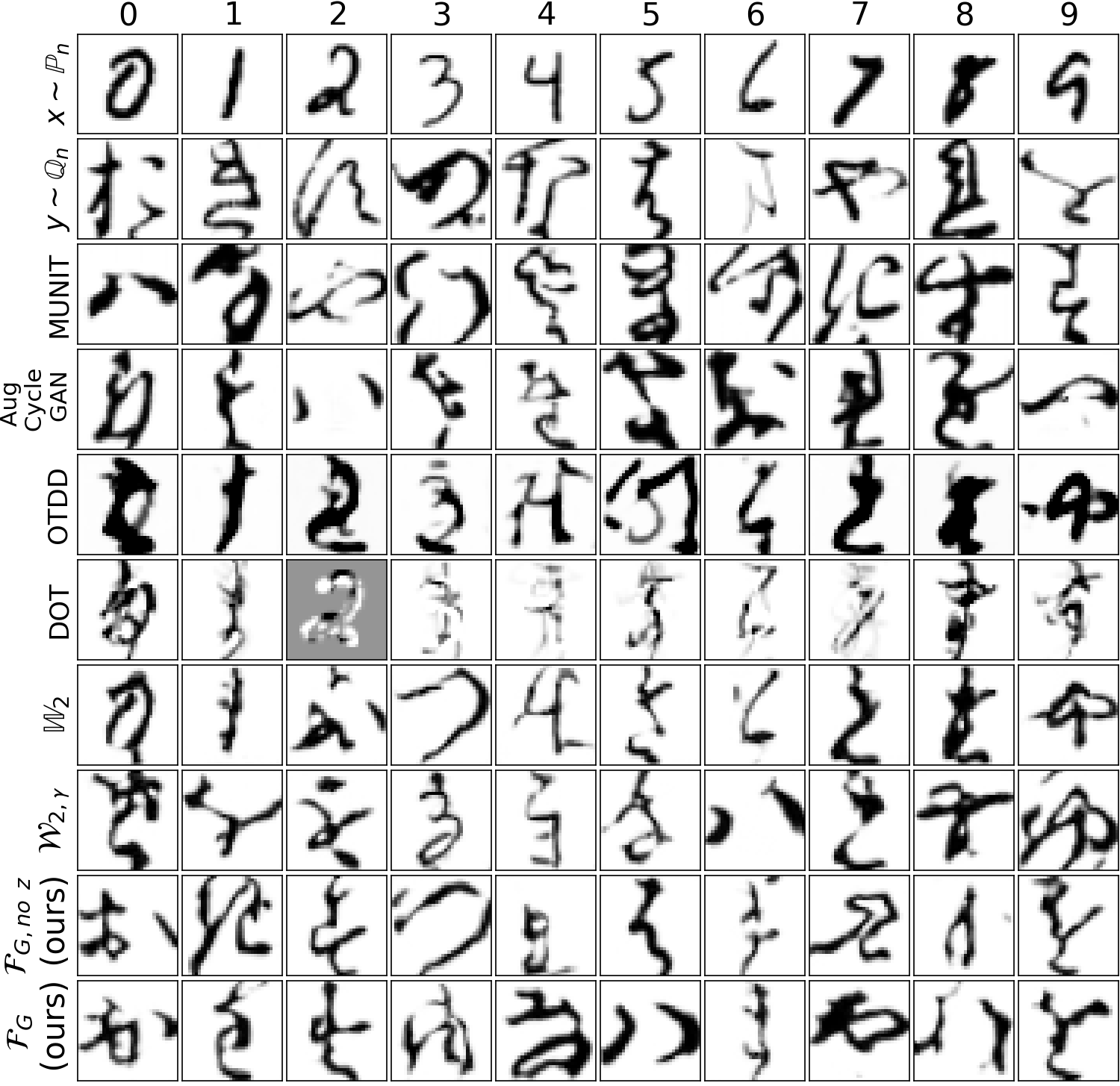}} 
    \vspace{-3mm}
    \caption{\label{fig:easy-case} The results of mapping between two datasets.}
\end{center}
\end{figure*}

\begin{table}[h!]\centering
\vspace{-1mm}
\begin{center}
\scriptsize
\begin{tabular}{c|c c |c |c |c c |c c }
\hline
 & \multicolumn{2}{c|}{\makecell{\textbf{Image-to-Image} \\\textbf{Translation}}} & \textbf{Flows} & \textbf{Discrete OT} & \multicolumn{4}{c}{\textbf{Neural Optimal Transport}} \\
 \hline
\textbf{Datasets} ($32\times 32$) & \textbf{MUNIT} & \makecell{\textbf{Aug}\\\textbf{CycleGAN}} & \textbf{OTDD} &  \textbf{SinkhornLpL1}  & \textbf{$\mathbb{W}_2$}    & \textbf{$\mathcal{W}_{2,\gamma}$}   & \makecell{$\mathcal{F}_{\text{G}}$, no $z$\\\textbf{[Ours]}}   & \makecell{$\mathcal{F}_{\text{G}}$\\\textbf{[Ours]}} \\

\hline
MNIST $\rightarrow$ USPS    &97.95	&\textbf{98.2} &-  &83.26 &38.77 &37.0 &95.27  &94.62\\ 
MNIST $\rightarrow$ KMNIST	    &12.27	&8.99   &4.46  &4.27 &6.13  &6.82  &\textbf{79.20}   &61.91  \\ \hline
\end{tabular}
\end{center}
\vspace{-3mm}
\caption{Accuracy$\uparrow$ of the maps learned by the translation methods in view.}
\label{tab:accuracy-ext}
\vspace{-3mm}
\end{table}
\begin{table}[h!]\centering
\begin{center}
\scriptsize
\begin{tabular}{c|c c |c|c|c c |c c }
\hline
\textbf{Datasets} ($32\times 32$) & \textbf{MUNIT} & \makecell{\textbf{Aug}\\\textbf{CycleGAN}} & \makecell{\textbf{OTDD}} & \makecell{\textbf{SinkhornLpL1}} & \textbf{$\mathbb{W}_2$}    & \textbf{$\mathcal{W}_{2,\gamma}$}   & \makecell{$\mathcal{F}_{\text{G}}$, no $z$\\\textbf{[Ours]}}   & \makecell{$\mathcal{F}_{\text{G}}$\\\textbf{[Ours]}} \\
\hline 
MNIST $\rightarrow$ USPS	    &6.86	&22.74   &$>$ 100 &51.18 &4.60  &3.05  &5.40	& \textbf{2.87} \\
MNIST $\rightarrow$ KMNIST	    &8.81	&62.19  &$>$ 100 &40.96 &12.85 & \textbf{9.46}  &17.26   &9.69 \\ \hline
\end{tabular}
\end{center}
\caption{FID$\downarrow$ of the samples generated by the translation methods in view.}
\label{tab:FID-ext}
\vspace{-3mm}
\end{table}
Here we test the case with different datasets, again in one case when the source and target domains are related, in the second case when they are not. We consider MNIST$\rightarrow$USPS and MNIST$\rightarrow$KMNIST. As in the main text, we are given only 10 labeled samples from the target dataset; the rest are unlabeled. The results are shown in Table \ref{tab:accuracy-ext}, \ref{tab:FID-ext} and Figure \ref{fig:easy-case}.

In this case (Figure \ref{fig:easy-case}), GAN-based methods and our approach with our guided cost $\mathcal{F}_{\text{G}}$ show high accuracy $\geq 90\%$. However, neural OT with classic and weak quadratic costs provides low accuracy (35-50\%). We presume that this is because for these dataset pairs the ground truth OT map for the (pixel-wise) quadratic cost simply does not preserve the class. This agrees with \citep[Figure 3]{daniels2021score} which tests an entropy-regularized quadratic cost in a similar MNIST$\rightarrow$USPS setup. For our method with cost $\mathcal{F}_{\text{G}}$, The OTDD gradient flows method provides reasonable accuracy on MNIST$\rightarrow$USPS. However, OTDD has a much higher FID than the other methods.

\subsection{Additional visualization}
\label{sec:additional_vis}
\begin{figure}[h!]
\centering
    \includegraphics[width=0.99\textwidth]{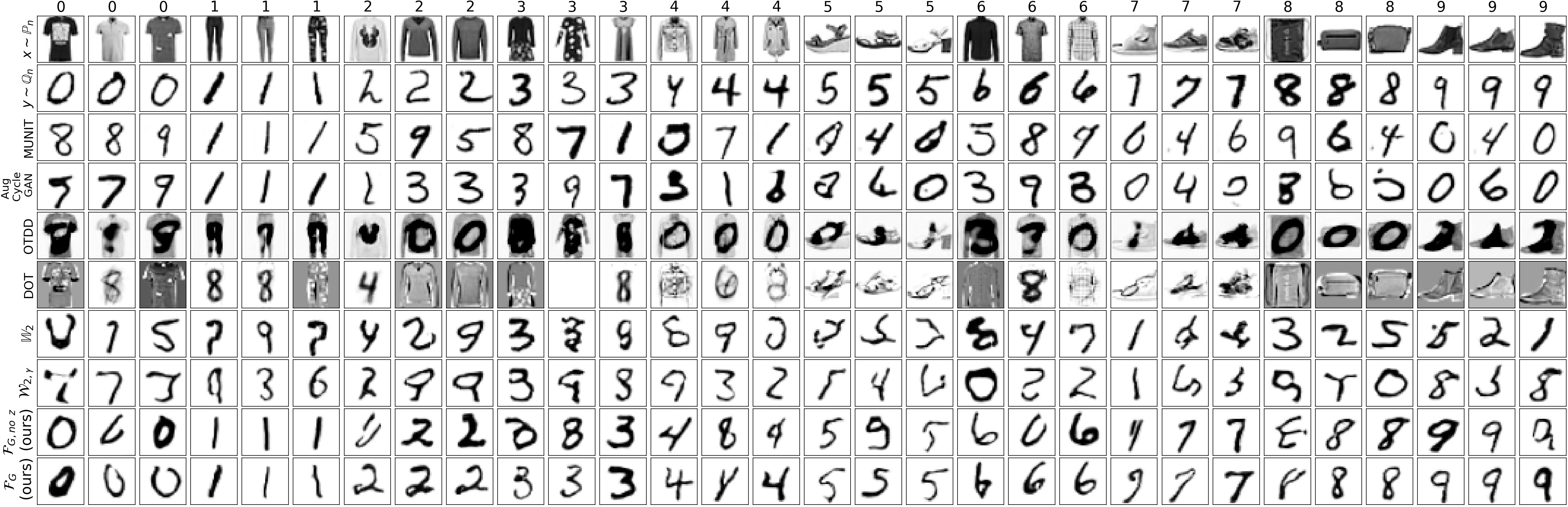}
    \caption{\centering \label{fig:hadr-case_fm3} FMNIST$\rightarrow$MNIST mapping results. Three input images per class are presented.}
\end{figure}

To further qualitative demonstrate that our model preserves the classes well, we provide an additional visualization of the learned maps. The same models were used as in Figure \ref{fig:hadr-case_fm}. The only difference is that in Figure \ref{fig:hadr-case_fm} we show a single input and target per class, while here (Figure \ref{fig:hadr-case_fm3}) we show three inputs and three outputs per class for the different methods.

\subsection{Additional Examples of Stochastic Maps} 
\label{sec-appendix-stochastic}
In this subsection, we provide additional examples of the learned stochastic map for $\mathcal{F}_{\text{G}}$ (with $z$). We consider all the image datasets from the main experiments (\wasyparagraph\ref{sec-experiments}). The results are shown in Figure \ref{fig:z_condition} and demonstrate that for a fixed $x$ and different $z$, our model generates diverse samples. 
\begin{figure}[!h]
  \begin{center}
    \includegraphics[width=0.40\linewidth]{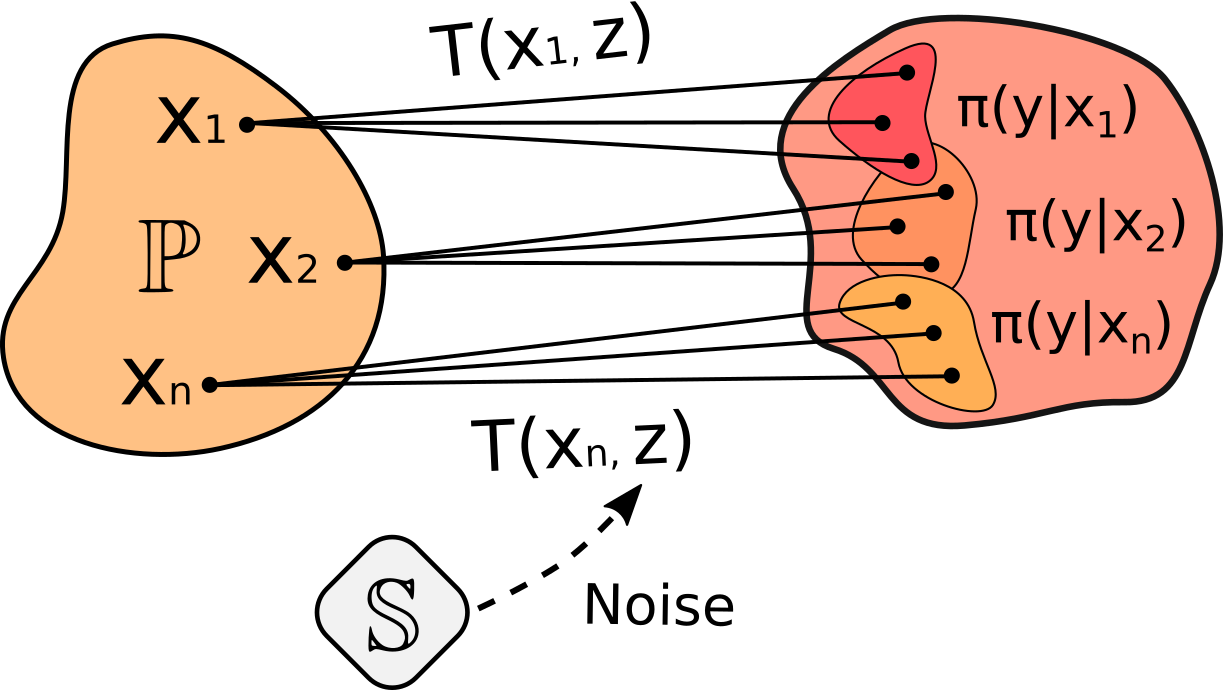}
  \end{center}
  \vspace{-3mm}
  \caption{\centering Implicit representation of $\pi\in\Pi(\mathbb{P})$ via function $T=T_{\pi}:\mathcal{X}\!\times\!\mathcal{Z}\rightarrow\mathcal{Y}$.}
  \vspace{-3mm}
  \label{fig:stochastic-map}
\end{figure}
\begin{figure*}[h!]
    \centering
    \subfigure[MNIST $\rightarrow$  USPS]{\includegraphics[width=0.46\textwidth]{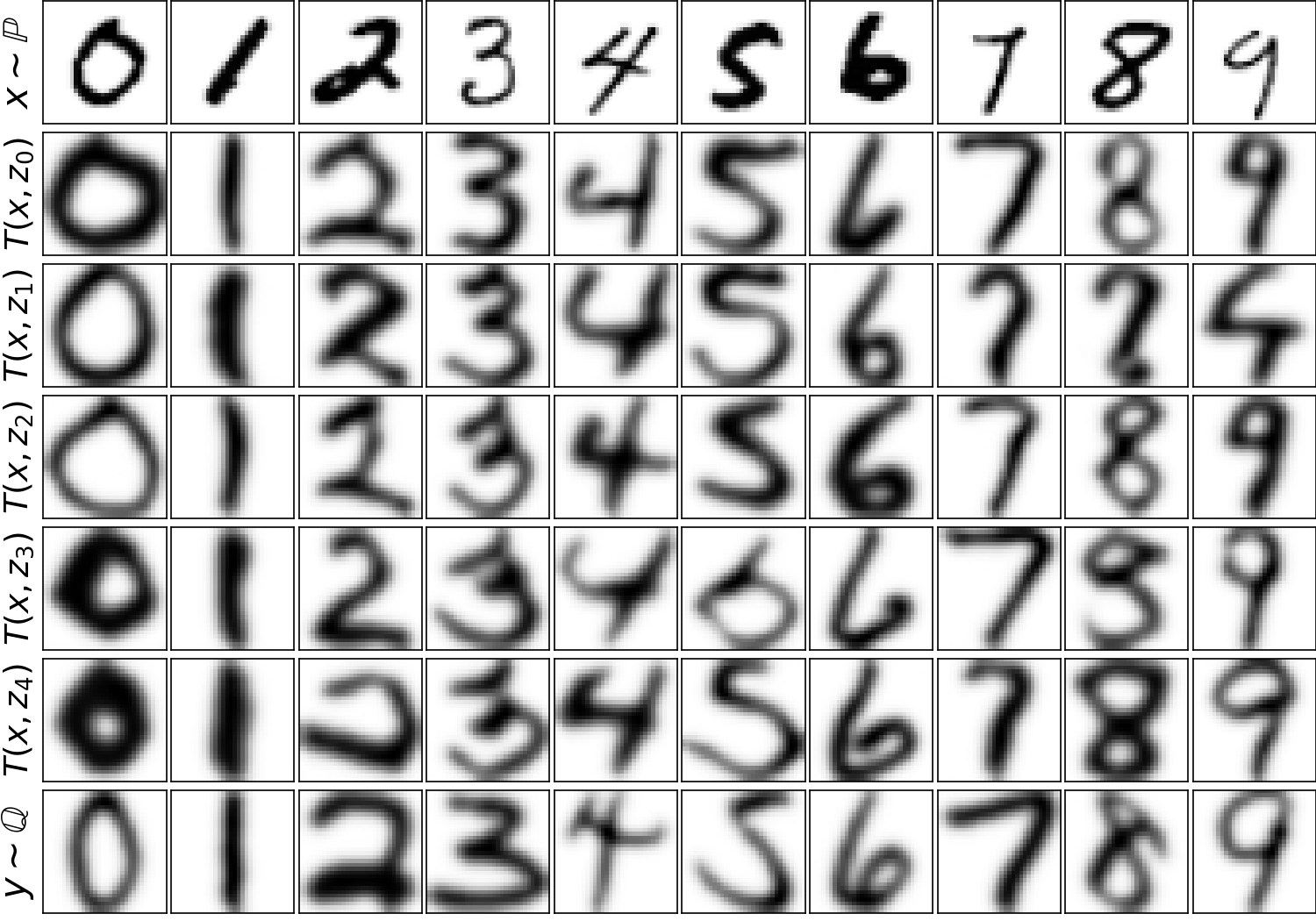}}
    \subfigure[MNIST $\rightarrow$  MNIST-M]{\includegraphics[width=0.46\textwidth]{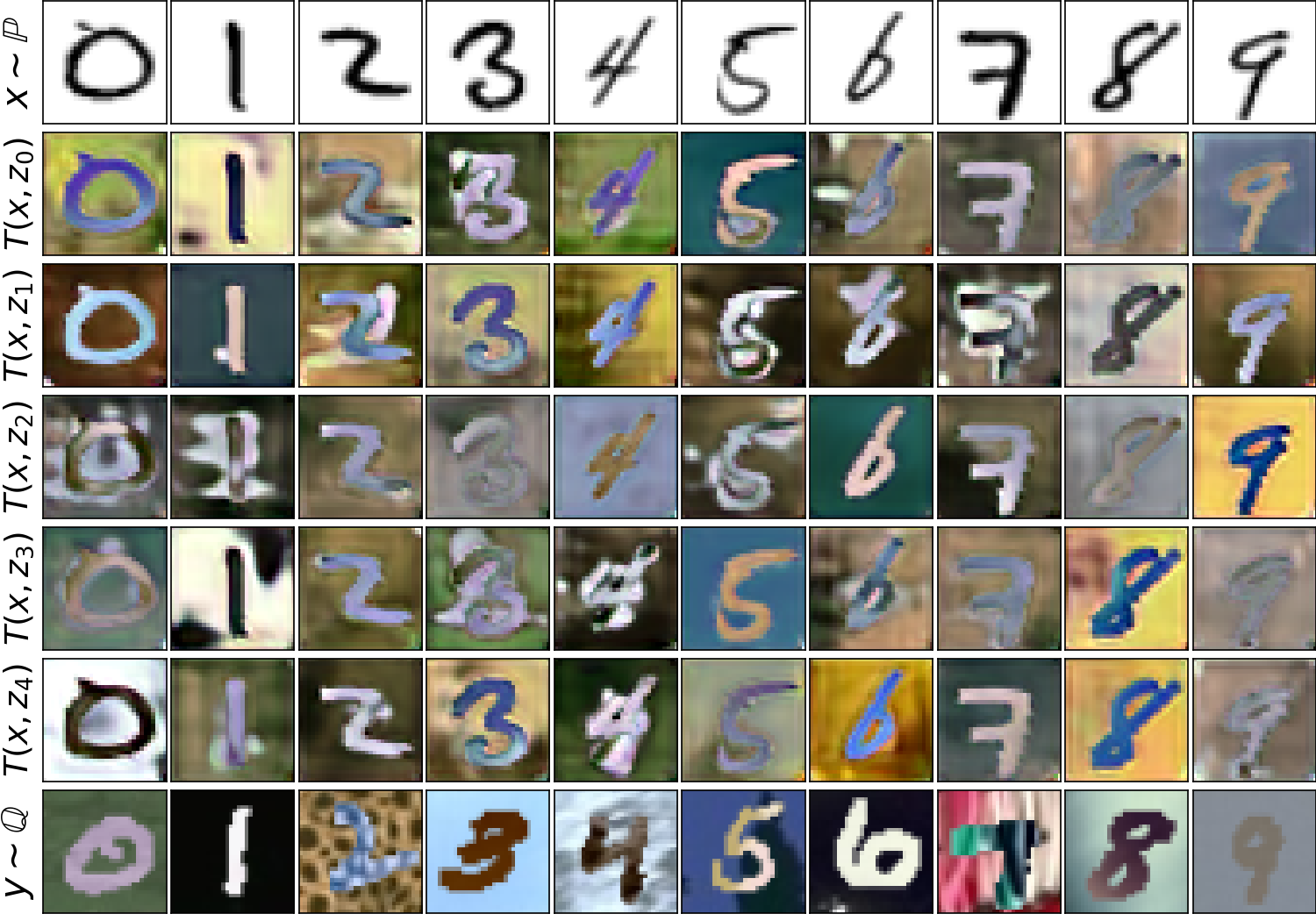}} 
    \subfigure[MNIST $\rightarrow$  KMNIST]{\includegraphics[width=0.46\textwidth]{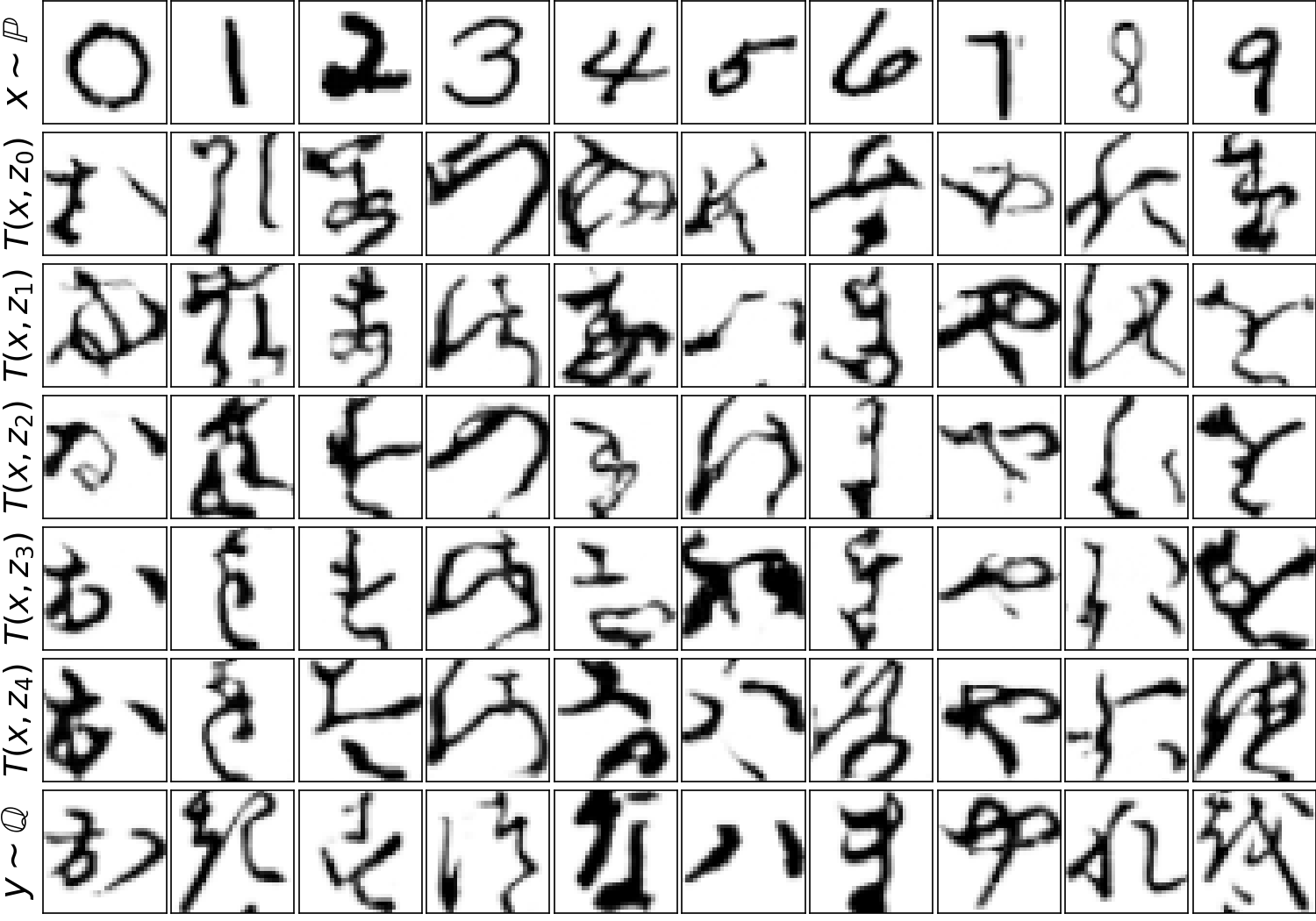}}
    \subfigure[FMNIST $\rightarrow$  MNIST]{\includegraphics[width=0.46\textwidth]{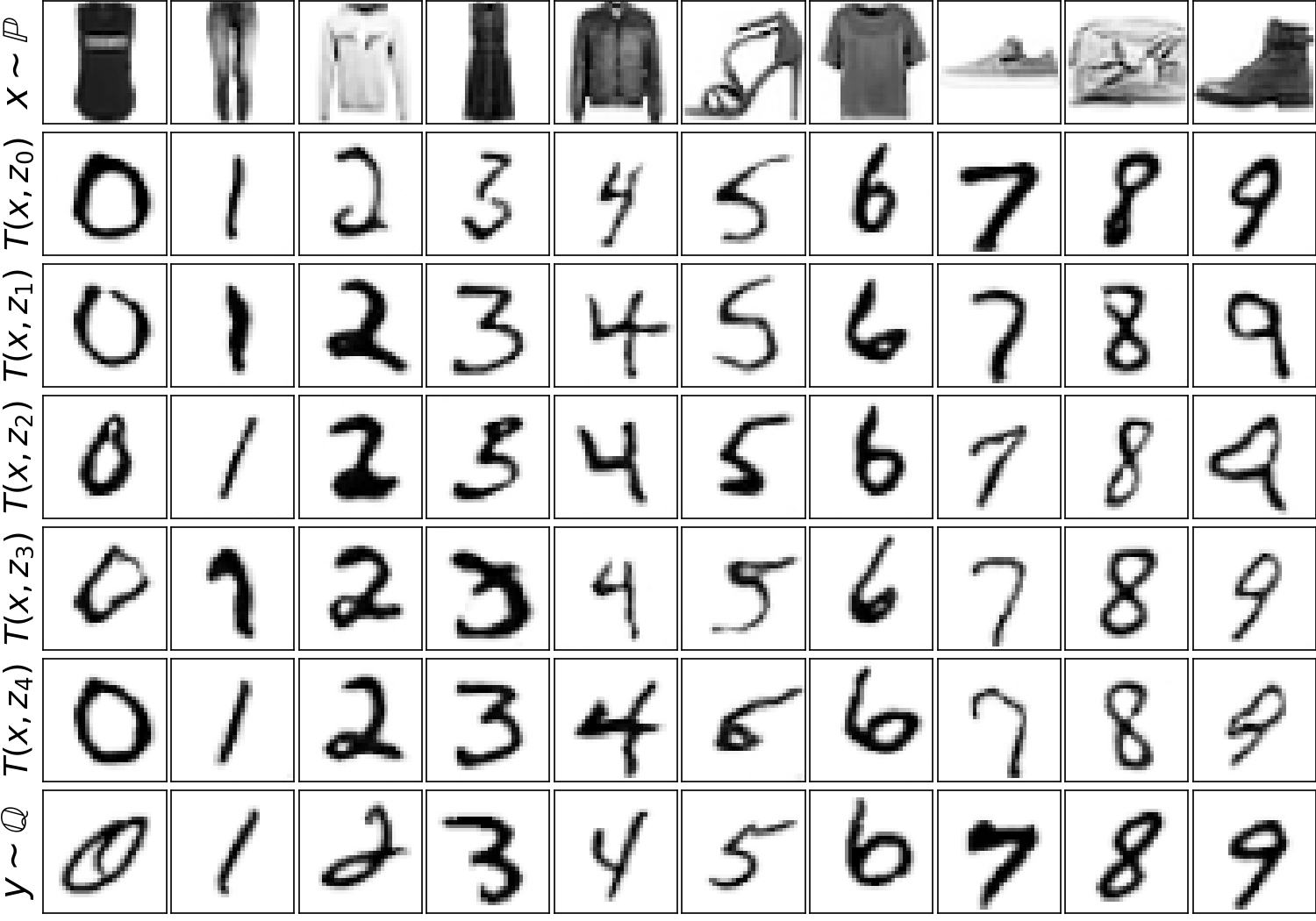}} 
    \caption{\label{fig:z_condition} Stochastic transport maps $T_{\theta}(x,z)$ learned by our Algorithm \ref{algorithm-da-not}. Additional examples.}
\end{figure*}

\subsection{Ablation Study of the Latent Space Dimension} 
\label{sec-appendix-zablation}
In this subsection, we study the structure of the learned stochastic map for $\mathcal{F}_{\text{G}}$ with different latent space dimensions $Z$. We consider MNIST $\rightarrow$ USPS transfer task (10 classes). The results are shown in Figures \ref{fig:z_ablation}, \ref{fig:z_ablation_full} and Table \ref{tab:z_accuracy}. As can be seen, our model performs comparably for different $Z$. 
\begin{figure*}[h!]
    \centering
    \subfigure[ $Z=1$]{\includegraphics[width=0.178\textwidth]{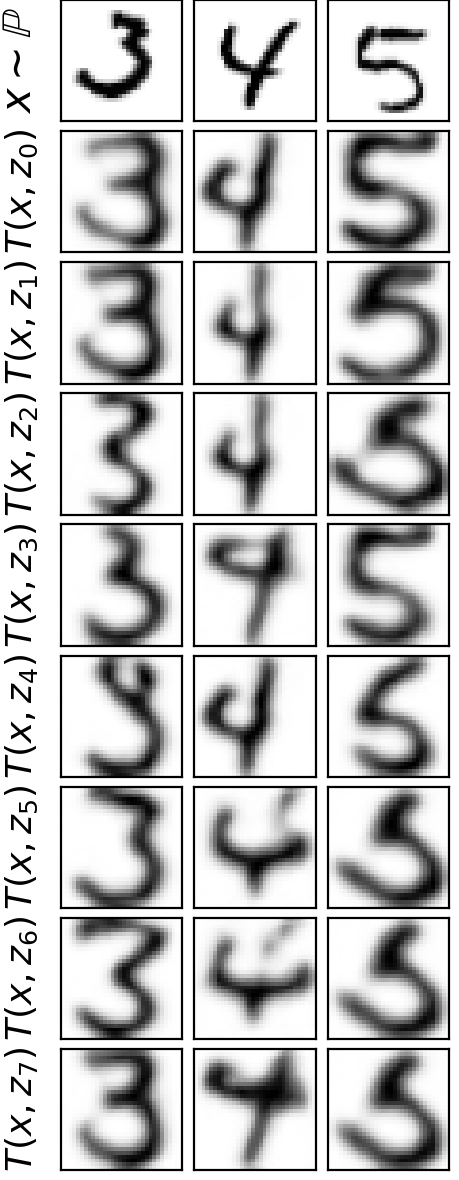}}
    \subfigure[ $Z=4$]{\includegraphics[width=0.155\textwidth]{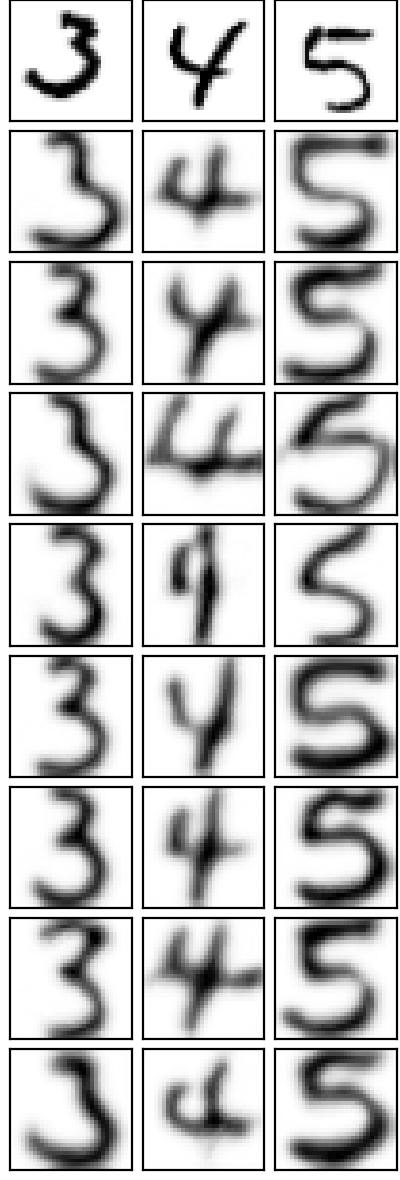}} 
    \subfigure[ $Z=8$]{\includegraphics[width=0.155\textwidth]{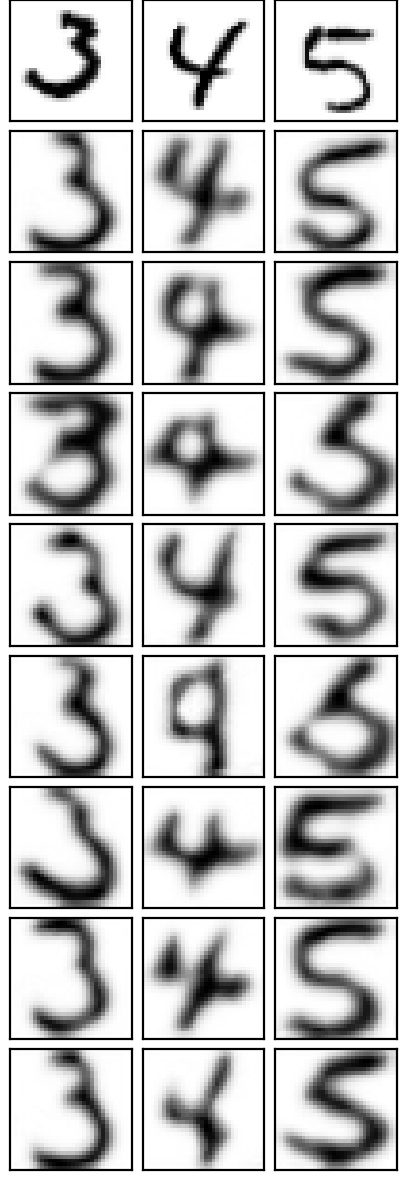}}
    \subfigure[ $Z=16$]{\includegraphics[width=0.155\textwidth]{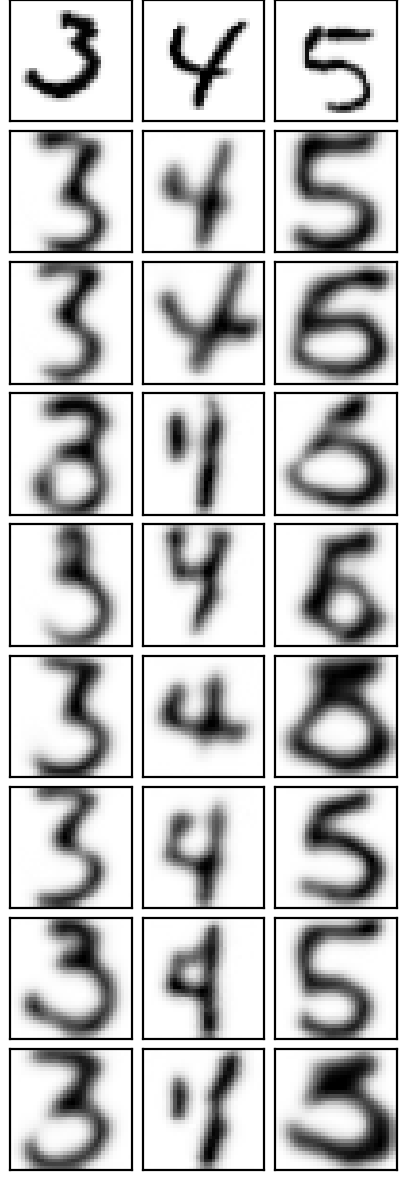}} 
    \subfigure[ $Z=32$]{\includegraphics[width=0.155\textwidth]{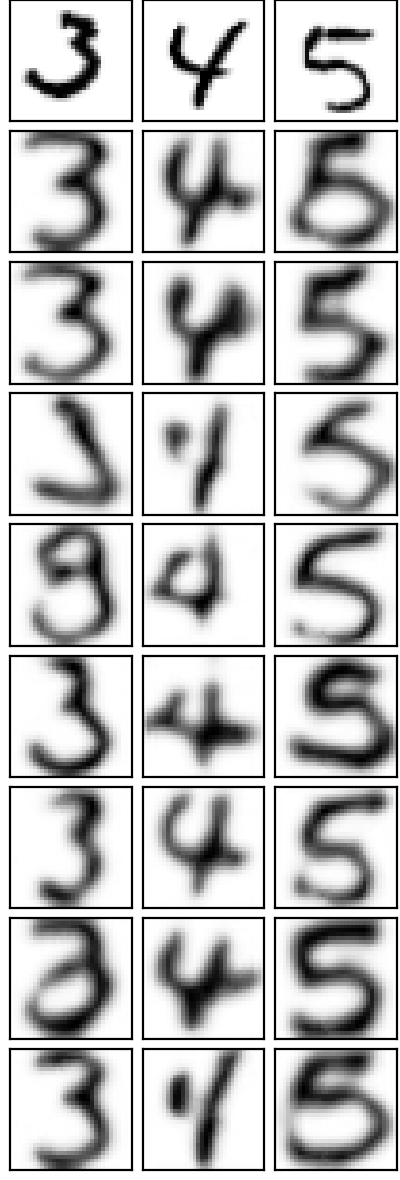}}
    \subfigure[ $Z=64$]{\includegraphics[width=0.155\textwidth]{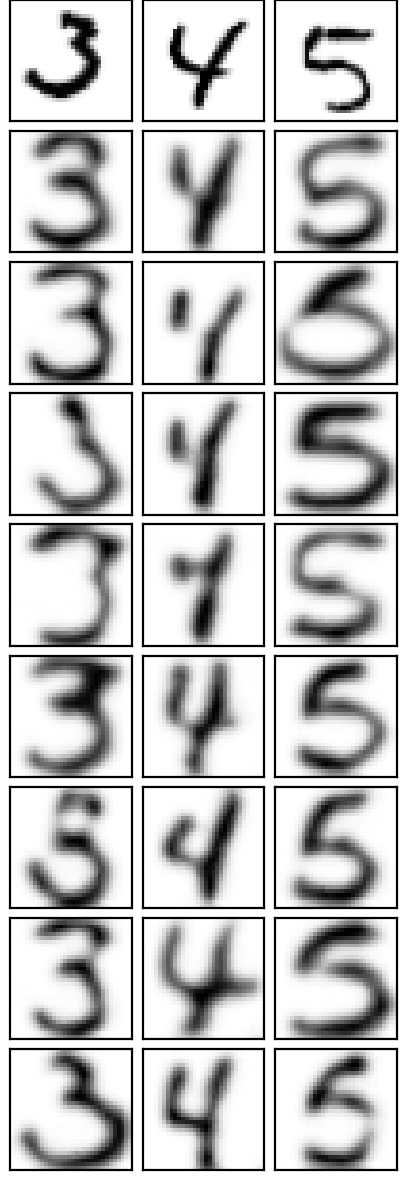}} 
    \caption{\label{fig:z_ablation} \centering  MNIST $\rightarrow$ USPS translation with functional $\mathcal{F}_{\text{G}}$ and varying $Z=1, 4, 8, 16, 32, 64$.}
\end{figure*}
\begin{figure*}[h!]
    \centering
    \includegraphics[width=0.5\textwidth]{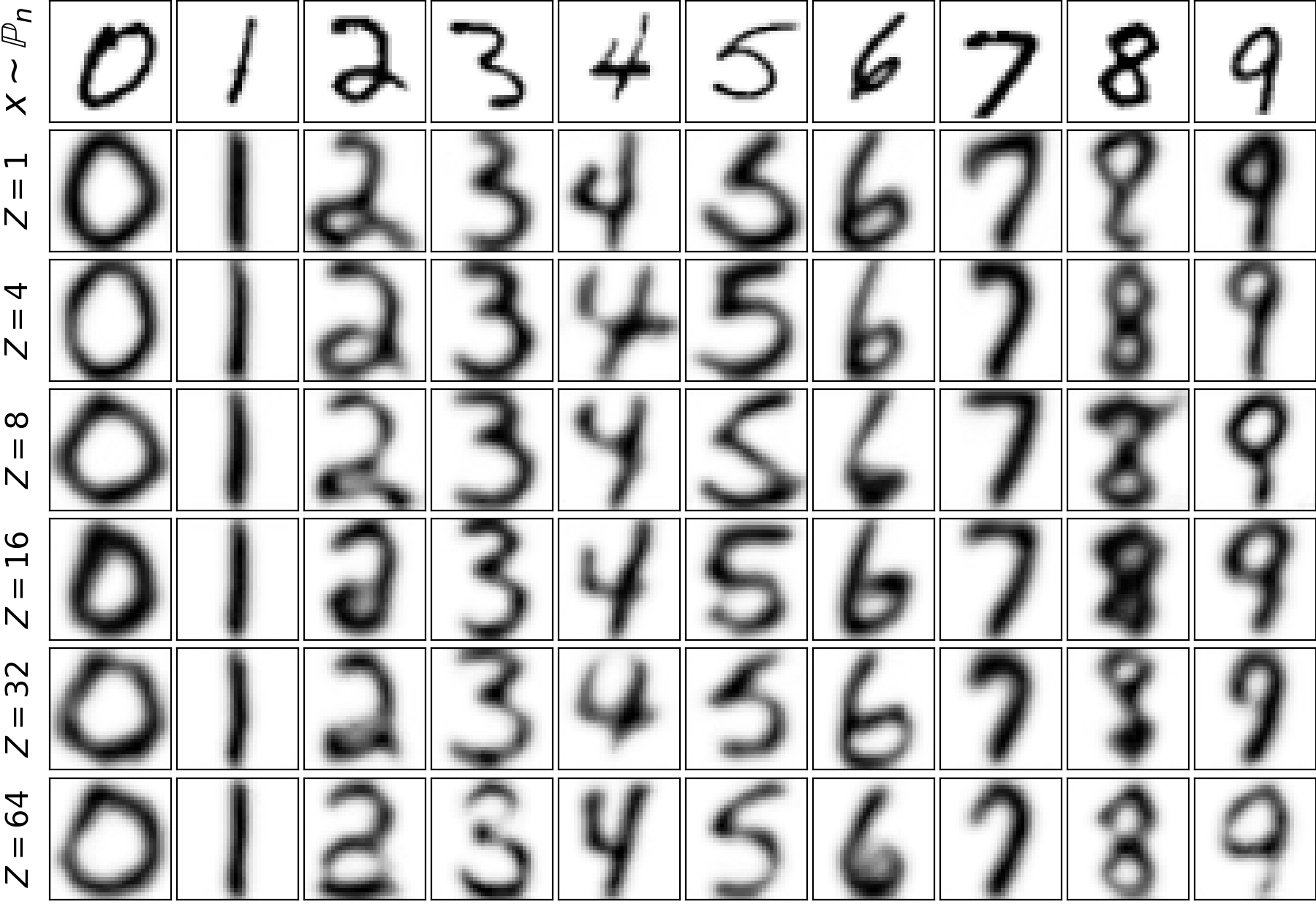} 
    \caption{\label{fig:z_ablation_full} Stochastic transport maps $T_{\theta}(x,z)$ learned by our Algorithm \ref{algorithm-da-not} with different sizes of $Z$.}
\end{figure*}
\begin{table*}[h!]\centering
\begin{center}
\begin{tabular}{c| c | c | c | c | c | c }
\hline
\textbf{Metrics} & $Z=1$ & $Z=4$  & $Z=8$    & $Z=16$   & $Z=32$   & $Z=64$ \\
\hline
Accuracy &86.96	&  93.48 &91.82  &92.08 &92.25 &92.95	\\
\hline 
FID &4.90	& 5.88 &4.63  &3.80 &4.34 &4.61	\\
\hline 
\end{tabular}
\end{center}
\caption{\centering Accuracy$\uparrow$ and FID$\downarrow$ of the stochastic maps MNIST $\rightarrow$ USPS learned by our translation method with different noise size $Z$.}
\label{tab:z_accuracy}
\vskip -5mm
\end{table*}

\subsection{Imbalanced Classes}
\label{sec-imbalance}
In this subsection, we study the behaviour of the optimal map for $\mathcal{F}_{\text{G}}$ when the classes are imbalanced in input and target domains. Since our method learns a transport map from $\mathbb{P}$ to $\mathbb{Q}$, it should capture the class balance of the $\mathbb{Q}$ \textit{regardless} of the class balance in $\mathbb{P}$. We check this below.

We consider MNIST $\rightarrow$ USPS datasets with $n=3$  classes in MNIST and $n=3$ classes in USPS. We assume that the class probabilities are $\alpha_{1}=\alpha_{2}=\frac{1}{2}$, $\alpha_{3}=0$ and $\beta_{1}=\beta_{2}=\beta_{3}=\frac{1}{3}$. That is, there is no class $3$ in the source dataset and it is not used anywhere during training. In turn, the target class $3$ is not used when training $T_{\theta}$ but is used when training $f_{\omega}$.  
All the hyperparameters are the same as in the previous MNIST $\rightarrow$ USPS experiments with 10 known labels in target classes. The results are shown in Figures \ref{fig:disbalance} and \ref{fig:disbalance_stoch}. We show deterministic (no $z$) and stochastic (with $z$) maps.

Our cost functional $\mathcal{F}_{\text{G}}$ stimulates the map to maximally preserve the input class. However, to transport $\mathbb{P}$ to $\mathbb{Q}$, the model \textit{must} change the class balance. We show the confusion matrix for learned maps $T_{\theta}$ in Figures \ref{fig:disbalance_CM}, \ref{fig:disbalance_stoch_CM}.
It illustrates that the model maximally preserves the input classes $0,1$ and uniformly distributes the input classes 0 and 1 into class 2, as suggested by our cost functional.
\begin{figure*}[t!]
    \centering
    \subfigure[\centering \label{fig:disbalance}Examples of transported digits $x\mapsto T_{\theta}(x)$.]{\includegraphics[width=0.65\textwidth]{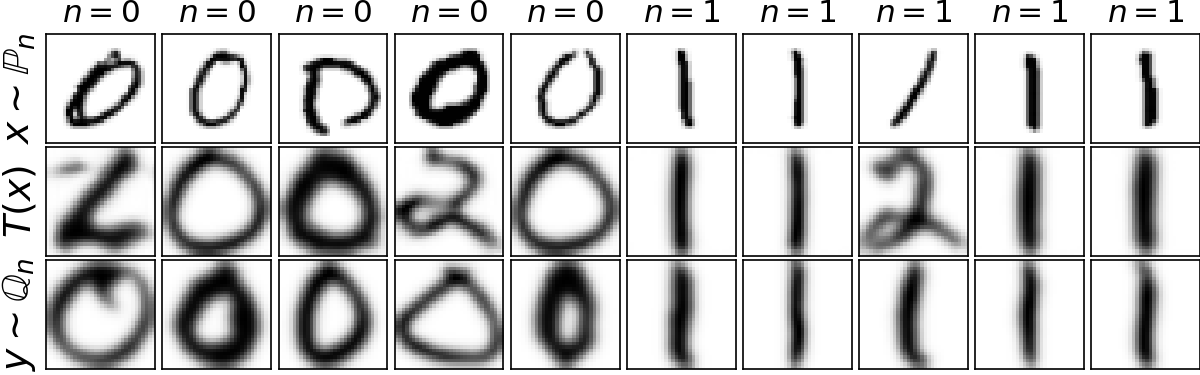}}
    \subfigure[\label{fig:disbalance_CM} Confusion matrix of $T_{\theta}(x)$.]{\includegraphics[width=0.33\textwidth]{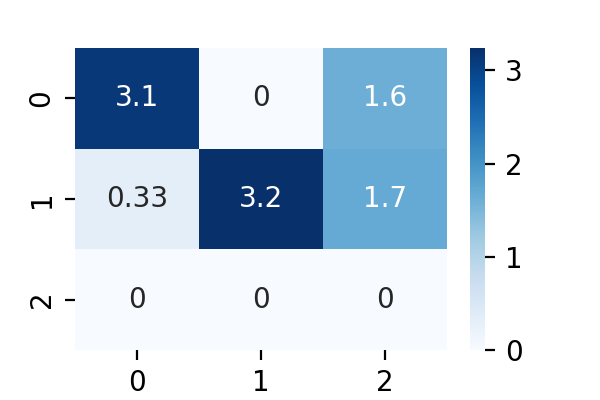}} 
    \caption{\centering Imbalanced MNIST $\rightarrow$ USPS translation with functional $\mathcal{F}_{\text{G}}$ (deterministic, no $z$).}
\end{figure*}
\begin{figure*}[t!]
    \centering
    \subfigure[\centering \label{fig:disbalance_stoch}Examples of transported digits $x\mapsto T_{\theta}(x,z)$, $z\sim\mathbb{S}$.]{\includegraphics[width=0.65\textwidth]{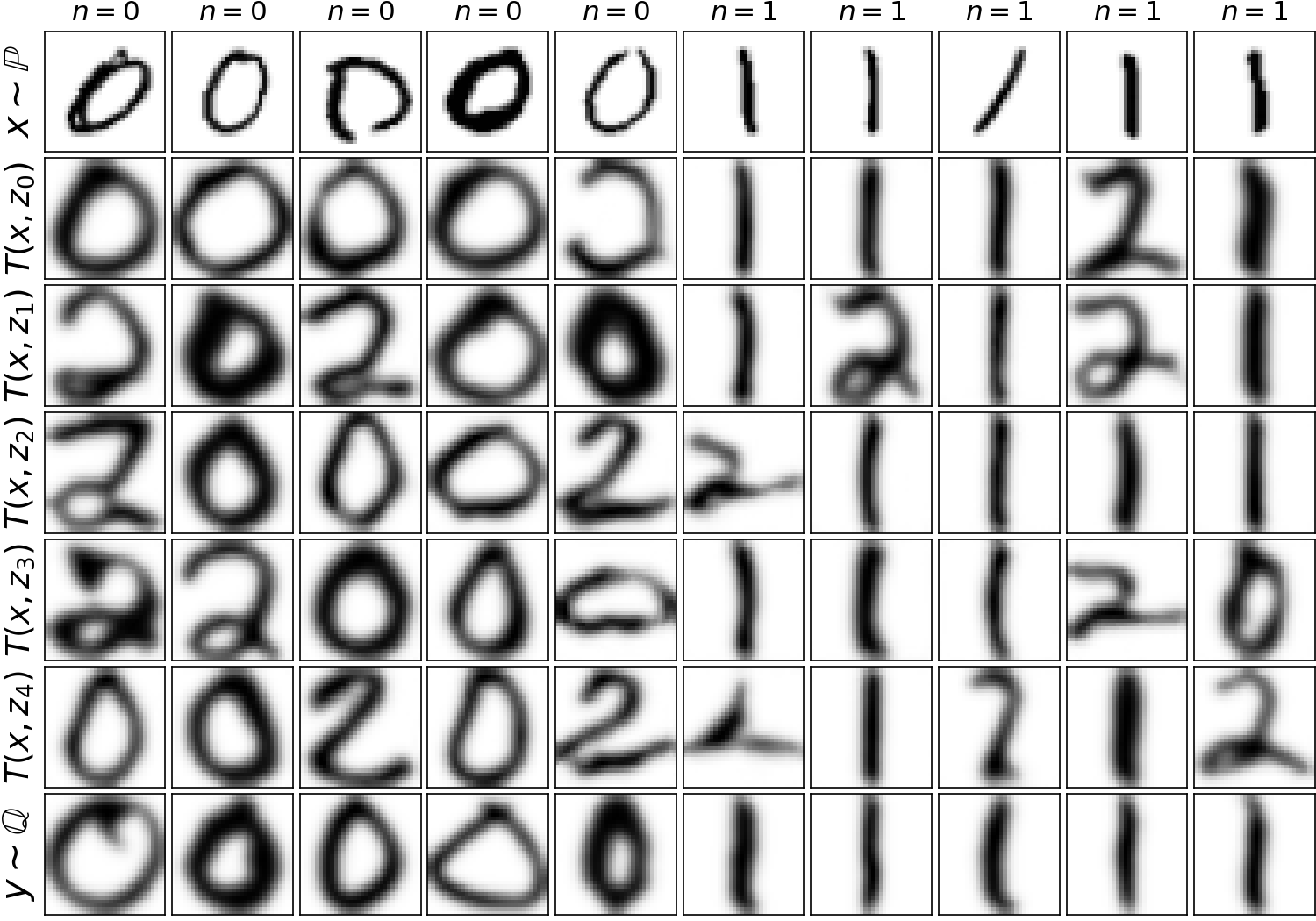}}
    \subfigure[\label{fig:disbalance_stoch_CM} Confusion matrix of $T_{\theta}(x,z)$.]{\includegraphics[width=0.33\textwidth]{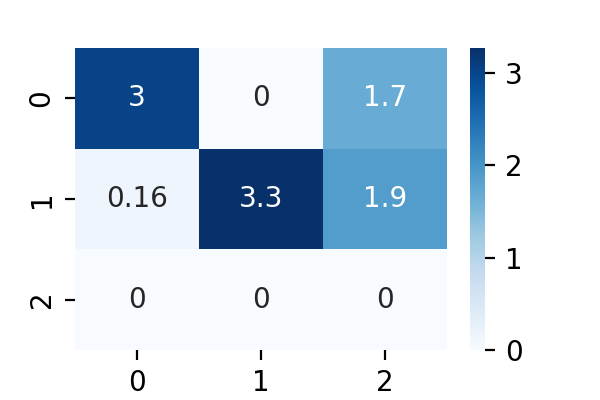}} 
    \caption{\centering Imbalanced MNIST $\rightarrow$ USPS translation with functional $\mathcal{F}_{\text{G}}$ (stochastic, with $z$).}
\end{figure*}

\subsection{ICNN-based dataset transfer}
\label{sec:icnn-ot}
\begin{figure}[h!]
    \centering
    \subfigure[\centering \label{fig:ICNN_M-U} MNIST$\rightarrow$USPS transfer.]{\includegraphics[width=0.47\textwidth]{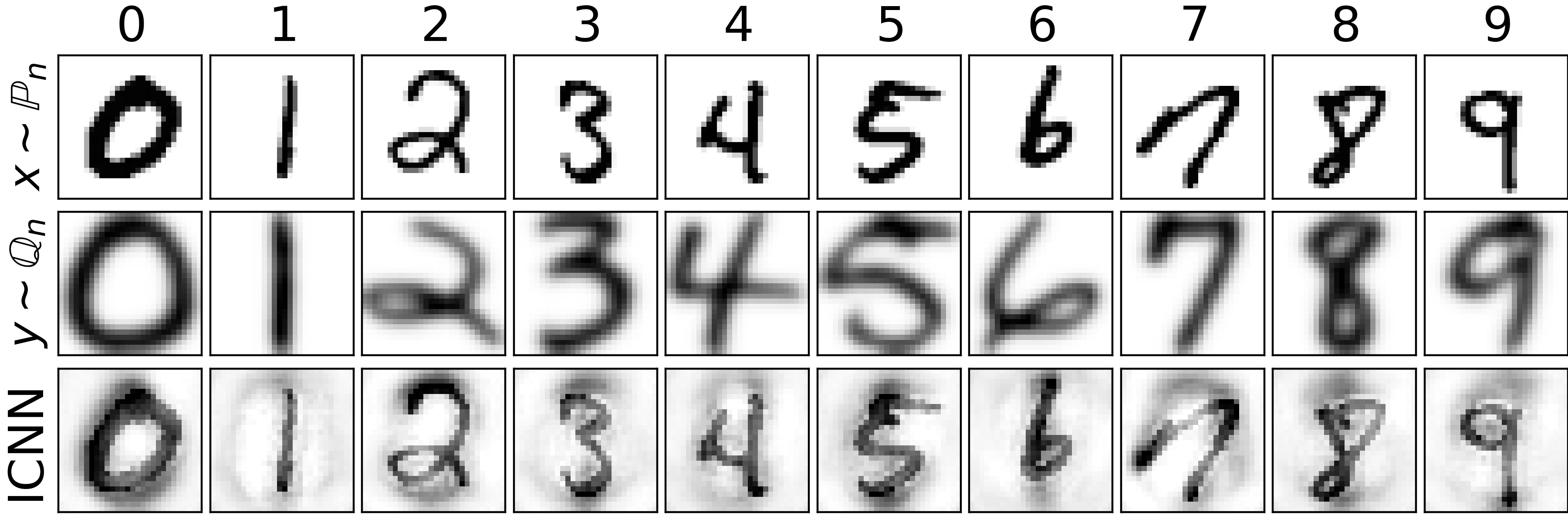}}
    \subfigure[\centering \label{fig:ICNN_F-M} FMNIST$\rightarrow$MNIST transfer.]{\includegraphics[width=0.47\textwidth]{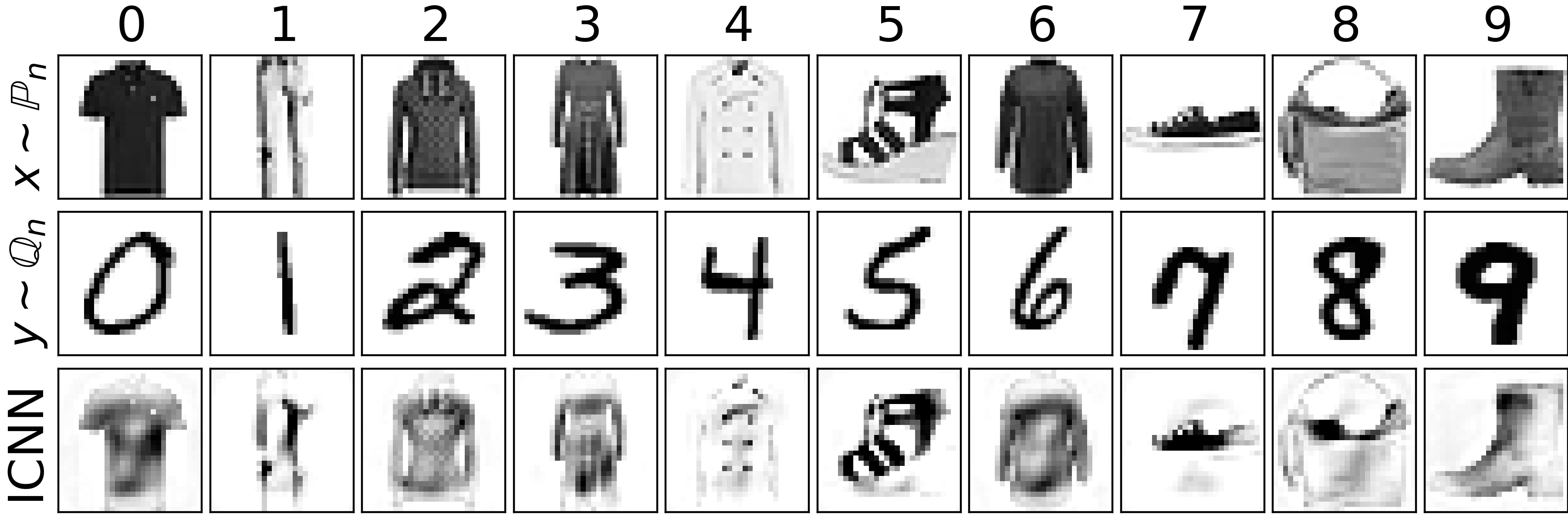}}  
    \caption{\label{fig:dataset-transfer-icnn} Results of ICNN-based method applied to the dataset transfer task.}
\end{figure}
For completeness, we show the performance of ICNN-based method for the classic \eqref{ot-primal-form} quadratic transport cost $c(x,y)=\frac{1}{2}\|x-y\|^{2}$ on the dataset transfer task. We use the non-minimax version \citep{korotin2019wasserstein} of the ICNN-based method by \citep{makkuva2020optimal}. We employ the publicly available code and dense ICNN architectures from the  Wasserstein-2 benchmark repository \footnote{\url{github.com/iamalexkorotin/Wasserstein2Benchmark}}.
The batch size is $K_B=32$, the total number of iterations is 100k, $lr=3\cdot10^{-3}$, and the Adam optimizer is used. The datasets are preprocessed as in the other experiments, see Appendix~\ref{sec-details}.

The qualitative results for MNIST$\rightarrow$USPS and FashionMNIST$\rightarrow$MNIST transfer are given in Figure \ref{fig:dataset-transfer-icnn}. The results are reasonable in the first case (related domains). However, they are visually unpleasant in the second case (unrelated domains). This is expected as the second case is notably harder. More generally, as derived in the Wasserstein-2 benchmark \citep{korotin2021neural}, the ICNN models do not work well in the pixel space due to the poor expressiveness of ICNN architectures.
The ICNN method achieved \textbf{18.8}\% accuracy and $\gg$\textbf{100} FID in the FMNIST$\rightarrow$MNIST transfer, and \textbf{35.6}\% and accuracy and \textbf{13.9} FID in the MNIST$\rightarrow$USPS case. All the metrics are much worse than those achieved by our general OT method with the class-guided functional $\mathcal{F}_{G}$, see Table \ref{tab:accuracy}, \ref{tab:FID} for comparison. 

\subsection{Classic cost OT for Dataset Transfer}

Our general cost functional-based algorithm can use \textbf{both labeled} and \textbf{unlabeled} target samples for training, which can be useful for the data transfer tasks. Existing continuous OT approaches do not handle a such type of training. Indeed, suppose we have additional information (labels) in the dataset and try to solve the class-guided mapping using the ICNN-based \citep{amos2017input} $\mathbb{W}_2$ algorithms \citep{korotin2023neural, fan2023neural}. In this scenario, we can train OT using only the labeled samples (10 separate maps in case of MNIST). The \textbf{unlabeled data immediately becomes useless}. Indeed, using unlabeled data for a class during training for that class implies that we know the labels for that class, which is a contradiction. 

\begin{figure}[h!]
    \centering
    \subfigure[\centering FMNIST$\rightarrow$MNIST transfer on '0' class.]{\includegraphics[width=0.5\textwidth]{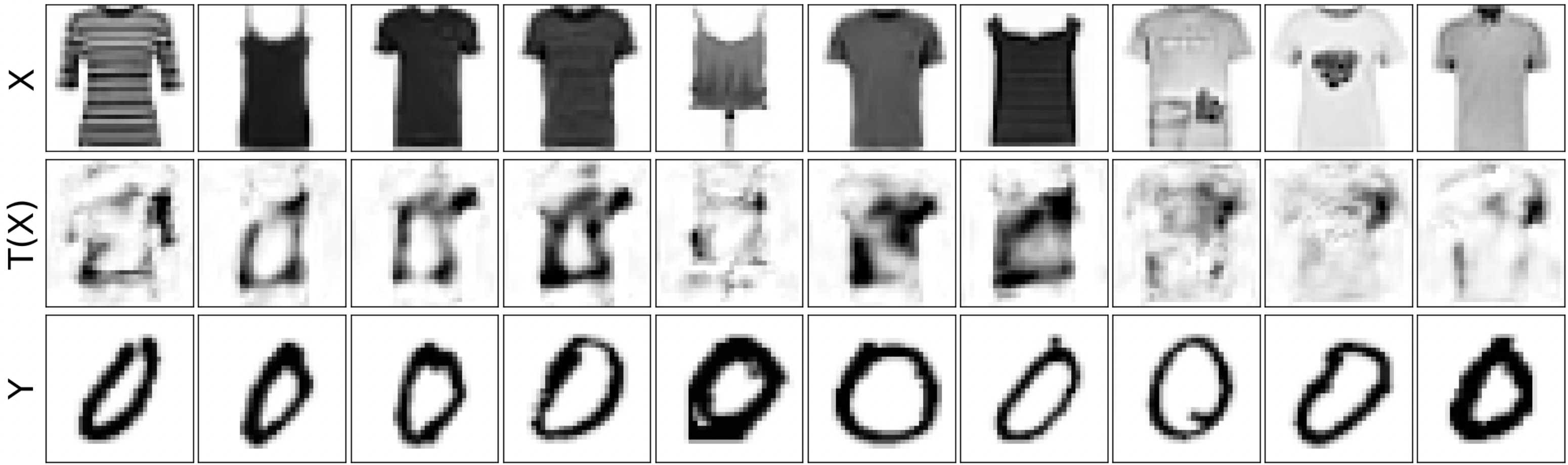}}  
    \caption{\label{fig:dataset-transfer-not} Results of classic OT ($\mathbb{W}_2$) method applied to the dataset transfer task.}
\end{figure}

For illustrative purposes, we performed these experiments using the $\mathbb{W}_2$ algorithm for one of the clases on the FMNIST to MNITS mapping problem. We clearly see that the qualitative results are not competitive with our algorithm. This is because $\mathbb{W}_2$ is forced to train with only 10 target samples, the only labeled target samples in the problem setup.

\subsection{Non-default class correspondence}
\label{sec:non-default}
\begin{figure}[h!]
\centering
    \includegraphics[width=0.47\textwidth]{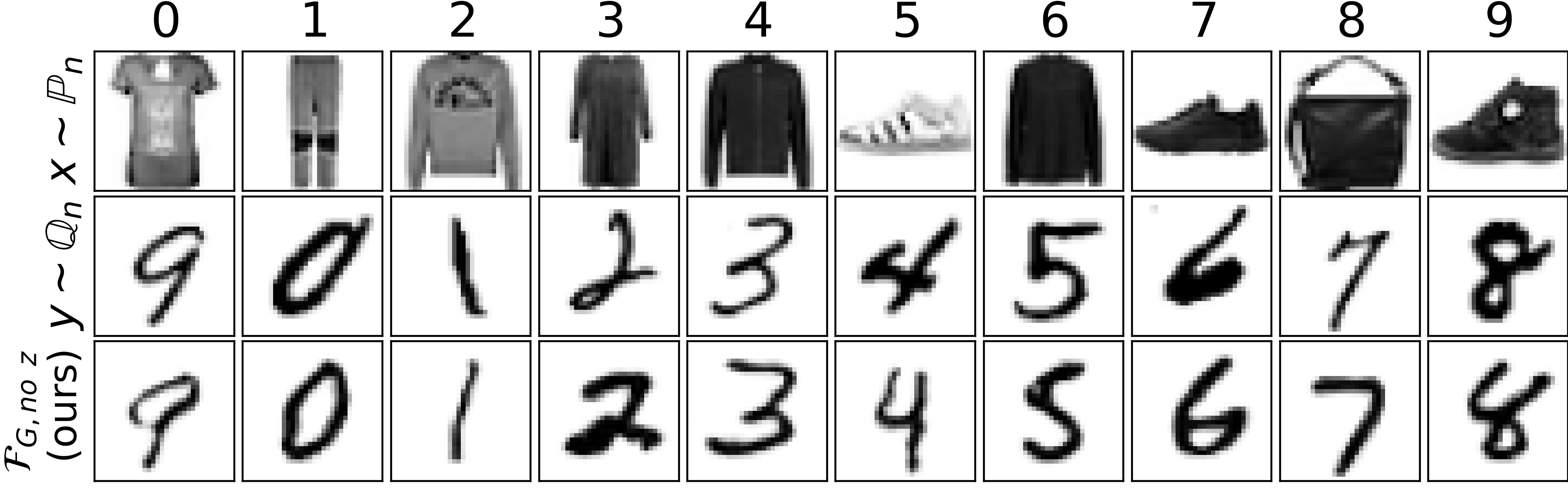}
    \caption{\centering \label{fig:Reandom_F-M} FMNIST$\rightarrow$MNIST mapping with $\mathcal{F}_{G}$ no $z$ cost, classes are permuted.}
\end{figure}
\vspace{-3mm}
To show that our method can work with any arbitrary correspondence between datasets, we also consider FMNIST$\rightarrow$MNIST dataset transfer with the following non-default correspondence between the dataset classes:
$$0\veryshortarrow9, 1\veryshortarrow0, 2\veryshortarrow1, 3\veryshortarrow2, 4\veryshortarrow3, 5\veryshortarrow4, 6\veryshortarrow5, 7\veryshortarrow6, 8\veryshortarrow7, 9\veryshortarrow8.$$
In this experiment, we use the same architectures and data preprocessing as in dataset transfer tasks; see Appendix \ref{sec-details}. We use our $\mathcal{F}_{G}$ \eqref{da-func} as the cost functional and learn a deterministic transport map $T$ (no $z$). In this setting, our method produces comparable results to the previously reported in Section~\ref{sec-experiments} accuracy equal to \textbf{83.1}, and FID \textbf{6.69}. The qualitative results are given in Figure \ref{fig:Reandom_F-M}. 

\subsection{In domain class-preserving}
\label{sec:In-domain}
\vspace{-2mm}
\begin{figure}[h!]
\centering
    \includegraphics[width=0.47\textwidth]{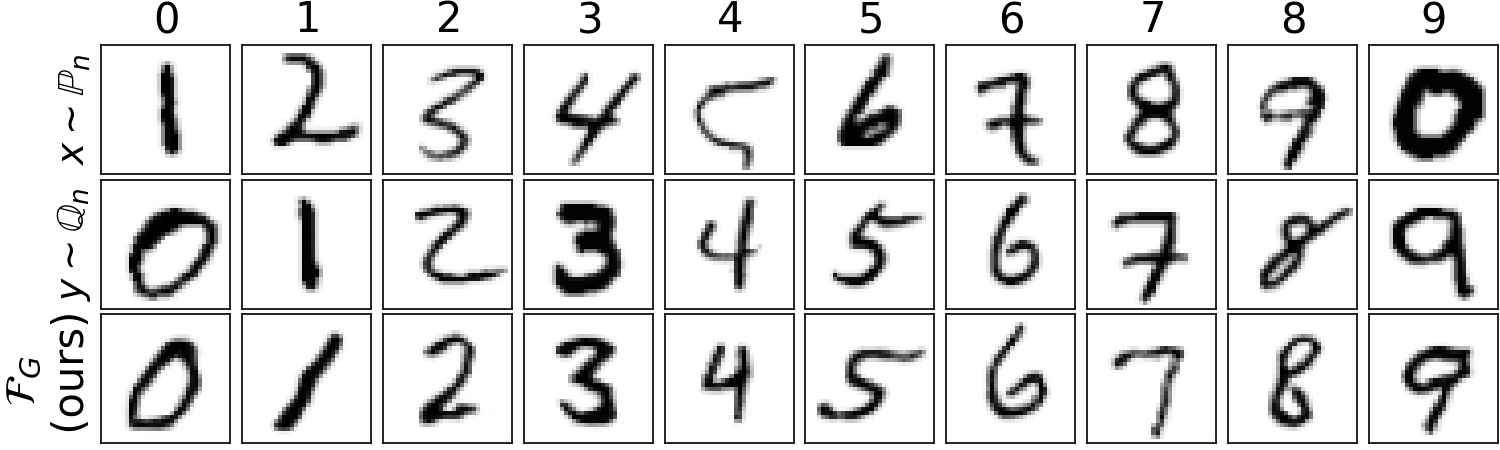}
    \caption{\centering \label{fig:Indomain} MNIST$\rightarrow$MNIST mapping with $\mathcal{F}_{G}$, classes are permuted.}
\end{figure}
\vspace{-3mm}
To provide an additional illustration, we also consider MNIST$\rightarrow$MNIST dataset transfer with the following non-default correspondence between the dataset classes:
$$0\veryshortarrow9, 1\veryshortarrow0, 2\veryshortarrow1, 3\veryshortarrow2, 4\veryshortarrow3, 5\veryshortarrow4, 6\veryshortarrow5, 7\veryshortarrow6, 8\veryshortarrow7, 9\veryshortarrow8.$$
In this experiment, we use the same architectures,
data preprocessing and metrics as in dataset transfer tasks (\ref{sec-details}). We use our $\mathcal{F}_{G}$ \eqref{da-func} as the cost functional and learn a transport map $T$. The resulted accuracy is equal to \textbf{95.1}, and FID \textbf{3.35}. The qualitative results are given in Figure \ref{fig:Indomain}.

\subsection{Solving batch effect}
\label{sec:batch_effect}
\vspace{-2mm}
The batch effect is a well-known issue in biology, particularly in high-throughput genomic studies such as gene expression microarrays, RNA-seq, and proteomics~\citep{leek2010tackling}. It occurs when non-biological factors, such as different processing times or laboratory conditions, introduce systematic variations in the data. Addressing batch effects is crucial for ensuring robust and reproducible findings in biological research~\citep{lazar2013batch}. By solving this problem using our method, directly in the input space, we can preserve the samples' intrinsic structure, minimizing artifacts, and ensuring \textit{biological} validation. 

In our experiments, we map classes across two domains: \textit{TM-baron-mouse-for-segerstolpe} and \textit{segerstolpe-human}, consisting of 3,329 and 2,108 samples, respectively. The data was generated by the Splatter package~\citep{zappia2017splatter}. Each domain consists of eight classes. The source domain $\mathbb{P}$ is fully-labelled, and the target $\mathbb{Q}$ contains 10 labelled samples per class. Each sample is a 657-sized vector pre-processed with default Splatter settings.~\citep{zappia2017splatter}. 

We employed feed-forward networks with one hidden layer of size 512 for the map $T_\theta$ and a hidden layer of size 1024 for the potential network $v_{\omega}$. To evaluate the accuracy, we trained single-layer neural network classifiers with soft-max output activation, using the available target data. Our method improved accuracy from \textbf{63.0 $\rightarrow$ 92.5}. Meanwhile, the best DOT solver (EMD) identified through search, as described in Appendix \ref{sec-details}, reduced accuracy from \textbf{63.0 $\rightarrow$ 50.4}.

\newpage
\vspace{-2mm}
\section{General functionals with conditional interaction energy regularizer}
\label{sec-strongly-convex}
\vspace{-2mm}

Generally speaking, for practically useful general cost functionals $\mathcal{F} : \mathcal{M}(\mathcal{X} \times \mathcal{Y}) \rightarrow \mathbb{R}\cup\{+\infty\}$ it may be difficult or even impossible to establish their strict or strong convexity. For instance, our considered class-guided functional $\mathcal{F}_{\text{G}}$ \eqref{da-func} is not necessarily strictly convex. In such cases, the maps $T^*$ which solve \eqref{min-max-dual} are not necessarily stochastic OT maps, and our duality gap analysis (Theorem~\ref{thm-errest}) is not directly applicable.

In this section, we propose a generic way to overcome this problem by means of strongly convex regularizers. Let $\mathcal{F}, \mathcal{R}: \mathcal{M}(\mathcal{X} \times \mathcal{Y}) \rightarrow \mathbb{R}\cup\{+\infty\}$ be convex, lower semi-continuous functionals, which are equal to $+\infty$ on $\mu \in \mathcal{M}(\mathcal{X} \times \mathcal{Y}) \setminus \mathcal{P}(\mathcal{X} \times \mathcal{Y})$. Additionally, we will assume that $\mathcal{R}$ is $\beta$-\textit{strongly} convex on $\Pi(\mathbb{P})$ in some metric $\rho(\cdot, \cdot)$. For $\gamma > 0$, one may consider the following $\mathcal{R}$-regularized general OT problem:
\begin{equation*}
    \inf\limits_{\pi \in \Pi(\mathbb{P}, \mathbb{Q})} \big\{\mathcal{F}(\pi) + \gamma \mathcal{R}(\pi)\big\}.
\end{equation*}
Note that $\pi \mapsto \mathcal{F}(\pi) + \gamma \mathcal{R}(\pi)$ is convex, lower semi-continuous, separately *-increasing (since it equals $+\infty$ outside $\pi \in \cP(\cX \times \cY)$, see the proof of Theorem~\ref{prop-fg-convex} in Appendix~\ref{sec-proofs}) and $\beta \gamma$-\textit{strongly} convex on $\Pi(\mathbb{P})$ in $\rho(\cdot, \cdot)$. In the considered setup, functional $\mathcal{F}$ corresponds to a real problem a practitioner may want to solve, and functional $\mathcal{R}$ is the regularizer which slightly shifts the resulting solution but induces nice theoretical properties. Our proposed technique resembles the idea of the Neural Optimal Transport with Kernel Variance \citep{korotin2023kernel}. In this section, we generalize their approach and make it applicable to our duality gap analysis (Theorem~\ref{thm-errest}).
Below we introduce an example of a strongly convex regularizer. Corresponding \underline{practical demonstrations} are left to Appendix~\ref{sec-kernel}.

\textbf{Conditional interaction energy functional.} Let $(\mathcal{Y}, l)$ be a semimetric space of negative type \citep[\S 2.1]{sejdinovic2013equivalence}, i.e. $l : \mathcal{Y} \times \mathcal{Y} \rightarrow \mathbb{R}$ is the semimetric and $\forall N \geq 2\, , \, y_1, y_2, \dots, y_N \in \mathcal{Y}$ and $\forall \alpha_1,, \alpha_2, \dots \alpha_N \in \mathbb{R}$ such that $\sum_{n = 1}^{N} \alpha_n = 0$ it holds $\sum_{n = 1}^{N} \sum_{n' = 1}^N \alpha_n \alpha_{n'} l(y_n, y_{n'}) \leq 0$. The (square of) energy distance $\mathcal{E}_l$ w.r.t. semimetric $l$ between probability distributions $\mathbb{Q}_1, \mathbb{Q}_2 \in \mathcal{P}(\mathcal{Y})$ is (\citep[\S 2.2]{sejdinovic2013equivalence}):
\begin{equation}
    \mathcal{E}_l^{2}(\mathbb{Q}_{1},\mathbb{Q}_{2})=2 \mathbb{E} l(Y_{1}, Y_{2})  - \mathbb{E}l(Y_{1}, Y'_{1}) - \mathbb{E}l(Y_{2}, Y'_{2}),
    \label{d-energy-distance}
\end{equation}
where $Y_1, Y'_1 \sim \mathbb{Q}_1$; $Y_2, Y'_2 \sim \mathbb{Q}_2$. Note that for $l(y, y') = \frac{1}{2} \Vert y - y'\Vert_2$ formula \eqref{d-energy-distance} reduces to \eqref{energy-distance}. The energy distance is known to be a metric on $\mathcal{P}(\mathcal{Y})$ \citep{klebanov2005n} (note that $\mathcal{Y}$ is compact). The examples of semimetrics of negative type include $l(y, y') = \Vert x - y \Vert_p^{\min\{ 1, p \}}$ for $0 < p \leq 2$ \citep[Th. 3.6]{meckes2013positive}.

Consider the following generalization of energy distance $\mathcal{E}_l$ on space $\Pi(\mathbb{P})$. Let $\pi_1, \pi_2 \in \Pi(\mathbb{P})$.
\begin{equation}
    \rho_l^2(\pi_1, \pi_2) \defeq \int_{\mathcal{X}} \mathcal{E}_l^{2}(\pi_1(\cdot \vert x), \pi_2(\cdot \vert x)) d \mathbb{P}(x).
\end{equation}

\begin{proposition}\label{condED-is-metric} It holds that $\rho_l(\cdot, \cdot)$ is a metric on $\Pi(\mathbb{P})$.
\end{proposition}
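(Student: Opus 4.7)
The plan is to verify the four axioms of a metric in turn, leveraging the fact (stated just before the proposition) that $\mathcal{E}_l$ itself is a metric on $\mathcal{P}(\mathcal{Y})$, combined with the disintegration theorem and Minkowski's inequality in $L^2(\mathbb{P})$.

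First, I would set up the measurability: for every $\pi \in \Pi(\mathbb{P})$, the disintegration theorem gives a $\mathbb{P}$-a.s.\ unique measurable family of conditionals $\{\pi(\cdot\mid x)\}_{x \in \mathcal{X}}$. Writing $\mathcal{E}_l^2(\pi_1(\cdot\mid x), \pi_2(\cdot\mid x))$ as the sum of three integrals of the continuous kernel $l$ against products of the conditionals, Fubini together with measurability of the disintegration shows that $x \mapsto \mathcal{E}_l^2(\pi_1(\cdot\mid x), \pi_2(\cdot\mid x))$ is measurable, so $\rho_l^2(\pi_1, \pi_2)$ is well defined (and finite thanks to compactness of $\mathcal{Y}$).

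Next I would dispatch the easy axioms. Non-negativity and symmetry of $\rho_l$ are immediate from non-negativity and symmetry of $\mathcal{E}_l^2(\cdot,\cdot)$ pointwise in $x$. If $\pi_1 = \pi_2$, then $\pi_1(\cdot\mid x) = \pi_2(\cdot\mid x)$ for $\mathbb{P}$-a.e.\ $x$, and the integrand vanishes $\mathbb{P}$-a.s. Conversely, if $\rho_l(\pi_1, \pi_2) = 0$, the non-negative integrand vanishes for $\mathbb{P}$-a.e.\ $x$; since $\mathcal{E}_l$ is a metric on $\mathcal{P}(\mathcal{Y})$, this yields $\pi_1(\cdot\mid x) = \pi_2(\cdot\mid x)$ for $\mathbb{P}$-a.e.\ $x$, and together with the shared first marginal $\mathbb{P}$, the uniqueness part of the disintegration theorem gives $\pi_1 = \pi_2$ as measures on $\mathcal{X} \times \mathcal{Y}$.

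The one step requiring a little care is the triangle inequality. For $\pi_1, \pi_2, \pi_3 \in \Pi(\mathbb{P})$, the fact that $\mathcal{E}_l$ is a metric gives the pointwise bound
\begin{equation*}
  \mathcal{E}_l(\pi_1(\cdot\mid x), \pi_3(\cdot\mid x)) \leq \mathcal{E}_l(\pi_1(\cdot\mid x), \pi_2(\cdot\mid x)) + \mathcal{E}_l(\pi_2(\cdot\mid x), \pi_3(\cdot\mid x))
\end{equation*}
for $\mathbb{P}$-a.e.\ $x$. Viewing $x \mapsto \mathcal{E}_l(\pi_i(\cdot\mid x), \pi_j(\cdot\mid x))$ as elements of $L^2(\mathbb{P})$ (finite by compactness of $\mathcal{Y}$), Minkowski's inequality in $L^2(\mathbb{P})$ applied to this pointwise bound yields $\rho_l(\pi_1, \pi_3) \leq \rho_l(\pi_1, \pi_2) + \rho_l(\pi_2, \pi_3)$, which is what we need.

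The main (mild) obstacle is the measurability bookkeeping: one must ensure that the pointwise identities and inequalities used above genuinely hold on a common $\mathbb{P}$-full-measure set of $x$, and that the functions being plugged into Minkowski are measurable. Both are handled uniformly by fixing measurable disintegration representatives of $\pi_1, \pi_2, \pi_3$ once and for all at the start of the argument.
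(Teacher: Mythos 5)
Your proposal is correct and follows essentially the same route as the paper: the paper also reduces everything to the pointwise triangle inequality for $\mathcal{E}_l$ and obtains the triangle inequality for $\rho_l$ by squaring and applying the Cauchy--Bunyakovsky inequality, which is exactly the $L^2(\mathbb{P})$ Minkowski argument you describe. Your extra care with disintegration/measurability and with the identity of indiscernibles is a harmless (indeed slightly more complete) elaboration of the same proof.
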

\begin{proof}[Proof of Proposition \ref{condED-is-metric}] 
Obviously, $\forall \pi \in \Pi(\mathbb{P}) : \rho_l(\pi, \pi) = 0$ and $\forall \pi_1, \pi_2 \in \Pi(\mathbb{P}): \rho_l(\pi_1, \pi_2) = \rho_l(\pi_2, \pi_1) \geq 0$. We are left to check the triangle inequality. Consider $\pi_1, \pi_2, \pi_3 \in \Pi(\mathbb{P})$. In what follows, for $\pi \in \Pi(\mathbb{P})$ and $x \in \mathcal{X}$, we denote the conditional distribution $\pi(\cdot \vert x)$ as $\pi^x$:
\begin{eqnarray}
    \rho_l(\pi_1, \pi_2) + \rho_l(\pi_2, \pi_3) \geq \rho_l(\pi_1, \pi_3) \Leftrightarrow  \nonumber \\
    (\rho_l(\pi_1, \pi_2) + \rho_l(\pi_2, \pi_3))^2 \geq \rho_l^2(\pi_1, \pi_3) \Leftrightarrow \nonumber \\
    \!\!\!\!\!\!\!\!\!\!\!\!\int_\cX \!\!\!\!\cE_l^2(\pi_1^x, \pi_2^x) d \bbP(x) \!+\!\! \int_\cX \!\!\!\!\cE_l^2(\pi_2^x, \pi_3^x) d \bbP(x) \!+\! 2 \sqrt{\!\!\int_\cX \!\!\!\!\cE_l^2(\pi_1^x, \pi_2^x) d \bbP(x) \!\!\int_\cX  \!\!\!\!\cE_l^2(\pi_2^x, \pi_3^x) d \bbP(x) } \!\geq\!\! \int_\cX \!\!\!\!\cE_l^2(\pi_1^x, \pi_3^x) d \bbP(x) \Leftarrow \label{condED-is-metric-KBS-appl} \\
    \int_\cX \!\!\cE_l^2(\pi_1^x, \pi_2^x) d \bbP(x) \!+\! \int_\cX \!\!\cE_l^2(\pi_2^x, \pi_3^x) d \bbP(x) \!+ 2\!\int_\cX \!\!\cE_l(\pi_1^x, \pi_2^x) \cE_l(\pi_2^x, \pi_3^x) d \bbP(x) \!\geq\! \int_\cX \!\!\cE_l^2(\pi_1^x, \pi_3^x) d \bbP(x) \Leftrightarrow \nonumber \\
    \int_\cX \underbrace{\big[(\cE_l(\pi_1^x, \pi_2^x) + \cE_l(\pi_2^x, \pi_3^x))^2 - \cE_l^2(\pi_1^x, \pi_3^x)\big]}_{\geq 0 \text{ due to triangle inequality for } \cE_l} d \bbP(x) \geq 0 \nonumber,
\end{eqnarray}
where in line \eqref{condED-is-metric-KBS-appl} we apply the Cauchy–Bunyakovsky inequality \citep{bouniakowsky1859quelques}:
\begin{equation*}
    \int_\cX \!\!\cE_l^2(\pi_1^x, \pi_2^x) d \bbP(x) \!\!\int_\cX  \!\!\cE_l^2(\pi_2^x, \pi_3^x) d \bbP(x) \geq \left(\int_\cX \cE_l(\pi_1^x, \pi_2^x) \cE_l(\pi_2^x, \pi_3^x) d \bbP(x)\right)^2.
\end{equation*}
\end{proof}

Now we are ready to introduce our proposed strongly convex (w.r.t. $\rho_l$) regularizer. Let $\pi \in \Pi(\cX)$ and $l$ be a semimetric on $\cY$ of negative type. We define
\begin{equation}
    \cR_l(\pi) = -\frac{1}{2} \int_\cX \int_\cY \int_\cY l(y, y') d \pi^x(y) d \pi^x(y') d \bbP(x).
\end{equation}
We call $\cR_l$ to be \textit{conditional interaction energy functional}.
In the context of solving the OT problem, it was first introduced in \citep{korotin2023kernel} from the perspectives of RKHS and kernel embeddings \citep[\S 3]{sejdinovic2013equivalence}. The authors of \citep{korotin2023kernel} establish the conditions under which the semi-dual (max-min) formulation of weak OT problem regularized with $\cR_l$ yields the unique solution, i.e., they deal with the \textit{strict} convexity of $\cR_l$. In contrast, our paper exploits \textit{strong} convexity and provides the additional error analysis (Theorem~\ref{thm-errest}) which helps with tailoring theoretical guarantees to actual practical procedures for arbitrary strongly convex functionals.
Below, we prove that $\cR_l$ is strongly convex w.r.t. $\rho_l$ and, under additional assumptions on $l$, is lower semi-continuous.

\begin{proposition}\label{condIEF-is-strong-cvx} $\cR_l$ is $1$-strongly convex on $\Pi(\bbP)$ w.r.t. $\rho_l$.
\end{proposition}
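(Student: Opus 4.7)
The plan is to prove the strong convexity inequality by a direct computation, showing in fact that equality holds in the defining inequality of strong convexity with $\beta=1$. The first preliminary step is to observe that for $\pi_1,\pi_2\in\Pi(\bbP)$ and $\alpha\in[0,1]$, the mixture $\pi_\alpha\defeq\alpha\pi_1+(1-\alpha)\pi_2$ lies in $\Pi(\bbP)$, and its disintegration against $\bbP$ satisfies $\pi_\alpha^x=\alpha\pi_1^x+(1-\alpha)\pi_2^x$ for $\bbP$-a.e.\ $x\in\cX$. This follows because both $\pi_1,\pi_2$ share the first marginal $\bbP$, so the uniqueness of disintegration applies and a mixture of disintegrations is the disintegration of the mixture (an argument analogous to the one already used in the proof of Theorem~\ref{prop-fg-convex}).

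Next, for each $x\in\cX$ I will abbreviate $A_{ij}(x)\defeq\int_\cY\int_\cY l(y,y')\,d\pi_i^x(y)\,d\pi_j^x(y')$ for $i,j\in\{1,2\}$. Expanding the quadratic integrand at the conditional mixture gives $\int_\cY\int_\cY l(y,y')\,d\pi_\alpha^x(y)\,d\pi_\alpha^x(y')=\alpha^2 A_{11}(x)+2\alpha(1-\alpha)A_{12}(x)+(1-\alpha)^2A_{22}(x)$. Subtracting this from $\alpha A_{11}(x)+(1-\alpha)A_{22}(x)$ (the term arising from the convex combination on the right-hand side of the strong-convexity inequality) and using $\alpha-\alpha^2=\alpha(1-\alpha)$ yields the pointwise identity $\alpha A_{11}(x)+(1-\alpha)A_{22}(x)-\bigl[\alpha^2 A_{11}(x)+2\alpha(1-\alpha)A_{12}(x)+(1-\alpha)^2A_{22}(x)\bigr]=\alpha(1-\alpha)\bigl[A_{11}(x)+A_{22}(x)-2A_{12}(x)\bigr]$. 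Multiplying by $-\tfrac{1}{2}$ and integrating against $\bbP$, I obtain $\alpha\cR_l(\pi_1)+(1-\alpha)\cR_l(\pi_2)-\cR_l(\pi_\alpha)=-\tfrac{1}{2}\alpha(1-\alpha)\int_\cX\!\bigl[A_{11}(x)+A_{22}(x)-2A_{12}(x)\bigr]d\bbP(x)$.

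The final step is to identify the bracketed integrand as $-\cE_l^2(\pi_1^x,\pi_2^x)$, which is immediate from the definition \eqref{d-energy-distance}: with $Y_i,Y_i'\sim\pi_i^x$ one has $\cE_l^2(\pi_1^x,\pi_2^x)=2A_{12}(x)-A_{11}(x)-A_{22}(x)$. Substituting and using the definition of $\rho_l$ gives $\alpha\cR_l(\pi_1)+(1-\alpha)\cR_l(\pi_2)-\cR_l(\pi_\alpha)=\tfrac{1}{2}\alpha(1-\alpha)\int_\cX\cE_l^2(\pi_1^x,\pi_2^x)\,d\bbP(x)=\tfrac{1}{2}\alpha(1-\alpha)\rho_l^2(\pi_1,\pi_2)$, which is exactly the $1$-strong convexity identity (stronger than the required inequality in Definition~\ref{def-strongconv}).

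The only subtle point, and thus the main obstacle, is the disintegration identity $\pi_\alpha^x=\alpha\pi_1^x+(1-\alpha)\pi_2^x$; once this is justified, the remainder is a routine quadratic expansion plus recognition of the energy-distance formula. Measurability of the map $x\mapsto A_{ij}(x)$ and finiteness of the integrals follow from continuity of $l$ on the compact space $\cY\times\cY$, so no further technical assumptions are needed.
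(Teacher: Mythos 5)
Your proof is correct and follows essentially the same route as the paper's: expand the quadratic form at the conditional mixture, regroup coefficients, and recognize $2A_{12}-A_{11}-A_{22}$ as $\cE_l^2(\pi_1^x,\pi_2^x)$ to obtain the exact $1$-strong-convexity identity. The only difference is that you explicitly justify the disintegration identity $\pi_\alpha^x=\alpha\pi_1^x+(1-\alpha)\pi_2^x$, which the paper uses implicitly (it was effectively established in the proof of Theorem~\ref{prop-fg-convex}).
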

\begin{proof}[Proof of Proposition \ref{condIEF-is-strong-cvx}] 
Let $\pi_1, \pi_2 \in \Pi(\bbP)$, $0 \leq \alpha \leq 1$. Consider the left-hand side of \eqref{def-strong-conv}:
\begin{eqnarray}
    \cR_l(\alpha \pi_1 + (1 - \alpha) \pi_2) = \nonumber \\
    -\frac{1}{2} \int_\cX \int_{\cY\times\cY} \!\!\!\! l(y, y') d [\alpha \pi_1 + (1 - \alpha) \pi_2]^x(y) d [\alpha \pi_1 + (1 - \alpha) \pi_2]^x(y') d \bbP(x) = \nonumber \\
    - \frac{1}{2} \alpha^2 \int_\cX \int_{\cY\times\cY} \!\!\!\! l(y, y') d \pi_1^x(y) d \pi_1^x(y') d \bbP(x) + \quad\,\, \nonumber \\
    - \alpha (1 - \alpha) \int_\cX \int_{\cY\times\cY} \!\!\!\! l(y, y') d \pi_1^x(y) d \pi_2^x(y') d \bbP(x) + \quad\,\, \nonumber \\
    - \frac{1}{2} (1 - \alpha)^2 \int_\cX \int_{\cY\times\cY} \!\!\!\! l(y, y') d \pi_2^x(y) d \pi_2^x(y') d \bbP(x) = \nonumber \\
    \bigg( - \frac{1}{2} \alpha + \frac{1}{2} \alpha(1 - \alpha)\bigg) \int_\cX \int_{\cY\times\cY} \!\!\!\! l(y, y') d \pi_1^x(y) d \pi_1^x(y') d \bbP(x) + \quad\,\, \nonumber \\
    - \alpha (1 - \alpha) \int_\cX \int_{\cY\times\cY} \!\!\!\! l(y, y') d \pi_1^x(y) d \pi_2^x(y') d \bbP(x) + \quad\,\, \nonumber \\
    \bigg( - \frac{1}{2} (1 - \alpha) +  \frac{1}{2} \alpha (1 - \alpha)\bigg) \int_\cX \int_{\cY\times\cY} \!\!\!\! l(y, y') d \pi_2^x(y) d \pi_2^x(y') d \bbP(x) = \nonumber \\
    \underbrace{- \frac{1}{2} \alpha \!\!\int_\cX \!\int_{\cY\times\cY} \!\!\!\! l(y, y') d \pi_1^x(y) d \pi_1^x(y') d \bbP(x)}_{=\alpha \cR_l(\pi_1)} \underbrace{- \frac{1}{2}(1 - \alpha) \!\!\int_\cX\! \int_{\cY\times\cY} \!\!\!\! l(y, y') d \pi_2^x(y) d \pi_2^x(y') d \bbP(x)}_{=(1 - \alpha) \cR_l(\pi_2)} + \quad\,\, \nonumber\\
    \!\!\!\!\!\!\!\!\frac{\alpha (1 \!-\! \alpha)}{2} \!\!\int_\cX \underbrace{\!\!\bigg(\!\int_{\cY\times\cY}\!\!\!\!\!\!\!\! l(y, y') d \pi_1^x(y) d \pi_1^x(y') \!+ \!\!\int_{\cY\times\cY}\!\!\!\!\!\!\!\! l(y, y') d \pi_2^x(y) d \pi_2^x(y') \!-\! 2 \!\int_{\cY\times \cY}\!\!\!\!\!\!\!\! l(y, y') d \pi_1^x(y) d \pi_2^x(y')\!\!\bigg)}_{ = - \cE_l(\pi_1^x, \pi_2^x)} d \bbP(x) = \nonumber\\
    \alpha \cR_l(\pi_1) + (1 - \alpha) \cR_l(\pi_2) - \frac{1}{2} \alpha ( 1 - \alpha) \rho_l^2(\pi_1, \pi_2), \nonumber
\end{eqnarray}
i.e., $\cR_l(\alpha \pi_1 \!+\! (1 \!-\! \alpha) \pi_2) \!=\! \alpha \cR_l(\pi_1) \!+\! (1 \!-\! \alpha) \cR_l(\pi_2) - \frac{\alpha ( 1 - \alpha)}{2} \rho_l^2(\pi_1, \pi_2)$, which finishes the proof.
\end{proof}

\begin{proposition}\label{condIEF-is-lsc} Assume that $l$ is continuous (it is the case for all reasonable semimetrics $l$). Then $\cR_l$ is lower semi-continuous on $\Pi(\bbP)$.
\end{proposition}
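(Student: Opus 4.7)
The plan is to reduce lower semi-continuity of $\mathcal{R}_l$ to upper semi-continuity of
\[
J(\pi) := -2\mathcal{R}_l(\pi) = \int_{\mathcal{X}} V(\pi^x) \, d\mathbb{P}(x), \qquad V(\mu) := \iint_{\mathcal{Y} \times \mathcal{Y}} l(y, y') \, d\mu(y) \, d\mu(y').
\]
I will use two features of $V:\mathcal{P}(\mathcal{Y}) \to \mathbb{R}$. First, since $l$ is continuous and bounded on the compact $\mathcal{Y} \times \mathcal{Y}$, weak-$*$ convergence $\mu_n \to \mu$ gives $\mu_n \otimes \mu_n \to \mu \otimes \mu$ weakly, so $V$ is continuous. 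Second, $V$ is concave: expanding directly yields $V(\alpha \mu_1 + (1-\alpha)\mu_2) = \alpha V(\mu_1) + (1-\alpha) V(\mu_2) + \alpha(1-\alpha)\,\mathcal{E}_l^2(\mu_1, \mu_2)$ with $\mathcal{E}_l^2 \geq 0$ by the negative-type assumption, as in the proof of Proposition \ref{condIEF-is-strong-cvx}.

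The main step is to write $J$ as an infimum of weakly continuous functionals, which automatically gives upper semi-continuity. For any finite Borel partition $\mathcal{A} = \{A_1, \dots, A_N\}$ of $\mathcal{X}$ with $\mathbb{P}(A_i) > 0$, define the averaged conditional $\bar\pi^{A_i} := \pi(A_i \times \cdot)/\mathbb{P}(A_i) \in \mathcal{P}(\mathcal{Y})$. Since $\bar\pi^{A_i}$ is the $\mathbb{P}$-average of $\pi^x$ over $A_i$, Jensen's inequality applied to the concave $V$ gives $\int_{A_i} V(\pi^x)\,d\mathbb{P}(x) \leq \mathbb{P}(A_i) V(\bar\pi^{A_i})$, so summing yields
\[
J(\pi) \;\leq\; \Phi_{\mathcal{A}}(\pi) \;:=\; \sum_{i=1}^{N} \frac{1}{\mathbb{P}(A_i)} \int_{A_i \times \mathcal{Y}} \int_{A_i \times \mathcal{Y}} l(y, y') \, d\pi(x, y) \, d\pi(x', y').
\]
Restricting to partitions with $\mathbb{P}(\partial A_i) = 0$ (always achievable on a compact metric $\mathcal{X}$, since any given probability charges at most countably many hyperplanes), each $A_i \times \mathcal{Y}$ is a $\pi$-continuity set for every $\pi \in \Pi(\mathbb{P})$. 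Weak convergence $\pi_n \to \pi$ then transfers to the restrictions and to their products, and combined with the continuity and boundedness of $l$, this shows $\pi \mapsto \Phi_{\mathcal{A}}(\pi)$ is weakly continuous on $\Pi(\mathbb{P})$.

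To close the argument, I would verify $J(\pi) = \inf_{\mathcal{A}} \Phi_{\mathcal{A}}(\pi)$ along a refining sequence of admissible partitions $\mathcal{A}_n$ with diameters tending to zero. Applying the Lebesgue differentiation theorem on $(\mathcal{X}, \mathbb{P})$ to each function $x \mapsto \int \varphi(y)\,d\pi^x(y)$ for $\varphi$ in a countable dense subset of $\mathcal{C}(\mathcal{Y})$ produces $\bar\pi^{\mathcal{A}_n(x)} \to \pi^x$ weakly for $\mathbb{P}$-almost every $x$; continuity and boundedness of $V$ together with dominated convergence then give $\Phi_{\mathcal{A}_n}(\pi) \to J(\pi)$. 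Hence $J$ is an infimum of continuous functionals, therefore upper semi-continuous, and $\mathcal{R}_l = -J/2$ is lower semi-continuous on $\Pi(\mathbb{P})$. The main technical obstacle I anticipate is precisely this $\mathbb{P}$-a.e.\ weak convergence of conditionals along refining partitions: it requires the differentiation theorem on compact metric spaces and a countable dense test family to dodge null-set issues, but all ingredients are standard.
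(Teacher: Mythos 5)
Your argument is correct, but it is a genuinely different route from the paper's. The paper does not prove the semicontinuity of the integral functional by hand: it writes $\cR_l(\pi)=\tfrac12\int_\cX \cW_l(x,\pi^x)\,d\bbP(x)$ with $\cW_l(x,\mu)=-\int_{\cY\times\cY}l\,d\mu\,d\mu$, verifies that $\cW_l$ is lower semi-continuous (via the interaction-energy result in Santambrogio, Prop.~7.2, plus equivalence of weak and Wasserstein convergence on the compact $\cY$), bounded below, and convex in $\mu$, and then invokes Condition (A+) and Proposition 2.8 of Backhoff et al.\ (2019), which directly yields lower semi-continuity of $\pi\mapsto\int_\cX C(x,\pi^x)\,d\bbP(x)$ for such costs. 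You instead give a self-contained proof of exactly that stability statement in this special case: you exhibit $-2\cR_l$ as an infimum of weakly continuous functionals $\Phi_{\mathcal A}$ indexed by finite partitions of $\cX$ with $\bbP$-null boundaries, using concavity of $V(\mu)=\int\int l\,d\mu\,d\mu$ (Jensen) for the upper bound and a martingale-convergence/Lebesgue-differentiation argument along a fixed nested sequence of boundary-null grids for the approximation $\Phi_{\mathcal A_n}(\pi)\to -2\cR_l(\pi)$; an infimum of continuous functionals is u.s.c., so $\cR_l$ is l.s.c. The two points you leave slightly informal are both fine: Jensen over the continuum of conditionals can be verified directly by Fubini, since $V(\bar\pi^{A_i})-\frac{1}{\bbP(A_i)}\int_{A_i}V(\pi^x)\,d\bbP(x)$ equals the average over $A_i\times A_i$ of $\tfrac12\cE_l^2(\pi^x,\pi^{x'})\ge 0$; and the a.e.\ weak convergence of the partition-averaged conditionals follows from martingale convergence applied to $x\mapsto\int\varphi\,d\pi^x$ for a countable dense family $\varphi\in\cC(\cY)$, with a single offset chosen so that all dyadic grid hyperplanes are $\bbP$-null (note the boundary-null condition depends only on $\bbP$, which is shared by all $\pi\in\Pi(\bbP)$, so one partition sequence serves every $\pi$). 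What each approach buys: the paper's proof is short and inherits the generality and stability machinery of the weak-OT literature; yours removes the external dependence, is elementary, and in fact proves the statement for any continuous concave $V$, making the mechanism behind the citation explicit.
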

\begin{proof}[Proof of Proposition \ref{condIEF-is-lsc}] Consider the functional $\cW_l : \cX \times \cP(\cY) \rightarrow \bbR \cup \{ + \infty \}$:
\begin{gather*}
    \cW_l(x, \mu) = - \int_{\cY \times \cY} l(y, y') d \mu(y) d \mu(y'), 
\end{gather*}
then the conditional interaction energy functional could be expressed as follows: $\cR_l(\pi) = \frac{1}{2} \int_{\cX} \cW_l(x, \pi^x) d \bbP(x)$. We are to check that $\cW_l$ satisfies Condition (A+) in \citep[Definition 2.7]{backhoff2019existence}. Note that $\cW_l$ actually does not depend on $x \in \cX$.
\begin{itemize}[leftmargin=5mm]
    \item The \textit{lower-semicontinuity} of $\cW_l$ follows from \citep[Proposition 7.2]{santambrogio2015optimal} and the equivalence of the weak convergence and the convergence w.r.t. Wasserstein metric on $\cP(\cY)$ where $\cY$ is compact, see \citep[Theorem 6.8]{villani2008optimal}.
    \item Since $ (y, y') \mapsto - l(y, y')$ is a lower-semicontinuous, it achieves its minimum on the compact $\cY \times \cY$ which \textit{lower-bounds} the functional $\cW_l$.
    \item The convexity (even 1-strong convexity w.r.t. metric $\cE_l(\cdot, \cdot)$) of functional $\cW_l$ was de facto established in Proposition \ref{condIEF-is-strong-cvx}. In particular, given $\mu_1, \mu_2 \in \cP(\cY)$, $\alpha \in (0, 1)$, $x \in \cX$ it holds:
    \begin{gather*}
        \cW_l(x, \alpha \mu_1 \!+\! (1 \!-\! \alpha) \mu_2) \!=\! \alpha \cW_l(x, \mu_1) \!+\! (1 \!-\! \alpha) \cW_l(x, \mu_2) - \frac{\alpha ( 1 - \alpha)}{2} \cE_l^2(\mu_1, \mu_2).
    \end{gather*}
\end{itemize}
The application of \citep[Proposition 2.8, Eq. (2.16)]{backhoff2019existence} finishes the proof. 
\end{proof}

\subsection{Experiments with conditional interaction energy regularizer}
\label{sec-kernel}
In the previous Section~\ref{sec-strongly-convex}, we introduce an example of the strongly convex regularizer. In this section, we present experiments to investigate the impact of strongly convex regularization on our general cost functional $\cF_{\text{G}}$. In particular, we conduct experiments on the FMNIST-MNIST dataset transfer problem using the proposed conditional interaction energy regularizer with $l(y, y') = \Vert y - y' \Vert_2$. To empirically estimate the impact of the regularization, we test different coefficients $\gamma\in [0.001, 0.01, 0.1]$. The results are shown in the following Figure~\ref{fig:kernel} and Table~\ref{table:kernels}.
\begin{figure}[h!]
\centering
    \includegraphics[width=0.5\textwidth]{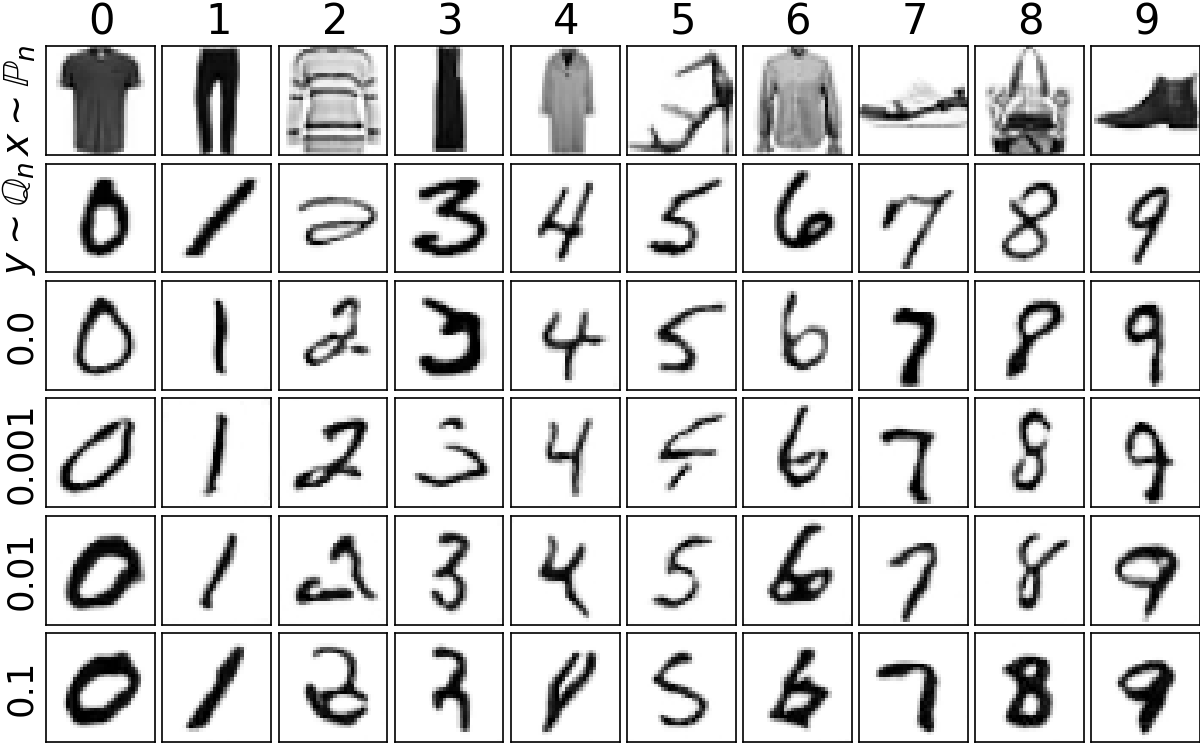}
    \caption{\centering  Qualitative results of the FMNIST$\rightarrow$MNIST mapping with $\mathcal{F}_{G}$ cost and using different values $\gamma$ of the conditional interaction energy regularization.}
    \label{fig:kernel}
\end{figure}
\begin{table}[h!]\centering
\begin{tabular}{ c c c c c }
\hline
$\gamma$ & 0 &0.001	& 0.01 &0.1 \\ \hline
Accuracy$\uparrow$	  &83.33 & 81.87 &79.47 &65.11  \\ 
FID	$\downarrow$    &5.27 &7.67 &3.95  &7.33 \\ \hline
\end{tabular}
\caption{\centering Accuracy$\uparrow$ and FID $\downarrow$ of the map learned on FMNIST$\rightarrow$MNIST with $\mathcal{F}_{G}$ cost and different values $\gamma$ of the conditional interaction energy regularization.}\label{table:kernels}
\end{table}

It can be seen that the \textit{small} amount of regularization ($\gamma=0.001$) does not affect the results. But \textit{high} values decrease the accuracy, which is expected because the regularization contradicts the dataset transfer problem. Increasing the value of $\gamma$ shifts the solution to be more diverse instead of matching the classes.

\section{Pair-guided cost functional}\label{pair-guided-cost}

\subsection{Algorithm}\label{pair-guided-algorithm}

Recall that in our main manuscript we parameterize the learned plan $\pi$ via stochastic map $[x, T(x, z)]$, $x\sim \mathbb{P}, z\sim \mathbb{S}$. In practice, we found that substitution $T(x, z)$ with a \textit{deterministic} map $T(x)$ generally improves the results in the case of pair-guided cost functional. This is possibly due to the paired nature of the considered problem. Therefore, we adapt our proposed Algorithm \ref{algorithm-pair-guided} for deterministic map $T(x)$ and report all metrics and demonstrations exactly for this setup. 

\begin{algorithm}[h!]
\SetInd{0.5em}{0.3em}
    {
        \SetAlgorithmName{Algorithm}{empty}{Empty}
        \SetKwInOut{Input}{Input}
        \SetKwInOut{Output}{Output}
        \Input{Distributions $\mathbb{P},\mathbb{Q}$ accessible by samples; paired data set $(x_1, y^*(x_1)), \dots$ $ \dots , (x_N, y^*(x_N))$, where $x_{1:N} \sim \mathbb{P}$ and $y^*(x_{1:N})\sim \mathbb{Q}$; mapping network $T_{\theta}:\mathbb{R}^{P}\rightarrow\mathbb{R}^{Q}$; potential network $v_{\omega}:\mathbb{R}^{Q}\rightarrow\mathbb{R}$; number of inner iterations $K_{T}$;
        }
        \Output{Learned OT map $T_{\theta}$ representing (pair-guided) OT plan between distributions $\mathbb{P},\mathbb{Q}$\;
        }
        \Repeat{not converged}{
            Sample batches $Y\sim \mathbb{Q}$, $X\sim \mathbb{P}$\;
            ${\mathcal{L}_{v}\leftarrow \sum\limits_{x\in X}\frac{v_{\omega}(T_{\theta}(x))}{|X|}-\sum\limits_{y\in Y}\frac{v_{\omega}(y)}{|Y|}}$\;
            
            Update $\omega$ by using $\frac{\partial \mathcal{L}_{v}}{\partial \omega}$\;
            
            \For{$k_{T} = 1,2, \dots, K_{T}$}{
                Sample batch $X_d$ from the paired data set\; 
                ${\mathcal{L}_{T}\leftarrow \sum\limits_{x_d \in X_d} \frac{\ell\left(T_{\theta}(x_d), y^*(x_d)\right)}{|X_d|} -\sum\limits_{x_d\in X_P}\frac{v_{\omega}(T_{\theta}(x_d))}{|X_d|}}$\;
            \vspace{-0mm}Update $\theta$ by using $\frac{\partial \mathcal{L}_{T}}{\partial \theta}$\;
            }
        }
        \caption{Neural optimal transport with pair-guided cost functional and deterministic map}
        \label{algorithm-pair-guided}
    }
\end{algorithm}
\vspace{-4mm}
\subsection{Datasets}\label{pair-guided-datasets-app}
\vspace{-2mm}
\textbf{Comic-Faces-V1\footnote{\url{https://www.kaggle.com/datasets/defileroff/comic-faces-paired-synthetic}}:} This dataset contains paired samples which are useful for real-to-comic convertion. The original resolution is 512x512, 10000 pairs (total 20k images)

\textbf{Edges-to-Shoes:} This dataset consists of 50,025 shoe images and their corresponding edges split into train and test subsets.

\textbf{CelebAMask-HQ\footnote{\url{https://github.com/switchablenorms/CelebAMask-HQ}}:} This is a large-scale face image dataset that has 30,000 high-resolution face images selected from CelebA-HQ dataset. Each image has segmentation mask of facial attributes corresponding to CelebA. The masks of CelebAMask-HQ were manually-annotated with the size of 512 x 512 images. 

\begin{table}[t!]\centering
\begin{center}
\small
\begin{tabular}{c|c | c c | c  }
\hline
\textbf{Datasets} ($256\times 256$) & $\mathbb{W}_1$  &   \textbf{RMSE} & \textbf{Pix2Pix}   & \makecell{$\mathcal{F}_{\text{s}}$\\\textbf{[Ours]}} \\

\hline
Comic-Faces-V1 	   &$>$ 100	   &79.16   &37.02  &\textbf{35.42}   \\ \hline
Edges-to-Shoes	   &$>$ 100    &61.55  &-  &\textbf{49.53}  \\
\hline 
\end{tabular}
\end{center}
\vspace{-3mm}
\caption{FID $\downarrow$ of the maps learned by the translation methods in view.}
\label{tab:FID_supervised}
\vspace{-2mm}
\end{table} 
\vspace{-2mm}
\begin{figure}[t!]
\centering
    \includegraphics[width=0.99\textwidth]{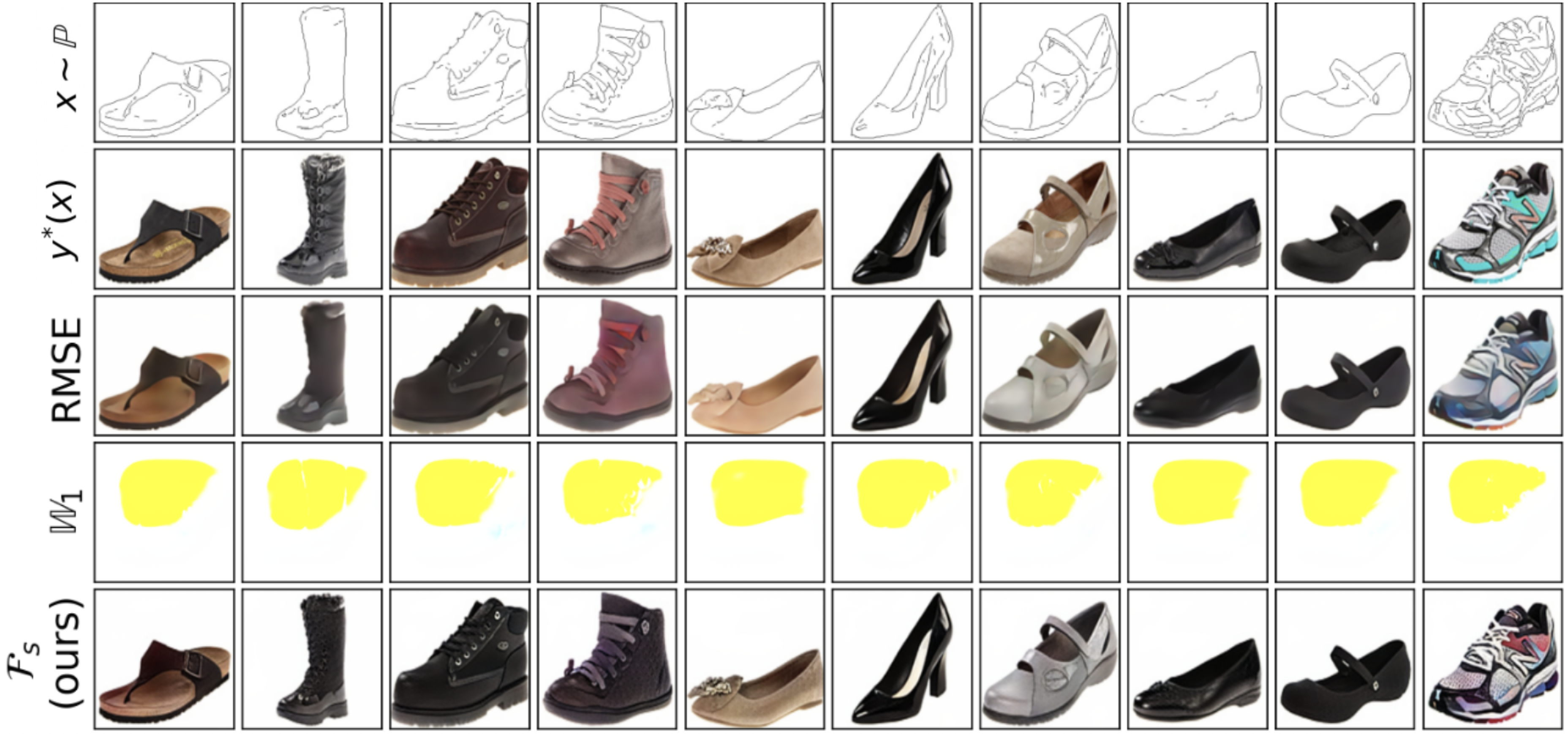}
    \caption{\centering  Results for Edges-to-Shoes with the Pair-guided cost, images resolution is $256 \times 256$.}
    \label{fig:shoes_supervised}
\end{figure}
\begin{figure}[t!]
\centering
    \includegraphics[width=0.99\textwidth]{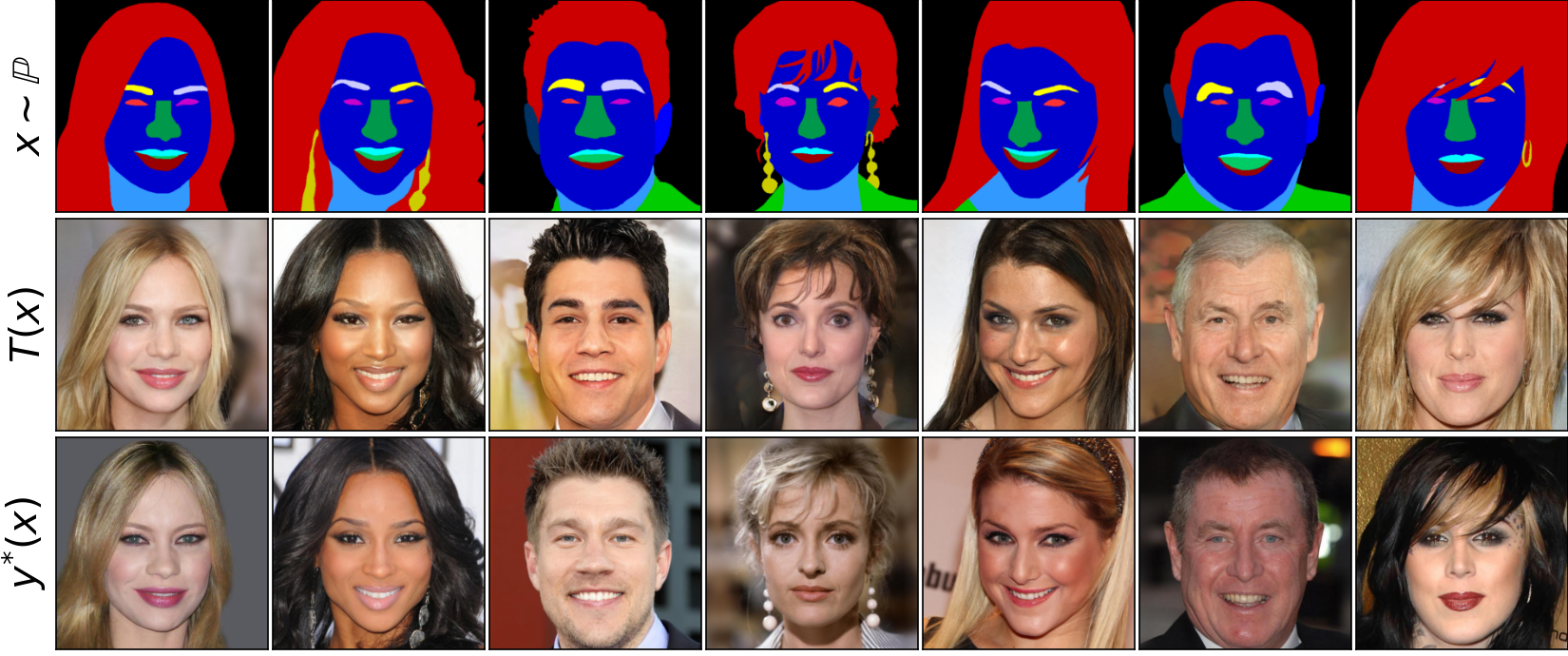}
    \caption{\centering  Results for CelebAMask with the VGG-based perceptual Pair-guided cost, images resolution is $256 \times 256$.}
    \label{fig:celeba_supervised_appendix}
\end{figure}
\begin{figure}[t!]
\centering
    \includegraphics[width=0.99\textwidth]{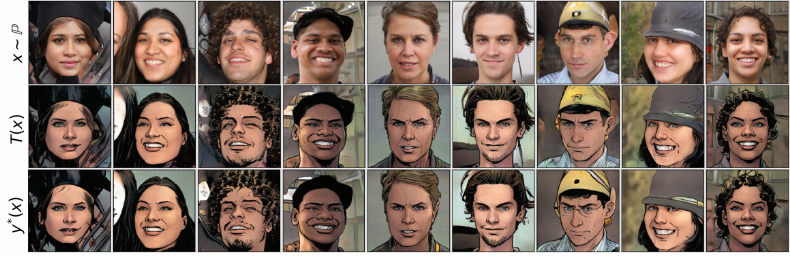}
    \caption{\centering Results for Comic-Faces-V1 with the Pair-guided cost, images resolution is $512 \times 512$.}
    \label{fig:comic_supervised_512}
\end{figure}
\vspace{2mm}
\subsection{Training details}
\label{pair-guided-settings-appendix}
\vspace{-2mm}
In our experiments, we compare several methods. As the baselines we consider (unsupervised) NOT, and RMSE regression. Here use U2Net~\footnote{\url{https://github.com/xuebinqin/U-2-Net}} as the transport map $T_{\theta}(x)$ (generator). For comparison with NOT, we use their publicly available code~\footnote{\url{https://github.com/iamalexkorotin/NeuralOptimalTransport}}. There we employ the (unsupervised) RMSE as the cost function (the method is denoted by $\mathbb{W}_{1}$ in the table and figures). Note that unsupervised NOT (Figures \ref{fig:comic_supervised} and \ref{fig:shoes_supervised}) fails to perform the translation in both the cases. For the comparison with Pix2Pix, we use the official implementations with the default hyperparameters~\footnote{\url{https://github.com/junyanz/pytorch-CycleGAN-and-pix2pix}}. 

To train our model, we use Adam~\citep{kingma2014adam} optimizer with $lr=10^{-4}$ for both $T_{\theta}$ and $v_{\omega}$. The number of inner iterations for $T_{\theta}$ is $K_{T}=10$. The batch size of $K_B=8$ was set for the comic-faces and shoes experiments. The batch size of $K_B=32$ was used for CelebAMask-HQ. Our method converges in $\approx 60$k iterations of $v_{\omega}$.

\end{document}